\documentclass{article}

\usepackage{iclr2027_conference,times}

\usepackage[utf8]{inputenc}
\usepackage[T1]{fontenc}
\usepackage{amsmath,amsthm,amssymb,mathtools}
\usepackage{amsfonts}
\usepackage{algorithm}
\usepackage{algpseudocode}
\usepackage{booktabs}
\usepackage{enumitem}
\usepackage{graphicx}
\usepackage{nicefrac}
\usepackage{microtype}
\usepackage{xcolor}
\usepackage{hyperref}
\usepackage{url}

\newcommand{\Irr}{\mathrm{Irr}}
\DeclareMathOperator{\range}{range}

\newtheorem{theorem}{Theorem}
\newtheorem{lemma}{Lemma}
\newtheorem*{remark}{Remark}
\newtheorem{example}[theorem]{Example}
\newtheorem{assumption}{Assumption}

\theoremstyle{plain}
\newtheorem{proposition}[theorem]{Proposition}
\newtheorem{corollary}[theorem]{Corollary}
\theoremstyle{definition}
\newtheorem{definition}[theorem]{Definition}
\theoremstyle{remark}

\DeclareMathOperator{\Top}{Top}
\newcommand{\Vol}{\mathrm{Vol}}

\newcommand{\Supp}{\mathrm{Supp}}
\DeclareMathOperator{\Null}{null}

\makeatletter
\def\@oddhead{%
  \vbox{%
    \hbox to \textwidth{ \small \hfill  \hfill}%
    \vskip 4pt % Adds space between the text and the line
    \hrule height 0.5pt width \textwidth % Draws the horizontal line
  }%
}
\def\@evenhead{\@oddhead}
\makeatother

\title{DePICT: Decision-Preserving Interface for Constrained Downstream Tasks}

\author{
\textbf{Utkarsh Grover}\textsuperscript{1}
\quad
\textbf{Ravi Ranjan}\textsuperscript{2}
\quad
\textbf{Agoritsa Polyzou}\textsuperscript{2}
\quad
\textbf{Wyatt Mackey}\textsuperscript{3}
\\
\textbf{J. Morris Chang}\textsuperscript{1}
\quad
\textbf{Leonardo Bobadilla}\textsuperscript{2}
\quad
\textbf{Xiaomin Lin}\textsuperscript{1}
\\[0.6em]
\small
\textsuperscript{1}University of South Florida
\qquad
\textsuperscript{2}Florida International University
\\
\textsuperscript{3}DEVCOM Army Research Lab
\\[0.4em]
\footnotesize
\texttt{\{utkarshgrover, chang5, xlin2\}@usf.edu}
\\
\texttt{\{rkuma031, apolyzou, jabobadi\}@fiu.edu}
\\
\texttt{wyatt.t.mackey.civ@army.mil}
}

\iclrfinalcopy

\begin{document}

\maketitle

\begin{abstract}

%We study the fundamental question of which primitive context variables are necessary for solving a constrained downstream task.  In our setting, a high-dimensional primitive context parameterizes a constrained optimization problem, while a downstream agent observes only a selected interface. For locally regular constrained programs, we give a sharp geometric characterization of the interface directions that preserve the local optimum: a context direction is relevant exactly when perturbing 
A constrained optimization problem may involve a parameter in its objective and in its active constraint, and yet the final decision may be insensitive to small changes in that parameter. This raises a fundamental question: \textit{which inputs does a decision-making system truly depend on?} Building on this question, we introduce \textsc{DePICT}, a procedure that constructs decision-preserving interfaces by ranking context directions with the optimizer's solution-sensitivity and aggregating them across an operating regime. We study this problem in a high-dimensional setting where primitive context parameterizes a constrained task and the downstream agent observes only a selected subset of context directions. For locally regular constrained programs, we derive a Karush–Kuhn–Tucker (KKT) based characterization of when a context direction/ parameter is optimizer-relevant. Our analysis shows that appearing in the active optimization problem does not necessarily mean that a variable affects the final decision. Some context directions can change the KKT conditions while leaving the optimal solution unchanged because their effect is absorbed by the dual variables. DePICT is designed to remove exactly these directions. In a controlled diagnosis, it recovers the decision-relevant interface exactly and reduces linear-predictor regret to $0.009$, compared with $0.475$ for the strongest competing baseline. 
%It also performs well on controlled KKT-regime and real solver-mediated tasks.
\end{abstract}
\vspace{-0.2cm}
\section{Introduction}
\label{sec:introduction}
\vspace{-0.2cm}
%Constrained decision systems routinely receive far more context than their downstream optimizer needs. A routing \cite{kwon2020pomo,wu2021learning}, planning \cite{schrittwieser2020mastering,hafner2023mastering}, or allocation \cite{zhang2020learning} agent observes a high-dimensional state, yet only a few context directions can change the optimal feasible decision. Discarding the rest

Adding sensing, communication, and processing capacity increases the cost of a decision-making system, yet not every observed input can change its downstream decision. Routing systems may consume traffic, weather, terrain, and demand signals \citep{kwon2020pomo,wu2021learning}; planning agents may
combine many state variables and sensing channels \citep{schrittwieser2020mastering,hafner2023mastering}; and allocation systems may process hundreds of contextual features \citep{zhang2020learning}. These inputs are not free: they may need to be sensed, transmitted, stored, maintained, licensed, and repeatedly processed throughout deployment
\citep{verma2024machine,lan2022progressive,staab2024principle,longpre2024large}. Yet a constrained downstream optimizer may be insensitive to many of them: changing those inputs does not change the optimal feasible decision. Carrying such context through the entire pipeline adds cost without changing what the system ultimately does.

The challenge is deciding which inputs can be removed without changing the decision. This is not the same as asking which variables help prediction  \citep{tibshirani1996lasso,peng2005mrmr,sharma2023pfmm}. 
A feature may strongly predict an intermediate quantity, correlate with useful signals  \citep{peng2005mrmr,ren2020prediction,sharma2023pfmm}, 
or even appear explicitly in the optimization problem while leaving the optimum unchanged \citep{fiacco1976sensitivity,bonnans2000perturbation}. 
Conversely, removing a variable that does move the optimum can destroy information that no downstream learner can reconstruct. The question is therefore more basic than feature importance:  \emph{which primitive context directions can actually change the constrained optimum?}

This motivates \textbf{DePICT}, which constructs a decision-preserving interface directly from the downstream objective and constraints. DePICT evaluates local solution sensitivities at representative contexts, ranks primitive directions by how strongly they move the optimum, and aggregates these sensitivities across an operating regime. It requires neither labeled context decision pairs nor a pretrained predictor; instead, it assumes access to a differentiable parameterization of the downstream objective and constraints, as formalized in Section \ref{sec:method}.  The resulting interface is fixed before the downstream predictor is trained. Consequently, DePICT identifies context required by the optimizer rather than
context preferred by a particular predictor architecture.

This perspective separates DePICT from predictive feature selection and
decision-focused learning. Feature-selection methods ask which variables are useful for prediction
\citep{tibshirani1996lasso,peng2005mrmr,sharma2023pfmm}, while Decision Focused Learning asks how predictive models should be trained to produce high-quality downstream decisions
\citep{elmachtoub2022spo,mandi2022rank,wang2021mdpfeatures}. DePICT instead asks a prior structural question: which primitive context directions can change the constrained optimum at all? Extended related work is provided in Appendix~\ref{app:related_work}. 
%It therefore determines what information should be retained before predictor training, complementing methods that subsequently learn how to use that information.

Together, these results turn interface selection from an empirical
preprocessing choice into a property of the downstream optimization problem. Our main contribution is an end-to-end treatment of decision-preserving interface design for constrained downstream tasks.
\textbf{(i) Definition.} A primitive context direction is \emph{irreducible} if perturbing it moves the constrained optimum to first order. \textbf{(ii) Characterization.} We give an exact test for irreducibility via KKT sensitivity. It shows that a symbolically present direction can still be irrelevant, either because it never reaches the active KKT system (Layer~1) or because the multipliers absorb its effect (Layer~2). \textbf{(iii) Consequence.} Removing an irreducible direction induces a first-order bias that no differentiable predictor on the reduced interface can remove. \textbf{(iv) Stability.} Interface rankings are stable across the operating regime, with a certified radius within which the top-$m$ interface is unchanged. \textbf{(v) Validation.} We test each result on matched benchmarks that satisfy our assumptions (controlled benchmarks and the Layer-2 diagnostic of Section~\ref{sec:separation}) and on benchmarks that violate them.

%\begin{figure*}[t]
%    \centering
%    \includegraphics[width=0.95\linewidth]{Images/DEPICT.pdf}
%    %\vspace{-0.3cm}
%    \caption{DePICT constructs the information interface from the downstream decision.}
%    \label{fig:Main}
    %\vspace{-0.4cm}
%\end{figure*}

%\input{Sections/Related_Work}
\vspace{-0.2cm}
\section{Decision-Preserving Interfaces}
\label{sec:method}
\vspace{-0.1cm}
\subsection{Problem Formulation and Decision Relevance}
\label{sec:problem}
\vspace{-0.1cm}
 We now formalize the question: \textit{which primitive context directions can actually change a constrained decision?}
Let $c\in\mathbb{R}^{n_1}$ denote the primitive context and
$y(c)\in\mathbb{R}^{n_y}$ the task parameters induced by that context.
We write $ \theta^\star(c):=S(y(c)),$ where the downstream solver $S$ returns a local minimizer of
\begin{equation}
    \min_{\theta\in\mathbb{R}^{n_\theta}}
    f(\theta;y(c))
    \quad
    \mathrm{s.t.}
    \quad
    m_i(\theta;y(c))\geq0,
    \qquad i=1,\ldots,K .
    \label{eq:main_problem}
\end{equation}
%Thus, $c\mapsto y(c)\mapsto\theta^\star(c)$ is the oracle decision map introduced above. 
For compactness, we write
$f(\theta;c)$ and $m_i(\theta;c)$ below, absorbing the dependence
$c\mapsto y(c)$ into the notation.
Our theoretical analysis is local. We consider an operating regime
$\mathcal M\subseteq\mathbb{R}^{n_1}$ on which the objective and constraint structure are fixed and the same active set $\mathcal A$ remains locally binding throughout the regime. We make the following standard regularity assumption.

\begin{assumption}[Regular operating regime]
\label{assump:regularity}
For every $c\in\mathcal M$, the selected primal-dual solution
$(\theta^\star(c),\lambda^\star(c))$ satisfies LICQ, second-order sufficient
conditions (SOSC), and strict complementarity (SC), and the relevant objective and
constraint functions are twice continuously differentiable.
\end{assumption}

These conditions ensure that the local primal--dual solution branch is
well defined and differentiable with respect to $c$
\citep{fiacco1976sensitivity,robinson1980strongly,bonnans2000perturbation}.
They also make the active-set KKT system below nonsingular. Appendix~\ref{app:jacobian-derivation} states the corresponding KKT residual formally and gives the implicit-function derivation used throughout the paper. This differentiability gives a direct notion of decision relevance. For a
small perturbation $\delta$,
\begin{equation}
    \theta^\star(c+\delta)
    =
    \theta^\star(c)
    +
    J(c)\delta
    +
    o(\|\delta\|),
    \qquad
    J(c):=
    \frac{\partial\theta^\star(c)}{\partial c}.
    \label{eq:solution_jacobian}
\end{equation}
The $j$th column $
    v_j(c):=\operatorname{col}_j(J(c))
$ is the first-order motion of the optimal decision caused by perturbing
primitive coordinate $c_j$. We call $j$ \emph{irreducible at $c$} when
$v_j(c)\neq0$. Thus, a context direction is decision-relevant precisely when
changing it moves the local optimum to first order.

Because relevance can vary with $c$, the corresponding regime-level interface
is
\begin{equation}
    \Irr(\mathcal M)
    :=
    \bigcup_{c\in\mathcal M}
    \left\{
        j\in\{1,\ldots,n_1\}:v_j(c)\neq0
    \right\}.
    \label{eq:irr_set}
\end{equation}
This is the first-order decision-preserving interface: every coordinate in
$\Irr(\mathcal M)$ changes the optimum somewhere in the operating regime. DePICT approximates this regime-level object from representative contexts in $\mathcal M$; those contexts sample the regime but do not define it.

%DePICT approximates this regime-level object from representative contexts in $\mathcal M$; those contexts sample the regime but do not define it.

\vspace{-0.1cm}
\subsection{Deciding Relevance}
\vspace{-0.1cm}
Equations~\eqref{eq:solution_jacobian}-\eqref{eq:irr_set} define relevance
through $J(c)$ but do not say how to obtain it from
\eqref{eq:main_problem} or when a column $v_j(c)$ vanishes. Both are decided
by the active KKT system. We therefore define the active-set Lagrangian
$L(\theta,\lambda;c):=f(\theta;c)-\sum_{i\in\mathcal A}\lambda_i m_i(\theta;c)$,
where the sign follows from our convention $m_i(\theta;c)\geq0$, and at
$(\theta^\star(c),\lambda^\star(c),c)$ define
\begin{align}
A_{\mathcal A}
&:=
\bigl[\nabla_\theta m_i(\theta^\star;c)^\top\bigr]_{i\in\mathcal A},
&
H_L
&:=
\nabla_{\theta\theta}^2
L(\theta^\star,\lambda^\star;c),
\label{eq:kkt_blocks_main_1}\\
C_{\mathcal A}
&:=
\bigl[\nabla_c m_i(\theta^\star;c)^\top\bigr]_{i\in\mathcal A},
&
G_{\mathcal A}
&:=
\sum_{i\in\mathcal A}
\lambda_i^\star
\nabla_{\theta c}^2m_i(\theta^\star;c)
-
\nabla_{\theta c}^2f(\theta^\star;c).
\label{eq:kkt_blocks_main_2}
\end{align}
The brackets stack rows indexed by $i\in\mathcal A$, so that
$A_{\mathcal A}\in\mathbb{R}^{|\mathcal A|\times n_\theta}$ and
$C_{\mathcal A}\in\mathbb{R}^{|\mathcal A|\times n_1}$, while
$G_{\mathcal A}\in\mathbb{R}^{n_\theta\times n_1}$.
Context enters through exactly two channels: $C_{\mathcal A}$ records how it
shifts the active margins, and $G_{\mathcal A}$ records how it perturbs
stationarity. Together with the active constraint-normal space
$\operatorname{range}(A_{\mathcal A}(c)^\top)$, these decide relevance
completely.

\begin{theorem}[KKT characterization of optimizer relevance]
\label{thm:kkt-characterization}
Under Assumption~\ref{assump:regularity}, for every
$c\in\mathcal M$ and primitive coordinate $j$,
\begin{equation}
v_j(c)=0
\quad\Longleftrightarrow\quad
(C_{\mathcal A}(c))_{\cdot j}=0
\;\;\text{and}\;\;
(G_{\mathcal A}(c))_{\cdot j}
\in
\operatorname{range}\!\left(A_{\mathcal A}(c)^\top\right).
\label{eq:annihilation_main}
\end{equation}
\end{theorem}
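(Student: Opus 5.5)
Differentiate the active-set KKT system along the regular solution branch guaranteed by Assumption~\ref{assump:regularity}, and read off $v_j(c)$ from the resulting linear system. On $\mathcal M$ the active set $\mathcal A$ is fixed and strict complementarity holds, so inactive multipliers vanish identically nearby. The branch $(\theta^\star(c),\lambda^\star(c))$ is then characterized by the square system
\[
\nabla_\theta f(\theta;c)-\sum_{i\in\mathcal A}\lambda_i\nabla_\theta m_i(\theta;c)=0,\qquad m_i(\theta;c)=0\ (i\in\mathcal A).
\]
By LICQ, SOSC and SC this system has a nonsingular Jacobian (the standard result of \citet{fiacco1976sensitivity} recalled in Appendix~\ref{app:jacobian-derivation}). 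The implicit function theorem then gives $C^1$ derivatives. Differentiating with respect to $c_j$ and writing $u=\partial_{c_j}\lambda^\star_{\mathcal A}$ gives
\[
H_L\,v_j - A_{\mathcal A}^\top u = (G_{\mathcal A})_{\cdot j},\qquad A_{\mathcal A}\,v_j = -(C_{\mathcal A})_{\cdot j}.
\]
The signs follow from the definitions in \eqref{eq:kkt_blocks_main_1}--\eqref{eq:kkt_blocks_main_2}, since $\nabla_{\theta c}^2L=\nabla^2_{\theta c}f-\sum\lambda_i^\star\nabla^2_{\theta c}m_i=-G_{\mathcal A}$. I will recheck the sign bookkeeping, but it is irrelevant to the conclusion because range membership is sign-invariant.

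\emph{($\Rightarrow$)} Suppose $v_j=0$. The second equation immediately gives $(C_{\mathcal A})_{\cdot j}=0$. The first gives $(G_{\mathcal A})_{\cdot j}=-A_{\mathcal A}^\top u\in\range(A_{\mathcal A}^\top)$.

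\emph{($\Leftarrow$)} Suppose $(C_{\mathcal A})_{\cdot j}=0$ and $(G_{\mathcal A})_{\cdot j}=A_{\mathcal A}^\top w$ for some $w$. Then the pair $(v,u)=(0,-w)$ solves the differentiated system. The KKT matrix
\[
\begin{pmatrix} H_L & -A_{\mathcal A}^\top\\ A_{\mathcal A} & 0\end{pmatrix}
\]
is nonsingular, so this solution is the unique one, and hence $v_j=0$.

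The main obstacle is justifying nonsingularity of the KKT matrix, which is what makes the converse direction work. The argument: suppose $(v,u)$ lies in its kernel. Then $A_{\mathcal A}v=0$, so $v$ lies in the critical cone, which under SC equals $\ker A_{\mathcal A}$. Also $H_Lv=A_{\mathcal A}^\top u$, so
\[
v^\top H_L v=v^\top A_{\mathcal A}^\top u=(A_{\mathcal A}v)^\top u=0 .
\]
SOSC then forces $v=0$. It follows that $A_{\mathcal A}^\top u=0$, and LICQ gives $u=0$.

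One further point needs care: that differentiating the reduced active-set system is legitimate, i.e.\ that the active set does not change near $c$. This follows from SC and continuity of the branch on the regime where $\mathcal A$ is assumed binding. I would cite Appendix~\ref{app:jacobian-derivation} for it rather than reprove it.
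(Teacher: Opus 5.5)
Your proposal is correct and follows essentially the same route as the paper. Like the paper, you differentiate the active-set KKT residual in $c_j$ to get $H_L v_j - A_{\mathcal A}^\top u = (G_{\mathcal A})_{\cdot j}$ and $A_{\mathcal A} v_j = -(C_{\mathcal A})_{\cdot j}$, read the forward direction off the two blocks, obtain the converse by exhibiting $(0,-w)$ and invoking uniqueness, and prove nonsingularity by the same kernel argument as Lemma~\ref{lem:kkt-nonsingular}; your sign bookkeeping also agrees with the paper's definition of $G_{\mathcal A}$.
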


A coordinate is thus irrelevant exactly when it moves no active margin and
its stationarity effect lies along the constraint normals, where multiplier
adjustment absorbs it without primal motion; appearing in the active KKT
system is necessary for relevance but not sufficient.

The theorem follows from the system that computes $J(c)$. Differentiating
the stationarity and active-constraint equations with respect to $c$ gives
\begin{equation}
\underbrace{
\begin{bmatrix}
H_L(c) & -A_{\mathcal A}(c)^\top\\
A_{\mathcal A}(c) & 0
\end{bmatrix}}_{K_{\mathcal A}(c)}
\begin{bmatrix}
J(c)\\
\Lambda_c(c)
\end{bmatrix}
=
\begin{bmatrix}
G_{\mathcal A}(c)\\
-C_{\mathcal A}(c)
\end{bmatrix},
\label{eq:policy_jacobian_saddle}
\end{equation}
where $\Lambda_c(c):=\partial\lambda^\star(c)/\partial c$.
Under Assumption~\ref{assump:regularity}, $K_{\mathcal A}(c)$ is nonsingular,
so the upper block of the unique solution is exactly the solution-sensitivity
Jacobian $J(c)$; reading off column $j$ of
\eqref{eq:policy_jacobian_saddle} gives \eqref{eq:annihilation_main}
directly (Appendix~\ref{app:separation}). This saddle system is the general
computational form. When $H_L(c)$ and
$S_{\mathcal A}(c):=A_{\mathcal A}(c)H_L(c)^{-1}A_{\mathcal A}(c)^\top$ are
additionally nonsingular, the same solution admits the closed form
\begin{equation}
    \boxed{
    J(c)
    =
    P^\perp(c)G_{\mathcal A}(c)
    -
    \Phi(c)C_{\mathcal A}(c)
    }
    \label{eq:policy_jacobian_formula}
\end{equation}
with
\begin{equation}
    \Phi
    :=
    H_L^{-1}A_{\mathcal A}^\top S_{\mathcal A}^{-1},
    \qquad
    P^\perp
    :=
    H_L^{-1}
    -
    \Phi A_{\mathcal A}H_L^{-1},
    \label{eq:projection_blocks_main}
\end{equation}
which exposes the geometry (Appendix~\ref{app:jacobian-derivation}). The two
terms are the two channels of Theorem~\ref{thm:kkt-characterization}:
$-\Phi C_{\mathcal A}$ is the primal motion that tracks a shifted active
margin, and $P^\perp G_{\mathcal A}$ is the stationarity perturbation the
multipliers cannot absorb. Since $P^\perp(c)u=0$ exactly when
$u\in\operatorname{range}(A_{\mathcal A}(c)^\top)$, the range condition in
\eqref{eq:annihilation_main} reads $P^\perp(c)(G_{\mathcal A}(c))_{\cdot j}=0$
under these conditions. A coordinate may therefore enter the active problem
through $C_{\mathcal A}$ or $G_{\mathcal A}$ and still satisfy $v_j(c)=0$.
\vspace{-0.1cm}
\subsection{Symbolic Support vs.\ Optimizer Relevance}
\label{sec:separation}
\vspace{-0.1cm}
%Equation~\eqref{eq:policy_jacobian_formula} shows that context can enter the local KKT system without necessarily moving the primal solution. A coordinate may perturb an active constraint or the stationarity equation, yet its effect can still be absorbed by the active-set geometry. 

Theorem~\ref{thm:kkt-characterization} decides relevance at the active KKT
level. We now show that symbolic participation, whether in the formulation or
in the active KKT system itself, can overstate it.

\begin{definition}[Symbolic supports]
\label{def:supp}
The \emph{full symbolic support} is
\[
\Supp_{\mathrm{full}}
:=
\{j : c_j \text{ appears symbolically in } f \text{ or some } m_i\}.
\]
The \emph{symbolic active-set support} over an operating regime
$\mathcal M$ is
\begin{equation}
\Supp(\mathcal M)
:=
\bigl\{
j\in\{1,\ldots,n_1\}:
\exists\,c\in\mathcal M
\text{ such that }
(C_{\mathcal A}(c))_{\cdot j}\neq0
\ \text{or}\
(G_{\mathcal A}(c))_{\cdot j}\neq0
\bigr\}.
\label{eq:supp_main}
\end{equation}
\end{definition}

The first set records whether a coordinate appears anywhere in the formulation;
the second retains only coordinates that reach the active KKT sensitivity
system through an active-margin shift $C_{\mathcal A}$ or a stationarity shift
$G_{\mathcal A}$. By Theorem~\ref{thm:kkt-characterization}, reaching that
system is necessary for relevance but not sufficient, so neither set is
guaranteed to coincide with the decision-preserving interface $\Irr(\mathcal M)$.

\begin{corollary}[Symbolic support upper-bounds optimizer relevance]
\label{cor:symbolic-upper-bound}
Under Assumption~\ref{assump:regularity},
\[
\Irr(\mathcal M)
\subseteq
\Supp(\mathcal M)
\subseteq
\Supp_{\mathrm{full}},
\]
and both inclusions may be strict.
\end{corollary}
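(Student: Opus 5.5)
The plan has three parts: prove the two inclusions separately, then exhibit one small example for each strictness claim.

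\emph{First inclusion.} Fix $j\in\Irr(\mathcal M)$, so there is some $c\in\mathcal M$ with $v_j(c)\neq0$. We argue by contrapositive using Theorem~\ref{thm:kkt-characterization}. Suppose instead that $(C_{\mathcal A}(c))_{\cdot j}=0$ and $(G_{\mathcal A}(c))_{\cdot j}=0$. The zero vector lies in $\operatorname{range}(A_{\mathcal A}(c)^\top)$, so both conditions on the right of \eqref{eq:annihilation_main} hold, and the theorem gives $v_j(c)=0$, a contradiction. Equivalently, column $j$ of the right-hand side of the nonsingular system \eqref{eq:policy_jacobian_saddle} would vanish, forcing the $j$th columns of $J(c)$ and $\Lambda_c(c)$ to vanish. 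Hence $(C_{\mathcal A}(c))_{\cdot j}\neq0$ or $(G_{\mathcal A}(c))_{\cdot j}\neq0$ at this $c$, so $j\in\Supp(\mathcal M)$.

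\emph{Second inclusion.} Suppose $c_j$ appears symbolically in neither $f$ nor any $m_i$. Then, after absorbing $c\mapsto y(c)$ as the paper does, $\partial_{c_j}m_i\equiv0$ and $\partial_{c_j}\nabla_\theta f\equiv0$, and likewise $\partial_{c_j}\nabla_\theta m_i\equiv0$ identically in $(\theta,c)$. Evaluating at $(\theta^\star(c),\lambda^\star(c),c)$ gives column $j$ of $C_{\mathcal A}$ equal to zero, and column $j$ of $G_{\mathcal A}$ equal to zero, being a $\lambda^\star$-weighted combination of zero vectors. So $j\notin\Supp(\mathcal M)$.

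The only delicate point here is the meaning of ``appears symbolically'' once $y(c)$ is composed in. I will read $\Supp_{\mathrm{full}}$ as symbolic appearance in the composed functions $f(\theta;c)$ and $m_i(\theta;c)$. With that reading, non-appearance means the partial derivatives vanish identically, and the inclusion follows.

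\emph{Strictness.} This needs two explicit examples, and I expect choosing them so that Assumption~\ref{assump:regularity} is verifiably satisfied to be the main work.

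\emph{Strictness of $\Supp(\mathcal M)\subseteq\Supp_{\mathrm{full}}$} (a Layer-1 example). Take $\theta\in\mathbb R$ and objective $f=\tfrac12\theta^2-\theta$. Use one active constraint $m_1=1-\theta\ge0$ and one inactive constraint $m_2=\theta+5+c_2\ge0$, with $\mathcal M$ a small neighborhood of $c_2=0$. The unconstrained minimizer is $\theta=1$, which lies on the boundary of $m_1$, so strict complementarity requires an adjustment. Instead use $f=\tfrac12(\theta-2)^2$: then $\theta^\star=1$ and $\lambda_1^\star=1>0$. Here $c_2$ enters only $m_2$, which is inactive, so $2\in\Supp_{\mathrm{full}}\setminus\Supp(\mathcal M)$.

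\emph{Strictness of $\Irr(\mathcal M)\subseteq\Supp(\mathcal M)$} (a Layer-2 example). Take $\theta\in\mathbb R^2$, $f=\tfrac12\|\theta\|^2+c_1\theta_1$, and active constraint $m_1=\theta_1-1\ge0$, with $c_1$ near $0$. Then $\theta^\star=(1,0)$ and $\lambda_1^\star=1+c_1>0$, so LICQ, SOSC and SC hold. The column $(G_{\mathcal A})_{\cdot 1}=-(1,0)^\top$ is nonzero but lies in $\operatorname{range}(A_{\mathcal A}^\top)=\operatorname{span}\{(1,0)\}$. Meanwhile $(C_{\mathcal A})_{\cdot1}=0$ and $\theta^\star$ is constant in $c_1$. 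Hence $1\in\Supp(\mathcal M)\setminus\Irr(\mathcal M)$, consistent with Theorem~\ref{thm:kkt-characterization}.
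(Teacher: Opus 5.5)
Your proof is correct. It follows the paper's skeleton: Proposition~\ref{prop:symbolic-bound-app}(i) gives the first inclusion, and the Layer-1 description together with Example~\ref{ex:scaling} gives strictness. Two differences are worth recording.

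For $\Irr(\mathcal M)\subseteq\Supp(\mathcal M)$, the paper substitutes vanishing columns into the closed form $v_j=P^\perp(G_{\mathcal A})_{\cdot j}-\Phi(C_{\mathcal A})_{\cdot j}$. That formula presupposes $H_L$ and $S_{\mathcal A}$ are invertible, which LICQ, SOSC and SC do not guarantee. You instead go through Theorem~\ref{thm:kkt-characterization}, i.e.\ uniqueness for the saddle system \eqref{eq:policy_jacobian_saddle}. This needs only nonsingularity of $K_{\mathcal A}$ (Lemma~\ref{lem:kkt-nonsingular}), so your argument works under Assumption~\ref{assump:regularity} exactly as stated. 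You also make explicit the inclusion $\Supp(\mathcal M)\subseteq\Supp_{\mathrm{full}}$, which the paper takes as evident.

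For the inner gap, your witness is objective-driven: the shift $-\nabla^2_{\theta c_1}f$ lies along the active normal and is absorbed by $\lambda_1^\star=1+c_1$. This is closer to the paper's LP-cost remark and lies outside Proposition~\ref{prop:structural-main}, which assumes $\nabla^2_{\theta c_j}f\equiv0$. The paper's scaling example shows instead that the gap arises even with a context-free objective. Either witness suffices, and yours has the merit of checking LICQ, SOSC and SC concretely.

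In a final write-up, drop the discarded first attempt $f=\tfrac12\theta^2-\theta$, which gives $\lambda_1^\star=0$ and violates SC. State only $f=\tfrac12(\theta-2)^2$.
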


Corollary~\ref{cor:symbolic-upper-bound} reveals two distinct ways in which
symbolic inspection can overestimate decision relevance.

\paragraph{Layer 1: outer gap
$\Supp_{\mathrm{full}}\setminus\Supp(\mathcal M)$.}
A coordinate appears in the optimization problem but never reaches the local
sensitivity system:
\[
(C_{\mathcal A})_{\cdot j}
\equiv
(G_{\mathcal A})_{\cdot j}
\equiv0
\qquad\text{on }\mathcal M.
\]
Such a coordinate is symbolically present but already irrelevant at the active KKT level.

\paragraph{Layer 2: inner gap
$\Supp(\mathcal M)\setminus\Irr(\mathcal M)$.}
More subtly, a coordinate can reach the active KKT system and still leave the
optimum unchanged. In this case,
\[
(C_{\mathcal A})_{\cdot j}\equiv0,
\qquad
(G_{\mathcal A})_{\cdot j}
\in\operatorname{range}(A_{\mathcal A}^\top)
\quad\text{on }\mathcal M,
\]
with $(G_{\mathcal A})_{\cdot j}\not\equiv0$ somewhere in the regime.
The coordinate is therefore symbolically active but KKT-annihilated.

\begin{example}[Layer-2 annihilation]
\label{ex:scaling}
Consider a binding budget constraint with a positive calibration factor $c_j$,
\[
m(\theta;c)
=
c_j\,g(\theta,c_{-j}),
\qquad
g(\theta,c_{-j})
:=
B(c_{-j})-w(c_{-j})^\top\theta,
\qquad
c_j>0,
\]
and suppose the objective $f$ is independent of $c_j$.
The coordinate $c_j$ appears explicitly in the constraint,
$\nabla_\theta m$, and the active KKT system. Yet
\[
\{\theta:c_jg(\theta,c_{-j})\ge0\}
=
\{\theta:g(\theta,c_{-j})\ge0\},
\]
so changing $c_j$ does not change the feasible set or the optimum.

The KKT characterization recovers this invariance directly. Because the
constraint is binding, $ g(\theta^\star,c_{-j})=0,$
and hence $(C_{\mathcal A})_{\cdot j}=0$. Moreover,
\[
\frac{\partial^2m}{\partial\theta\,\partial c_j}
=
\nabla_\theta g
=
c_j^{-1}\nabla_\theta m
\in
\operatorname{range}(A_{\mathcal A}^\top).
\]
The stationarity perturbation therefore lies entirely in the active
constraint-normal space. The multiplier correspondingly rescales as
$c_j^{-1}$, absorbing the perturbation without primal motion.
\end{example}

Example~\ref{ex:scaling} gives the simplest instance of the inner gap. The same
mechanism extends beyond a single rescaled constraint.

\begin{proposition}[Sufficient structures for Layer-2 annihilation]
\label{prop:structural-main}
Under Assumption~\ref{assump:regularity}, fix a coordinate $j$ such that
$(C_{\mathcal A})_{\cdot j}\equiv0$ and
$\nabla^2_{\theta c_j}f(\theta^\star(c);c)\equiv0$ on $\mathcal M$.
Then $j\notin\Irr(\mathcal M)$ in either of the following cases:
\begin{enumerate}[label=\textup{(\Roman*)},leftmargin=2em]

    \item \textbf{Constraint scaling.}
    Each active constraint either does not depend on $c_j$, or has the form
    \[
    m_i(\theta;c)
    =
    c_j\,g_i(\theta,c_{-j}),
    \qquad c_j>0 .
    \]

    \item \textbf{Proportional mixed partials.}
    There exist scalars $\beta_i(c)$ such that
    \[
    \frac{\partial^2m_i}
    {\partial\theta\,\partial c_j}
    (\theta^\star(c);c)
    =
    \beta_i(c)\,
    \nabla_\theta m_i(\theta^\star(c);c)
    \qquad
    \forall i\in\mathcal A,\;\forall c\in\mathcal M .
    \]
\end{enumerate}
\end{proposition}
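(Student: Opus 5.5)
The plan is to reduce both cases to Theorem~\ref{thm:kkt-characterization}. Fix $c\in\mathcal M$. The hypothesis $(C_{\mathcal A})_{\cdot j}\equiv0$ already supplies the first condition in \eqref{eq:annihilation_main}, so it remains only to show $(G_{\mathcal A}(c))_{\cdot j}\in\range(A_{\mathcal A}(c)^\top)$. Once that holds for every $c\in\mathcal M$, the theorem gives $v_j(c)=0$ throughout the regime, and by \eqref{eq:irr_set} we get $j\notin\Irr(\mathcal M)$.

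\textbf{Case (II) first.} Since $\nabla^2_{\theta c_j}f(\theta^\star(c);c)\equiv0$, the definition \eqref{eq:kkt_blocks_main_2} gives
\[
(G_{\mathcal A}(c))_{\cdot j}=\sum_{i\in\mathcal A}\lambda_i^\star(c)\,\frac{\partial^2 m_i}{\partial\theta\,\partial c_j}(\theta^\star(c);c)=\sum_{i\in\mathcal A}\lambda_i^\star(c)\beta_i(c)\,\nabla_\theta m_i(\theta^\star(c);c).
\]
This is $A_{\mathcal A}(c)^\top w$ with $w_i=\lambda_i^\star\beta_i$, so it lies in $\range(A_{\mathcal A}(c)^\top)$ and we are done. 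The multiplier sensitivity can even be read off as $(\Lambda_c)_{\cdot j}=-w$ by plugging $J_{\cdot j}=0$ into \eqref{eq:policy_jacobian_saddle}. This is a consistency check, not needed for the proof.

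\textbf{Case (I) reduces to (II).} Take $\beta_i=0$ for each active constraint that does not depend on $c_j$, since then its mixed partial in $c_j$ vanishes. For $m_i=c_j g_i(\theta,c_{-j})$ we have
\[
\partial^2_{\theta c_j}m_i=\nabla_\theta g_i=c_j^{-1}\nabla_\theta m_i,
\]
so $\beta_i=c_j^{-1}$, which is well defined because $c_j>0$. These are exactly the computations of Example~\ref{ex:scaling}, applied constraint by constraint.

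\textbf{Expected obstacle: bookkeeping, not a real difficulty.} Three points need care.
\begin{enumerate}[label=(\alph*)]
\item The hypotheses are evaluated along the solution branch $\theta^\star(c)$, and the range condition must hold pointwise at each $c$. Constant $\beta_i$ are therefore not required.
\item The mixed partial in (II) is a vector in $\mathbb{R}^{n_\theta}$, namely the $j$th column of $\nabla^2_{\theta c}m_i$. It has to be matched to column $j$ of $G_{\mathcal A}$.
\item In case (I), the condition $(C_{\mathcal A})_{\cdot j}=0$ is automatic, since $\partial_{c_j}m_i=g_i(\theta^\star)=0$ for binding constraints. This is consistent with, though not needed beyond, the stated hypothesis.
\end{enumerate}
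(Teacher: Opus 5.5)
Your proposal is correct and matches the paper's proof. Both arguments take the margin condition from the hypothesis $(C_{\mathcal A})_{\cdot j}\equiv0$ and use $\nabla^2_{\theta c_j}f\equiv0$ to remove the objective term from $(G_{\mathcal A})_{\cdot j}$. Both then write the remainder as $A_{\mathcal A}^\top w$ with $w_i=\lambda_i^\star\beta_i$, taking $\beta_i=c_j^{-1}$ or $0$ in case (I), and conclude by applying Theorem~\ref{thm:kkt-characterization} pointwise on $\mathcal M$. The only difference is cosmetic: the appendix verifies (I) directly instead of routing it through (II), although the main text notes exactly your reduction.
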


Both cases route the stationarity perturbation entirely through active
constraint normals, so multiplier adjustment absorbs it. Case~\textup{(I)} is
a structured special case of~\textup{(II)}: $\beta_i=0$ for constraints
independent of $c_j$, while $\beta_i=c_j^{-1}$ for scaled constraints.
The single-constraint instance of~\textup{(I)} is the mechanism used in our
controlled Layer-2 diagnostic. Theorem~\ref{thm:structural-app} further
shows that changing constraint \emph{shape}, rather than scale, restores
irreducibility. The inner gap disappears in a boundary case.

\begin{corollary}[Affine active-constraint collapse]
\label{cor:affine-main}
Under Assumption~\ref{assump:regularity}, suppose all active constraints are
affine in $(\theta,c)$ and
\[
\nabla^2_{\theta c}f(\theta^\star(c);c)\equiv0
\qquad\text{on }\mathcal M .
\]
Then $G_{\mathcal A}\equiv0$ and
\[
\Irr(\mathcal M)
=
\Supp(\mathcal M)
=
\bigl\{
j:(C_{\mathcal A})_{\cdot j}\not\equiv0
\bigr\}.
\]
\end{corollary}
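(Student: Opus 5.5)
The plan is to show that $G_{\mathcal A}$ vanishes and then apply Theorem~\ref{thm:kkt-characterization} column by column, with Corollary~\ref{cor:symbolic-upper-bound} supplying one inclusion for free.

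\textbf{Step 1: $G_{\mathcal A}\equiv0$.} Recall from \eqref{eq:kkt_blocks_main_2} that $G_{\mathcal A}=\sum_{i\in\mathcal A}\lambda_i^\star\nabla^2_{\theta c}m_i-\nabla^2_{\theta c}f$. Each active $m_i$ is affine in $(\theta,c)$, so its full Hessian in $(\theta,c)$ is zero. In particular $\nabla^2_{\theta c}m_i\equiv0$ at every point, and hence at $(\theta^\star(c);c)$. The second term vanishes on $\mathcal M$ by hypothesis. Therefore $G_{\mathcal A}(c)=0$ for every $c\in\mathcal M$.

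\textbf{Step 2: the two sets coincide with the $C_{\mathcal A}$-support.} With $G_{\mathcal A}\equiv0$, the defining condition \eqref{eq:supp_main} reduces to the existence of some $c\in\mathcal M$ with $(C_{\mathcal A}(c))_{\cdot j}\neq0$. That is exactly $\{j:(C_{\mathcal A})_{\cdot j}\not\equiv0\}$, which gives the second equality.

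For the first equality, Corollary~\ref{cor:symbolic-upper-bound} already gives $\Irr(\mathcal M)\subseteq\Supp(\mathcal M)$. For the reverse inclusion, fix $j$ and $c\in\mathcal M$ with $(C_{\mathcal A}(c))_{\cdot j}\neq0$. The first condition on the right-hand side of \eqref{eq:annihilation_main} then fails. By Theorem~\ref{thm:kkt-characterization}, $v_j(c)\neq0$, so $j\in\Irr(\mathcal M)$.

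The range condition on $(G_{\mathcal A})_{\cdot j}$ is automatically satisfied, since $0\in\operatorname{range}(A_{\mathcal A}^\top)$. This is why the inner gap of Layer~2 cannot occur here.

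\textbf{Expected obstacle.} There is little real difficulty. The only point needing care is the interpretation of the phrase ``affine in $(\theta,c)$''. The paper absorbs $c\mapsto y(c)$ into the notation, so the hypothesis must be read as affinity of the composite map $(\theta,c)\mapsto m_i(\theta;y(c))$. Under that reading the mixed partial is genuinely zero.

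Under the weaker reading, where $m_i$ is affine in $\theta$ and separately affine in $c$, bilinear terms $\theta^\top Mc$ could survive in the mixed partial. So I will state explicitly that joint affinity of the composite map is what is used. With that fixed, the argument is a direct reading of Theorem~\ref{thm:kkt-characterization}.
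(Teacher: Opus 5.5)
Your proposal is correct and follows the paper's route. The paper's proof consists only of your Step~1: affineness kills $\nabla^2_{\theta c}m_i$ and the hypothesis kills $\nabla^2_{\theta c}f$, so $G_{\mathcal A}\equiv0$, and the paper leaves the set equalities implicit; your Step~2 supplies exactly that missing reading of Theorem~\ref{thm:kkt-characterization}, with the range condition made trivial by $0\in\operatorname{range}(A_{\mathcal A}^\top)$. Your caveat about joint affinity is sound but slightly stronger than needed: joint affinity of $m_i$ in $(\theta,y)$ would also suffice, since it already makes $\nabla_\theta m_i$ independent of $c$.
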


Thus, when active constraints are affine and the objective contributes no mixed
context-decision curvature, optimizer relevance reduces to active-margin
dependence and symbolic active-set inspection is sufficient.
A contextual linear objective gives a complementary and familiar boundary.
Consider an LP with fixed constraints and
$f(\theta;c)=y(c)^\top\theta$. At a nondegenerate vertex,
$A_{\mathcal A}$ is square and nonsingular, so
$
\operatorname{range}(A_{\mathcal A}^\top)
=
\mathbb{R}^{n_\theta}.
$
For a context coordinate that perturbs only the cost,
$(C_{\mathcal A})_{\cdot j}=0$ and
$
(G_{\mathcal A})_{\cdot j}
=
-\frac{\partial y(c)}{\partial c_j}
\in
\operatorname{range}(A_{\mathcal A}^\top).
$
Theorem~\ref{thm:kkt-characterization} therefore gives $v_j(c)=0$ while the
active basis remains unchanged. 
%In this sense, the Layer-2 characterization recovers the familiar piecewise-constant solution map of linear programs that motivates surrogate decision-focused losses such as SPO+.
The distinction above tells us when symbolic participation does and does not
translate into optimizer relevance. It leaves a complementary question:
what happens if an interface removes a coordinate that \emph{is} irreducible?
Section~\ref{sec:erm_bias} shows that the resulting first-order error
cannot be repaired by downstream learning. Full statements and proofs for the
separation results are given in Appendix~\ref{app:separation}.
\vspace{-0.1cm}
\subsection{Interface Misspecification and Stability}
\label{sec:erm_bias}
\label{sec:stability}
\vspace{-0.1cm}
Building on the distinction established in Section~\ref{sec:separation}, we ask two questions: what is lost when an irreducible direction is removed, and how stable are relevance judgments across the operating regime?

\paragraph{Removing an irreducible direction.}
Let
$\psi\in C^1(\mathbb{R}^{n_1},\mathbb{R}^{d})$
be an interface map and
$h\in C^1(\mathbb{R}^{d},\mathbb{R}^{n_\theta})$
a downstream decision map.

\begin{theorem}[Annihilation of an irreducible direction]
\label{thm:annihilation-main}
For any $c\in\mathcal M$ and any
$v\in\ker(D\psi(c))$,
\[
D(h\circ\psi)(c)v=0.
\]
Consequently, if primitive direction $j$ is irreducible at $c$,
$J(c)e_j\neq0$, and the interface removes that direction,
$D\psi(c)e_j=0$, then
\begin{equation}
D(h\circ\psi)(c)e_j=0
\qquad\text{while}\qquad
J(c)e_j\neq0.
\label{eq:interface_mismatch}
\end{equation}
Thus no differentiable downstream decision map through $\psi$ can match the
oracle decision map to first order in direction $e_j$ at $c$.
\end{theorem}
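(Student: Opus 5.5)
The plan is a direct chain-rule argument; none of the KKT machinery is needed beyond the existence of $J(c)$. Both $\psi$ and $h$ are $C^1$, so $h\circ\psi$ is $C^1$. The chain rule gives, at every $c\in\mathcal M$,
\[
D(h\circ\psi)(c)=Dh(\psi(c))\,D\psi(c).
\]
For any $v\in\ker(D\psi(c))$ we then have
\[
D(h\circ\psi)(c)v=Dh(\psi(c))\bigl(D\psi(c)v\bigr)=Dh(\psi(c))\,0=0,
\]
which is the first claim. It holds for every $C^1$ choice of $h$, since linearity of $Dh(\psi(c))$ is all that is used.

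For the consequence, we specialize to $v=e_j$ under the hypothesis $D\psi(c)e_j=0$. The first claim gives $D(h\circ\psi)(c)e_j=0$. On the other side, Assumption~\ref{assump:regularity} makes $\theta^\star$ differentiable at $c$ with Jacobian $J(c)$, as in \eqref{eq:solution_jacobian}. Irreducibility of $j$ at $c$ means $J(c)e_j=v_j(c)\neq0$, so \eqref{eq:interface_mismatch} follows.

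To make the phrase "cannot match to first order" precise, we compare the two first-order expansions along $t e_j$:
\[
\theta^\star(c+te_j)-h(\psi(c+te_j))
=\theta^\star(c)-h(\psi(c))+t\bigl(J(c)e_j-0\bigr)+o(|t|).
\]
Two cases cover everything. If the zeroth-order values differ, the maps already disagree at $c$. If they agree, the discrepancy equals $t\,v_j(c)+o(|t|)$, and its norm is at least $\tfrac12|t|\,\|v_j(c)\|>0$ for all small $t\neq0$. In either case the discrepancy is not $o(|t|)$.

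The only potential subtlety is interpretive rather than technical. "Removes the direction" has to mean $D\psi(c)e_j=0$, a condition on the derivative at $c$, and not merely that $\psi$ is constant in $c_j$ globally. The stated hypothesis is exactly this derivative condition, so no further regularity beyond $C^1$ of $\psi$ and $h$, together with the differentiability of $\theta^\star$ from Assumption~\ref{assump:regularity}, is required.
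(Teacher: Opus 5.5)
Your proposal is correct and follows the paper's proof. The chain rule $D(h\circ\psi)(c)=Dh(\psi(c))\,D\psi(c)$ sends every $v\in\ker D\psi(c)$ to zero, and specializing to $v=e_j$ with $J(c)e_j\neq0$ gives the mismatch; your extra $o(|t|)$ comparison along $c+te_j$ is a valid asymptotic version of the paper's appendix lower bound (Proposition~\ref{prop:lower-app}) that avoids its second-order curvature assumptions.
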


In a predict-then-optimize pipeline, this applies to
$h=S\circ g_\phi$ whenever the predictor--solver composition is locally
differentiable.

\begin{corollary}[Structural first-order bias under misspecified interfaces]
\label{cor:erm-main}
Let
\[
\mathcal F_\psi
:=
\left\{
h\circ\psi:
h\in C^1(\mathbb{R}^{d},\mathbb{R}^{n_\theta})
\right\}.
\]
If there exist $c\in\mathcal M$ and $j$ such that
$J(c)e_j\neq0$ and $D\psi(c)e_j=0$, then every
$\widehat{\theta}\in\mathcal F_\psi$ incurs a nonzero first-order mismatch at
$c$. In particular, empirical risk minimization within $\mathcal F_\psi$
cannot remove this local structural bias.
\end{corollary}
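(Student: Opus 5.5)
The corollary follows directly from Theorem~\ref{thm:annihilation-main}, so the plan is to apply that theorem to each element of $\mathcal F_\psi$ and then read the conclusion as a statement about the whole hypothesis class.

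\emph{Pointwise mismatch.} Fix the pair $(c,j)$ given by the hypothesis and an arbitrary $\widehat\theta=h\circ\psi\in\mathcal F_\psi$. Since $h\in C^1$ and $\psi\in C^1$, the chain rule gives
\[
D\widehat\theta(c)e_j=Dh(\psi(c))\,D\psi(c)e_j .
\]
By hypothesis $D\psi(c)e_j=0$, so $D\widehat\theta(c)e_j=0$. This is exactly the first part of Theorem~\ref{thm:annihilation-main} with $v=e_j\in\ker(D\psi(c))$. On the other side, $J(c)e_j\neq0$. Hence the first-order discrepancy
\[
\bigl(D\widehat\theta(c)-J(c)\bigr)e_j=-J(c)e_j
\]
is nonzero. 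Its value $-J(c)e_j$ does not depend on $h$, so the mismatch is uniform over $\mathcal F_\psi$: no member of the class can reduce it.

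\emph{Consequence for ERM.} Empirical risk minimization returns some element $\widehat\theta$ of $\mathcal F_\psi$, whatever the loss, the sample size, or the optimization procedure. The pointwise step applies to that element, so its directional derivative at $c$ along $e_j$ is off by $J(c)e_j$. To phrase this as a bias, expand along the ray $c+te_j$ (assuming $c+te_j$ stays in the regime, or at least in a neighbourhood where $\theta^\star$ is differentiable):
\[
\bigl[\widehat\theta(c+te_j)-\widehat\theta(c)\bigr]-\bigl[\theta^\star(c+te_j)-\theta^\star(c)\bigr]=-tJ(c)e_j+o(t).
\]
The left side compares increments along the ray, not values at a single point. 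Therefore no element of $\mathcal F_\psi$ tracks the oracle's local variation, even if it fits $\theta^\star(c)$ exactly at the base point.

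\emph{Main obstacle.} The difficulty is only in interpreting ``cannot remove this local structural bias.'' Risk is evaluated at finitely many sample points, so the honest claim concerns first-order fidelity, not empirical risk itself. The corollary should be proved in that form: the mismatch is a lower bound on local increment error that is uniform over the class and independent of the data. Nothing beyond the chain rule and the differentiability of $J$ guaranteed by Assumption~\ref{assump:regularity} is needed.
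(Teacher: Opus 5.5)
Your proposal is correct and matches the paper's proof: you apply the chain rule (Theorem~\ref{thm:annihilation-main}) to each $h\circ\psi$ to get $D\widehat\theta(c)e_j=0$, hence $(D\widehat\theta(c)-J(c))e_j=-J(c)e_j\neq0$ for every member of $\mathcal F_\psi$, and ERM, which only selects within the class, cannot change this. Your ray-increment expansion is a harmless addition that mirrors the appendix's warm-start remark; it needs only differentiability of $\theta^\star$ at $c$ (so that $J(c)$ exists), not differentiability of $J$ as your last sentence says.
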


Appendix~\ref{app:erm_bias} gives the proofs and a quantitative local
first-order lower bound.

The preceding result makes interface selection consequential, but
irreducibility is defined through the local Jacobian $J(c)$ and may vary across
$\mathcal M$. Practical interface construction therefore also requires
rankings that remain stable under nearby context perturbations and filtering
rules that retain directions whose irreducibility persists across the regime.

\begin{definition}[Column norm profile and ranking gap]
\label{def:col-norm}
For $c\in\mathcal M$, define
\[
\Irr(c):=\{j:v_j(c)\neq0\},
\qquad
\rho(c):=\bigl(\|v_j(c)\|_2\bigr)_{j=1}^{n_1},
\]
where $v_j(c)$ is defined in Section~\ref{sec:problem}.
Let
\[
\rho_{(1)}(c)\geq
\rho_{(2)}(c)\geq\cdots\geq
\rho_{(n_1)}(c)\geq0
\]
denote the ordered column norms, and define the $m$th ranking gap
\[
\gamma_m(c)
:=
\rho_{(m)}(c)-\rho_{(m+1)}(c).
\]
Whenever $\gamma_m(c)>0$, define
\[
\Top_m(c)
:=
\{j:\rho_j(c)\geq\rho_{(m)}(c)\},
\qquad
|\Top_m(c)|=m.
\]
\end{definition}

\begin{proposition}[Stability of Jacobian rankings]
\label{prop:stability-main}
Under Assumption~\ref{assump:regularity}, let
$\mathcal K\subseteq\mathcal M$ be compact.

\begin{enumerate}[label=\textup{(\roman*)},leftmargin=2em]

    \item The map $c\mapsto\rho(c)$ is locally Lipschitz on $\mathcal M$.
    In particular, there exists $L_{\mathcal K}>0$ such that
    \[
    \|\rho(c)-\rho(c')\|_\infty
    \leq
    L_{\mathcal K}\|c-c'\|,
    \qquad
    \forall\,c,c'\in\mathcal K.
    \]
    The constant $L_{\mathcal K}$ depends on smoothness bounds for $f$ and
    $\{m_i\}_{i=1}^K$, the conditioning of the active-set KKT matrix
    $K_{\mathcal A}(c)$, and the geometry of $\mathcal K$; an explicit
    bound is given in Appendix~\ref{app:stability}.

    \item If $c_0\in\mathcal K$ satisfies $\gamma_m(c_0)>0$, then
    \[
    \Top_m(c)=\Top_m(c_0)
    \qquad
    \forall\,c\in\mathcal K
    \text{ with }
    \|c-c_0\|
    <
    \frac{\gamma_m(c_0)}{2L_{\mathcal K}}.
    \]
    Thus the certified local stability radius is controlled by the ratio
    $\gamma_m(c_0)/L_{\mathcal K}$.

    \item For $\alpha\in(0,1]$, suppose
    $0<\Vol(\mathcal K)<\infty$ and define
    \[
    \Irr_\alpha(\mathcal K)
    :=
    \left\{
    j:
    \frac{
    \Vol\{c\in\mathcal K:j\in\Irr(c)\}
    }{
    \Vol(\mathcal K)
    }
    \geq\alpha
    \right\}.
    \]
    Then
    \[
    \Irr_1(\mathcal K)
    \subseteq
    \Irr_\alpha(\mathcal K)
    \subseteq
    \Irr(\mathcal M).
    \]
    The family is monotone in $\alpha$:
    if $\alpha'\geq\alpha$, then
    $\Irr_{\alpha'}(\mathcal K)\subseteq\Irr_\alpha(\mathcal K)$.
\end{enumerate}
\end{proposition}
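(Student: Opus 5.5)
We prove the three parts in order; (ii) and (iii) rest on (i) and on elementary order and measure arguments.

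\emph{Part (i).} By Assumption~\ref{assump:regularity}, $K_{\mathcal A}(c)$ is nonsingular for every $c\in\mathcal M$. The implicit function theorem applied to the active KKT residual (Appendix~\ref{app:jacobian-derivation}) then makes $c\mapsto(\theta^\star(c),\lambda^\star(c))$ continuously differentiable near each point of $\mathcal M$. The blocks $H_L,A_{\mathcal A},C_{\mathcal A},G_{\mathcal A}$ are built from first and second derivatives of $f,m_i$ evaluated along this branch, so they are continuous in $c$.

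By \eqref{eq:policy_jacobian_saddle}, $J(c)$ is the upper block of $K_{\mathcal A}(c)^{-1}[G_{\mathcal A};-C_{\mathcal A}]$. Matrix inversion is smooth on nonsingular matrices, so $J$ is continuous. To get a Lipschitz rather than merely continuous $J$, I would first try to assume that the second derivatives of $f,m_i$ are locally Lipschitz, which makes $J$ locally Lipschitz. Failing that, I would use the explicit bound route from Appendix~\ref{app:stability}: the resolvent identity
\[
K^{-1}(c)-K^{-1}(c')=K^{-1}(c)\bigl(K(c')-K(c)\bigr)K^{-1}(c'),
\]
together with $\sup_{\mathcal K}\|K_{\mathcal A}^{-1}\|<\infty$, which holds by compactness and continuity.

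Once $J$ is Lipschitz, the reverse triangle inequality gives
\[
\bigl|\|v_j(c)\|-\|v_j(c')\|\bigr|\le\|J(c)-J(c')\|_{\mathrm{op}}.
\]
So $\rho$ is Lipschitz with the same constant.

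The main obstacle is passing from local Lipschitz continuity to a single constant $L_{\mathcal K}$ valid for all $c,c'\in\mathcal K$. For nonconvex $\mathcal K$, a segment between two points may leave $\mathcal M$. I would handle this with a covering argument. Cover $\mathcal K$ by finitely many balls on which $\rho$ is Lipschitz, and let $r$ be a Lebesgue number of the cover. For pairs with $\|c-c'\|<r$, the local constant applies. For $\|c-c'\|\ge r$, use
\[
\|\rho(c)-\rho(c')\|_\infty\le 2\sup_{\mathcal K}\|\rho\|_\infty\le \frac{2\sup_{\mathcal K}\|\rho\|_\infty}{r}\,\|c-c'\|.
\]
Taking the maximum of the two constants gives $L_{\mathcal K}$. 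This is where the ``geometry of $\mathcal K$'' enters the constant.

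\emph{Part (ii).} Let $T=\Top_m(c_0)$ and $\epsilon=\gamma_m(c_0)/2$. For $j\in T$ we have $\rho_j(c_0)\ge\rho_{(m)}(c_0)$, and for $k\notin T$ we have $\rho_k(c_0)\le\rho_{(m+1)}(c_0)=\rho_{(m)}(c_0)-2\epsilon$.

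If $\|c-c_0\|<\epsilon/L_{\mathcal K}$, part (i) moves each coordinate of $\rho$ by strictly less than $\epsilon$. Then
\[
\rho_j(c)>\rho_{(m)}(c_0)-\epsilon>\rho_k(c)\qquad\text{for all }j\in T,\ k\notin T.
\]
Since $|T|=m$ and every index of $T$ strictly dominates every index outside it, $T$ is exactly the set of the $m$ largest entries of $\rho(c)$. In particular $\gamma_m(c)>0$ and $\Top_m(c)=T$.

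\emph{Part (iii).} Monotonicity in $\alpha$ is immediate, since the defining threshold only increases with $\alpha$. This also gives $\Irr_1\subseteq\Irr_\alpha$.

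For $\Irr_\alpha(\mathcal K)\subseteq\Irr(\mathcal M)$, take $j\in\Irr_\alpha(\mathcal K)$. Its fraction is at least $\alpha>0$, so the set $\{c\in\mathcal K:v_j(c)\neq0\}$ has positive volume and is therefore nonempty. Because $\mathcal K\subseteq\mathcal M$, some $c\in\mathcal M$ has $v_j(c)\ne0$, which means $j\in\Irr(\mathcal M)$ by \eqref{eq:irr_set}.

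Measurability of $\{c\in\mathcal K:v_j(c)\ne0\}$ holds because it is the preimage of an open set under the continuous map $c\mapsto v_j(c)$, intersected with the compact set $\mathcal K$.
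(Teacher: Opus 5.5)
Your argument follows the paper's route. For (i) you use Lipschitz continuity of $J$ plus the reverse triangle inequality on columns. For (ii) you use the comparison $\rho_j(c)-\rho_k(c)\ge\gamma_m(c_0)-2L_{\mathcal K}\|c-c_0\|$ for $j\in\Top_m(c_0)$, $k\notin\Top_m(c_0)$. For (iii) you use ``positive volume implies nonempty'' together with $\mathcal K\subseteq\mathcal M$. Your Lebesgue-number step improves on the paper. The paper only asserts that local bounds on a compact set ``admit a finite uniform constant.'' Its explicit bound applies the mean value theorem to $P^\perp G_{\mathcal A}-\Phi C_{\mathcal A}$. That needs segments lying in $\mathcal K$ and a derivative of $J$, hence third derivatives of $f,m_i$. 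Your covering argument needs neither convexity of $\mathcal K$ nor differentiability of $J$.

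On smoothness, you are right that $C^2$ data only make $J$ continuous. However, your fallback is not an independent route. The resolvent identity transfers Lipschitz continuity from $K_{\mathcal A}(c)$ to $K_{\mathcal A}(c)^{-1}$. But $K_{\mathcal A}(c)$ (through $H_L$) and $G_{\mathcal A}(c)$ are Lipschitz along the branch only when the second derivatives of $f$ and $m_i$ are, so you land on the same extra hypothesis.

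That hypothesis cannot be dropped. With no active constraints, take $f(\theta;c)=\varphi(\theta)-c\theta$ with $\varphi'(0)=0$ and $\varphi''(\theta)=1+|\theta|^{1/2}$. Assumption~\ref{assump:regularity} holds and $\theta^\star(c)=c+o(c)$. Then $\rho(c)=1/(1+|\theta^\star(c)|^{1/2})$ behaves like $1-|c|^{1/2}$ near $0$, so it is not Lipschitz at $c=0$.

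So state locally Lipschitz second derivatives (or $C^3$ data) as an explicit hypothesis for (i) and for the radius in (ii). The paper's proof relies on the same unstated strengthening when it says $J$ is locally Lipschitz ``under the stated smoothness assumptions.'' Part (iii) needs only continuity of $v_j$ and goes through as you wrote it.
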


Proposition~\ref{prop:stability-main} connects local sensitivity to practical
interface construction. A positive ranking gap certifies a neighborhood in
which the top-$m$ ranking is unchanged, while frequency filtering retains
directions whose irreducibility persists across the operating regime.
Additional stability results and proofs are given in
Appendix~\ref{app:stability}. Algorithm~\ref{alg:depict-main} summarizes the resulting pipeline.

\begin{algorithm}[t]
\caption{\textsc{DePICT}: Decision-Preserving Interface Construction
(\textsc{Top-}$m$)}
\label{alg:depict-main}
\small
\begin{algorithmic}[1]

\Require Representative contexts $\{c_t\}_{t=1}^{T}\subset\mathcal M$;
constrained program~\eqref{eq:main_problem};
interface budget $m$
\Ensure Interface $\widehat{\mathcal I}_m\subseteq[n_1]$

\For{$c_t\in\{c_1,\ldots,c_T\}$}

    \State Solve~\eqref{eq:main_problem} at $c_t$ for
    $(\theta^\star(c_t),\lambda^\star(c_t))$;
    \quad
    $\mathcal A_t\gets
    \{i:m_i(\theta^\star(c_t);c_t)=0\}$

    \State Evaluate
    $A_{\mathcal A_t}(c_t)$,
    $H_L(c_t)$,
    $C_{\mathcal A_t}(c_t)$, and
    $G_{\mathcal A_t}(c_t)$
    from Eqs.~\eqref{eq:kkt_blocks_main_1}--\eqref{eq:kkt_blocks_main_2}

    \State
    $S_{\mathcal A_t}(c_t)
    \gets
    A_{\mathcal A_t}(c_t)H_L(c_t)^{-1}A_{\mathcal A_t}(c_t)^\top$;
    \quad
    $\Phi(c_t)
    \gets
    H_L(c_t)^{-1}A_{\mathcal A_t}(c_t)^\top
    S_{\mathcal A_t}(c_t)^{-1}$

    \State
    $P^\perp(c_t)
    \gets
    H_L(c_t)^{-1}
    -\Phi(c_t)A_{\mathcal A_t}(c_t)H_L(c_t)^{-1}$;
    \quad
    $J(c_t)
    \gets
    P^\perp(c_t)G_{\mathcal A_t}(c_t)
    -\Phi(c_t)C_{\mathcal A_t}(c_t)$

    \State
    $\rho(c_t)
    \gets
    \bigl(\|v_j(c_t)\|_2\bigr)_{j=1}^{n_1}$

\EndFor

\State
$\bar\rho
\gets
T^{-1}\sum_{t=1}^{T}\rho(c_t)$;
\quad
$\widehat{\mathcal I}_m
\gets
\operatorname{Top}_m(\bar\rho)$

\State \Return $\widehat{\mathcal I}_m$

\end{algorithmic}
\end{algorithm}

\vspace{-0.2cm}
\section{Experiments}
\label{sec:experiments}
\vspace{-0.2cm}
\begin{figure*}[t]
\centering
\includegraphics[width=\textwidth]{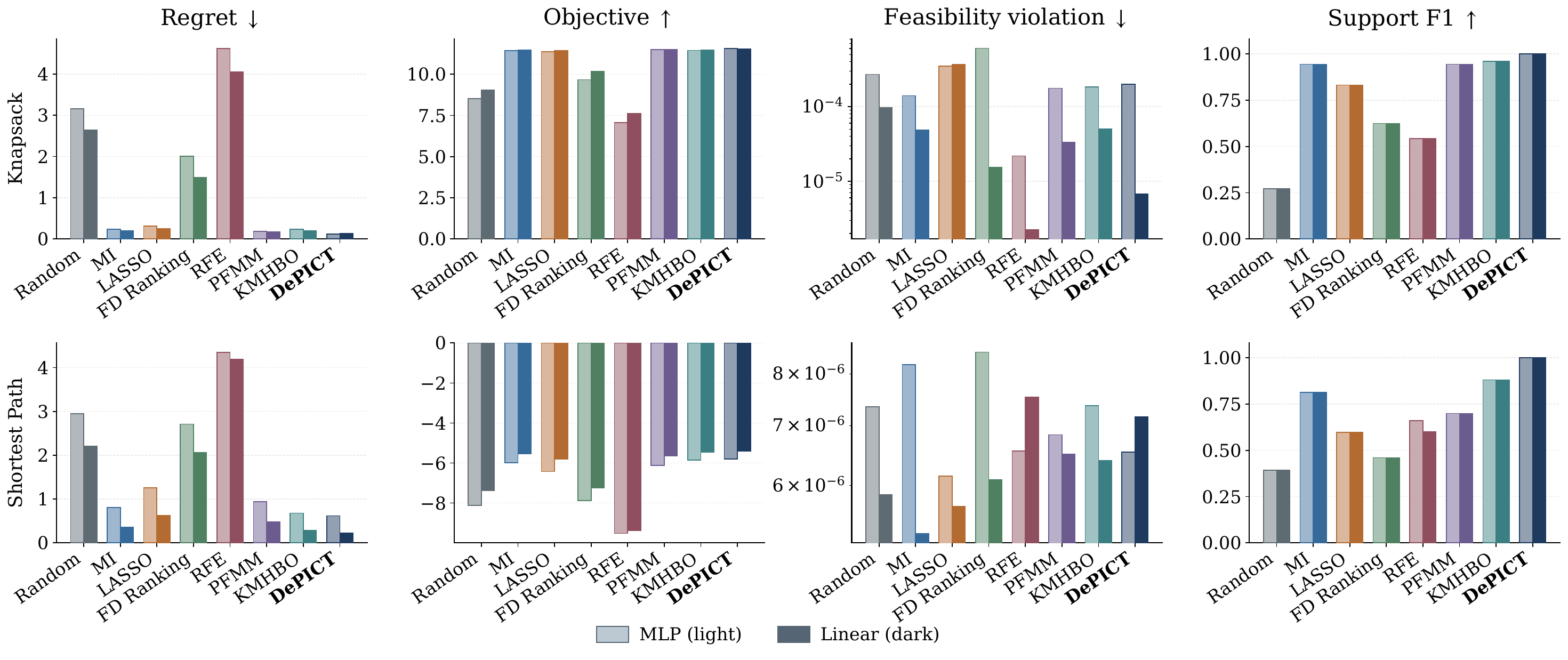}
\caption{
\textbf{Controlled results.}
Knapsack (top row) and shortest path (bottom row) under matched MLP and linear predictors. All the values are mean over 10 seeds.
%Columns report regret, objective value, feasibility violation, and support F1; arrows indicate the preferred direction. Full numerical results are reported in Appendix~Table~\ref{tab:controlled_synthetic_appendix}.
}
\label{fig:controlled_synthetic_main}
\vspace{-0.4cm}
\end{figure*}
%\vspace{-0.1cm}
%\subsection{Experiment design}
%\vspace{-0.1cm}
We evaluate DePICT in five stages. We first test interface recovery and decision quality on controlled benchmarks, then examine transfer to solver-mediated edge-regime tasks. We next isolate the Layer-2 annihilation mechanism predicted by our theory, test interface stability under context perturbations, and finally evaluate runtime scaling with context dimension. Across all benchmarks, interfaces are selected before predictor training and evaluated under matched downstream protocols. Full experimental details are given in Appendix~\ref{app:exp_details}.

\vspace{-0.1cm}
\subsection{Controlled validation}
\label{sec:exp_controlled}
\vspace{-0.1cm}

We use controlled knapsack and shortest-path benchmarks with known parameter-map supports to test whether DePICT recovers the relevant coordinates and preserves decision quality. The knapsack task follows \citet{mandi2020smart}, adapted to include both relevant and nuisance context
directions, while shortest path provides a controlled routing problem on a fixed grid with known cost-relevant coordinates. For shortest path, DePICT scores the interface using a regularized continuous flow formulation rather
than differentiating the discrete argmin, and downstream decisions are evaluated with the same regularized flow problem (Appendix~\ref{app:exp_details}). Controlled knapsack retains the LP boundary structure discussed in Section~\ref{sec:separation}, whereas shortest path uses a small strongly convex quadratic regularizer to obtain a unique, differentiable flow solution for both sensitivity analysis and evaluation.

\vspace{-0.1cm}
\paragraph{Results.}
\vspace{-0.1cm}
Figure~\ref{fig:controlled_synthetic_main} summarizes the controlled regime results. DePICT recovers the oracle interface exactly on both benchmarks (\(F1=1.000\)) and achieves the best regret and objective value among reduced-interface methods. On shortest path, it also obtains the highest path match. The advantage is clearest under the linear predictor, where interface errors are harder to mask; under the nonlinear predictor (MLP), gaps narrow but DePICT remains best or competitive.

\begin{figure*}[t]
\centering
\includegraphics[width=\textwidth]{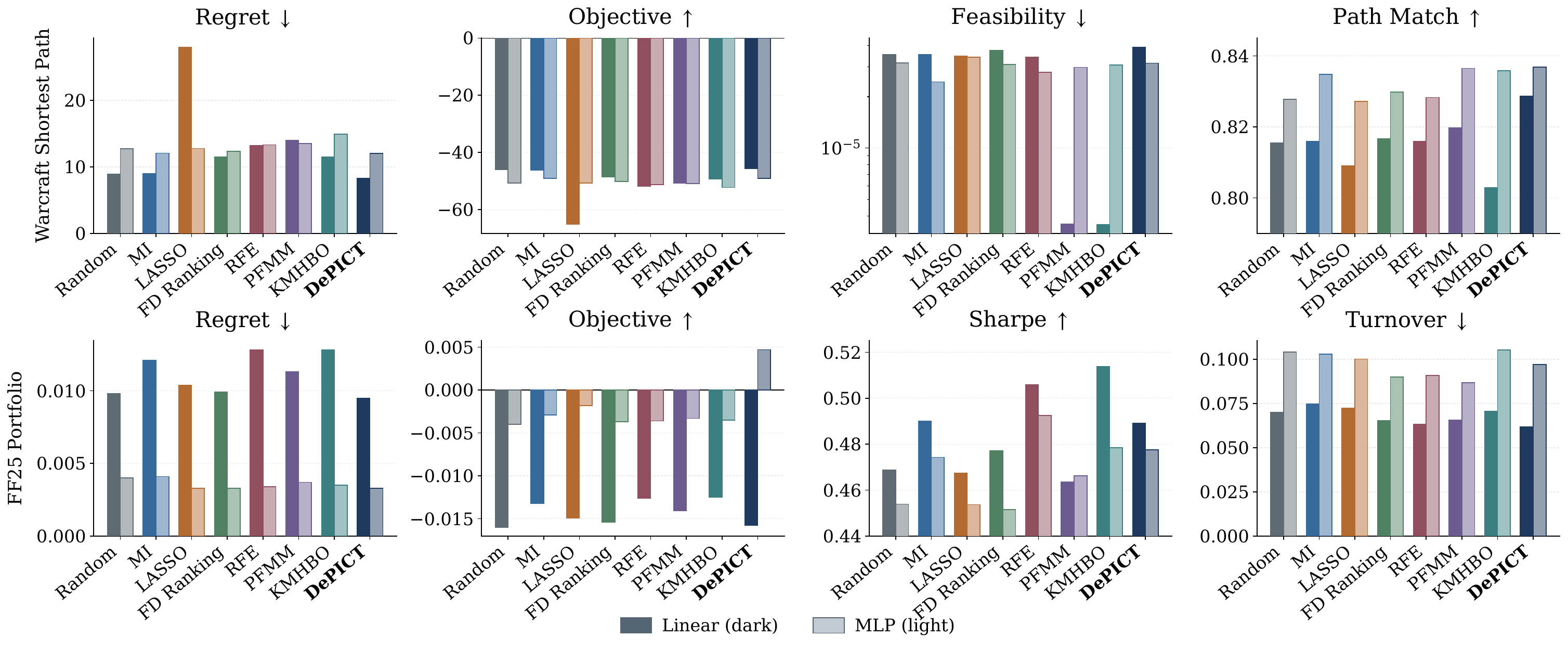}
\vspace{-0.6cm}
\caption{
\textbf{Edge-regime solver-mediated tasks.}
Warcraft shortest path (top row) and FF25 portfolio allocation (bottom row) under matched linear and MLP predictors. All the values are mean over 10 seeds.
%Warcraft, FF25 columns report regret, objective value, feasibility violation, and path match; FF25 columns report common-reference regret, common-reference objective, Sharpe, and turnover. Full numerical results are reported in Appendix~Table~\ref{tab:warcraft_ff25_main}.
}
\label{fig:edge_regime}
\vspace{-0.4cm}
\end{figure*}

\vspace{-0.1cm}
\subsection{Real and Edge-Regime Benchmarks}
\label{sec:exp_edge}
\vspace{-0.1cm}
We use Warcraft and FF25 as edge-regime benchmarks to test whether DePICT
transfers beyond the assumptions of controlled setting. Warcraft is derived from a combinatorial shortest-path benchmark~\citep{vlastelica2020blackbox}, but we use a strongly convex regularized flow QP for both interface scoring and downstream evaluation. FF25 evaluates constrained portfolio allocation on Fama–French panels ~\citep{frenchdatalibrary}, where no oracle support is available. We therefore treat performance on Warcraft and FF25 as evidence of empirical transfer, not as a formal guarantee for nonsmooth or discrete decision maps.

\vspace{-0.1cm}
\paragraph{Results.}
\vspace{-0.1cm}
Figure~\ref{fig:edge_regime} summarizes edge-regime
performance. On Warcraft, DePICT achieves the lowest regret under both predictor families, with a larger margin under the linear predictor, where interface errors are harder to mask. Under the nonlinear predictor (MLP), selector gaps nearly collapse, consistent with nonlinear compensation among selected variables. This pattern is consistent with the first-order theory: Theorem~\ref{thm:annihilation-main} identifies a
local first-order obstruction, while higher-capacity predictors may partially reconstruct missing directions through higher-order correlations among retained variables. Characterizing the precise conditions under which such compensation fails is an important direction; the present results indicate that interface quality is most visible when predictor capacity is limited relative to context dimension. On FF25, DePICT combines the lowest common-reference regret and turnover under the linear predictor with competitive Sharpe, preserving portfolio quality while reducing trading variation.

\vspace{-0.1cm}
\subsection{Layer-2 annihilation diagnostic.}
\vspace{-0.1cm}
The Layer-2 diagnostic tests the central distinction predicted by our theory: can DePICT distinguish context variables that enter the active KKT system from
those that actually move the primal optimum? Following Proposition~\ref{prop:structural-main}\textup{(I)}, these variables enter the active constraints as positive scaling factors, allowing their effect to be absorbed by the multipliers without moving the primal solution.

Figure~\ref{fig:knapsack_layer2_main} first verifies this mechanism on
controlled knapsack, where the irreducible interface is known exactly.
DePICT achieves perfect irreducible recall and support F1 while assigning zero false inclusion to both Layer-1 and Layer-2 nuisance variables. Competing methods retain substantially more Layer-2 variables, including mutual information (FPR \(0.700\)), FD ranking and PFMM (\(0.500\)), and KMHBO (\(0.650\)). This separation translates to downstream performance: DePICT reduces linear-predictor regret to \(0.009\), compared with \(0.475\) for the strongest non-DePICT baseline. The FF25 diagnostic shows the same separation in portfolio allocation. DePICT again assigns zero false inclusion to both nuisance classes, while Layer-2 FPR ranges from \(0.100\) for RFE to \(1.000\) for mutual information and PFMM. DePICT also achieves the lowest regret (\(0.0054\)) and turnover (\(0.0468\)), with a Sharpe ratio of \(0.550\). Together, these results show that distinguishing symbolic participation from optimizer relevance affects not only support recovery in the controlled setting, but also downstream decisions. Full results are reported in Appendix~Table~\ref{tab:knapsack_layer2}.

\begin{figure*}[t]
\centering
\includegraphics[width=\textwidth]{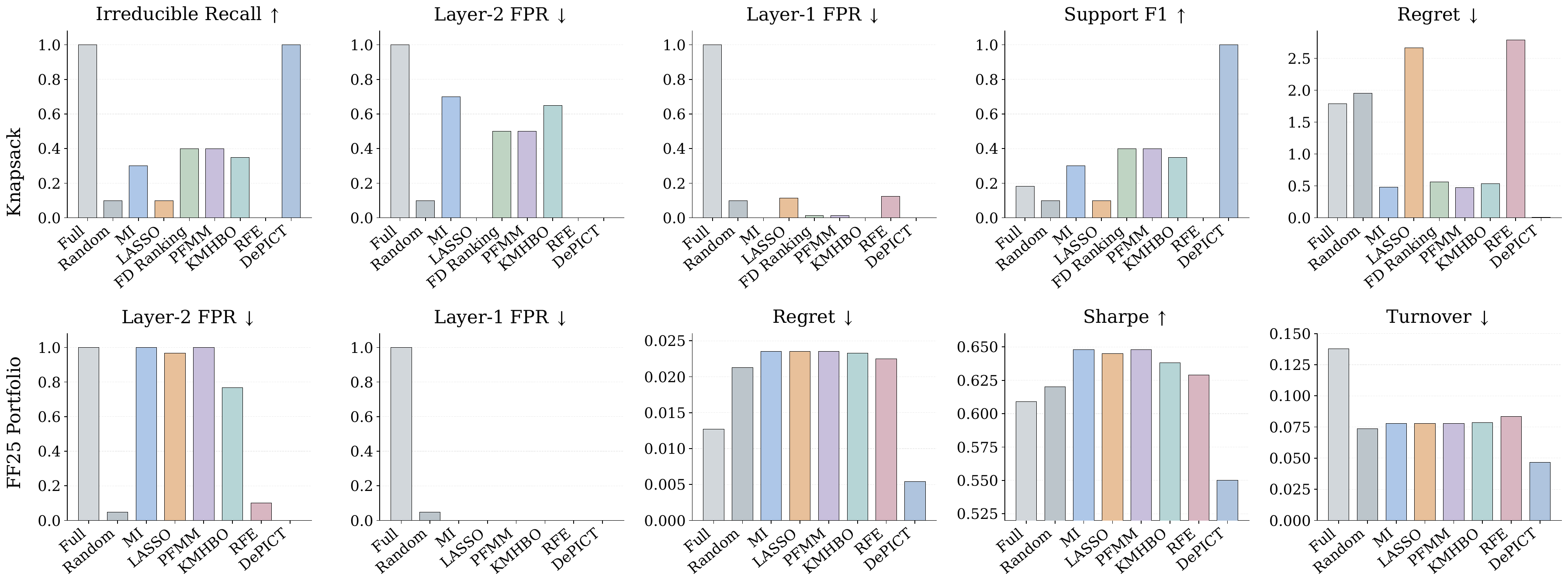}
\caption{\textbf{Layer-2 annihilation diagnostic.} Controlled knapsack (top) and FF25 portfolio (bottom) compare recovery of irreducible directions and Layer-2 nuisance variables. Mean over 10 seeds.}
\label{fig:knapsack_layer2_main}
\vspace{-0.4cm}
\end{figure*}
\vspace{-0.4cm}

\vspace{-0.1cm}
\subsection{Stability under perturbations}
\label{sec:exp_stability}
\vspace{-0.1cm}

The stability study tests whether DePICT's selected interface is robust to small changes in the operating context, rather than being an artifact of a single nominal point. We perturb a reference context $c_0$ and ask whether the top-$m$ interface remains similar and preserves downstream decision quality as the perturbation radius increases. Figure~\ref{fig:knapsack_stability_ablation} reports exact top-$m$ recovery,
Jaccard overlap, and regret across interface budgets
$m\in\{20,40,60\}$ as $\epsilon$ increases.

Exact top-$m$ recovery decreases with larger perturbations, consistent with
the ranking-gap result in Section~\ref{sec:stability}. Nevertheless, the
selected interfaces remain highly similar: for $m=60$, Jaccard overlap stays
above \(0.979\) even at \(\epsilon=5\times10^{-2}\), while regret remains
close to zero. Thus, exact ranking identity is local, but small context
perturbations can still preserve a highly similar interface and downstream
decision quality.

\begin{figure*}[t]
\centering
\includegraphics[width=\textwidth]{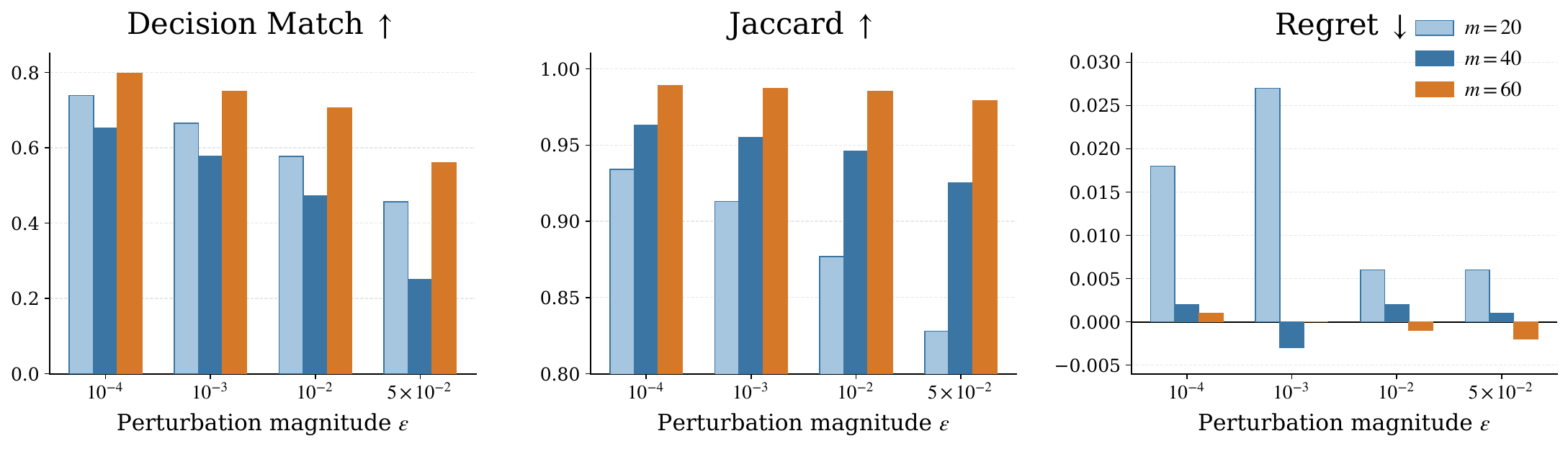}
\vspace{-0.6cm}
\caption{
\textbf{Perturbation stability on controlled knapsack.}
Decision match, Jaccard overlap, and regret are reported across interface budgets $m\in\{20,40,60\}$ as the perturbation magnitude $\epsilon$ increases. %Decision match decreases with larger perturbations, while Jaccard overlap remains high and regret stays close to zero. This supports the spectral-gap view: exact set identity is a local certificate, while high-overlap interfaces can remain decision-stable even when individual rankings change.
}
\label{fig:knapsack_stability_ablation}
\vspace{-0.4cm}
\end{figure*}

\vspace{-0.1cm}
\subsection{Runtime Scaling with Context Dimension}
\label{sec:runtime_scaling}
\vspace{-0.1cm}
The runtime study tests whether DePICT remains practical as the primitive context dimension grows, rather than becoming increasingly expensive with the number of observed inputs. We therefore vary $n_1$ while holding the optimizer
dimension fixed at $n_\theta=20$ and using $T=200$ representative contexts.  

Figure~\ref{fig:runtime_scaling} compares DePICT with LASSO, mutual information (MI), finite-difference (FD) ranking, and recursive feature elimination (RFE). DePICT requires approximately $5.2$-$5.4$\,s across the entire sweep from $n_1=200$ to $n_1=2000$. In contrast, LASSO increases from $1.79$\,s to $29.17$\,s, MI from $10.07$\,s to $108.05$\,s, FD ranking from $37.47$\,s to $372.78$\,s, and RFE from $1.98$\,s to $6.95$\,s. Thus, although some lightweight selectors are faster at small context dimension, DePICT shows substantially weaker dependence on the ambient context dimension over the tested range. This behavior follows from the structure of the sensitivity computation. At each representative context, DePICT solves a KKT system of size $n_\theta+|\mathcal A|$, which is determined by the downstream optimization problem rather than the ambient context dimension $n_1$. Increasing $n_1$
adds more sensitivity directions, but these can be evaluated using the same KKT factorization. 
As a result, when the optimizer size is fixed, the dominant linear-system cost changes little even as the number of context variables grows.

\begin{figure}[t]
    \centering
    \includegraphics[width=\linewidth]{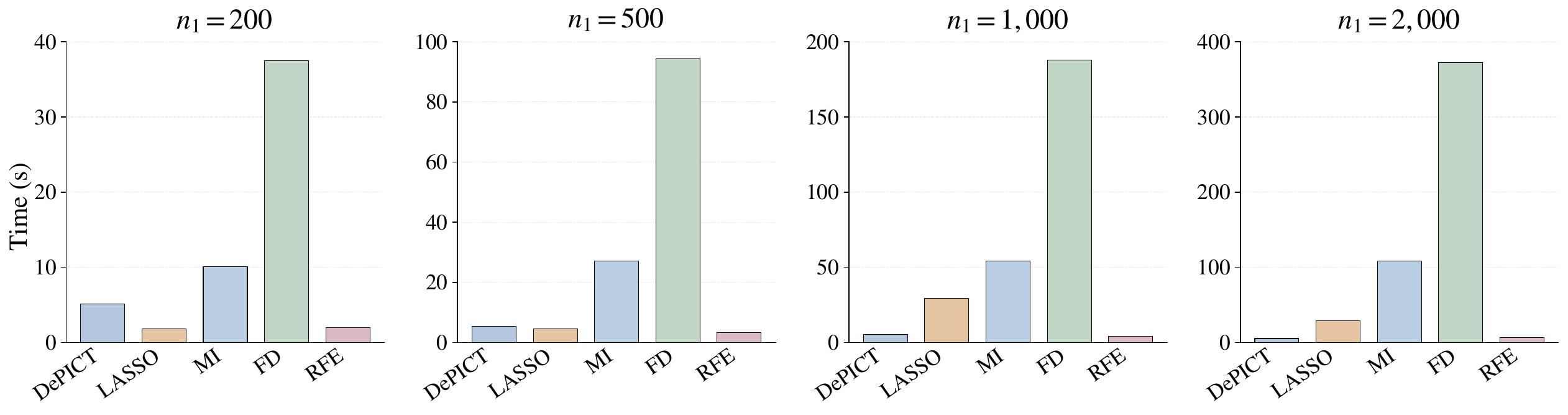}
    \caption{\textbf{Interface-selection runtime as context dimension increases.}
    Time comparison across different methods with increasing context dimension for Controlled Knapsack dataset.}
    \label{fig:runtime_scaling}
    \vspace{-0.4cm}
\end{figure}

% \section{Limitations and Future Work}
% \label{sec:limitations}

% Our theory is local and first-order: it assumes a stable active set and KKT regularity. It does not give exact guarantees at nonsmooth boundaries, for discrete decision maps, or under frequent active-set changes. Warcraft therefore serves as an out-of-regime stress test, evaluating whether the interface principle remains useful in a nonsmooth, solver-mediated pipeline.
% DePICT also has second-order computational cost. Policy-Jacobian scoring uses the Lagrangian Hessian, active-constraint Jacobian, and linearized KKT solves. The closed form involving \(H_L^{-1}\) and \(S_{\mathcal A}^{-1}\) is analytic; implementations need not form these inverses explicitly. Scaling to very large optimizers will require matrix-free or amortized sensitivity estimators.

%Finally, our statistical analysis is deliberately limited. We prove that discarding an irreducible direction creates structural first-order bias, and
%the appendix gives a conditional population-level risk floor. A complete learning theory connecting interface sufficiency, sample complexity, predictor class, Jacobian-estimation error, and active-set uncertainty remains open.

%Future work should extend decision-preserving interfaces to piecewise-smooth
%and nonsmooth optimization, develop finite-sample guarantees for correctly
%specified interfaces, and design scalable Jacobian-estimation procedures for
%large constrained, sequential, and multi-stage decision systems.
\vspace{-0.2cm}
\section{Conclusion and Limitations}
\label{sec:conclusion}
\vspace{-0.2cm}

DePICT recovers the decision-relevant interface exactly on both controlled benchmarks and directly validates the symbolic--optimizer gap: in the Layer-2 diagnostic, it reduces linear-predictor regret from $0.475$ to $0.009$ while retaining all irreducible coordinates and excluding both nuisance classes. The selected interfaces also remain stable under context perturbations and transfer effectively to Warcraft and FF25. Together, these results support the central principle of DePICT: interface design should be governed by whether
context moves the downstream optimizer, rather than by prediction utility or symbolic occurrence alone.

Our guarantees require stable active sets and KKT regularity, leaving
nonsmooth, discrete, and changing-active-set regimes for future work.
DePICT also relies on second-order sensitivity information and linearized KKT solves, motivating matrix-free or amortized estimators for larger optimization problems. Finally, the present analysis is local and first-order; extending it to finite perturbations and higher-order guarantees remains open. More broadly, DePICT determines which context the optimizer needs, while decision-focused learning determines how to use that context.
%First, our guarantees require stable active sets and KKT regularity; extending the theory to piecewise-smooth, nonsmooth, and discrete decision maps is an important direction. Second, DePICT currently relies on second-order sensitivity information through Lagrangian Hessians, active constraint Jacobians, and linearized KKT solves; scaling to very large optimizers calls for matrix free or amortized sensitivity estimators. Third, our analysis is first-order and local; developing finite sample and higher-order guarantees remains open. The broader implication is a natural division of labor: DePICT identifies structurally necessary context directions before training, and DFL methods optimize how to use them treating interface design and predictor training as separate, complementary problems.

%\input{Sections/Conclusion}

%\input{Sections/AIUse}
\section*{Acknowledgements}
This work was supported by CRADA22-0034-J003 and and NVIDIA Academic Grant Program.
\bibliography{mybibliography}
\bibliographystyle{iclr2027_conference}

\section*{Appendix}
\section{Related Work}
\label{app:related_work}
\paragraph{Decision-focused learning.}
Decision-focused learning trains predictive models according to the quality of the downstream decision rather than a surrogate prediction loss. This view appears in predict-and-optimize losses such as SPO/SPO+, ranking-based DFL, sequential decision settings, feasibility-aware formulations, and solver-free training for linear optimization problems \cite{elmachtoub2022spo,mandi2022rank,wang2021mdpfeatures,mandi2025feasibility,berden2025solverfree}. These works primarily ask how to train predictors so that predicted parameters lead to high-quality decisions. Our question is different: before training any predictor, which components of the context must be preserved so that the downstream constrained optimizer can still recover the correct local optimum?

\paragraph{Differentiable optimization layers.}
The solution-sensitivity Jacobian of Section~\ref{sec:problem} is obtained by
implicit differentiation of the KKT system, the same mechanism that underlies
differentiable optimization layers
\citep{gould2016differentiating,amos2017optnet,agrawal2019differentiable,blondel2022efficient}
and their use in task-based learning \citep{donti2017task}. Those works differentiate
\emph{through} a solver in order to train an upstream model; the Jacobian is an
intermediate of backpropagation. DePICT uses the same object for a different question:
which primitive coordinates have an identically zero column, when a column vanishes
despite symbolic participation (Theorem~\ref{thm:kkt-characterization}), and hence
which context can be discarded before any training. The Layer-2 and LP boundary cases
of Section~\ref{sec:separation} are also the regime in which solver gradients vanish,
which motivates surrogate losses, blackbox and perturbed estimators, and solver-free
training \citep{elmachtoub2022smart,mandi2020smart,vlastelica2020blackbox,berthet2020learning,berden2025solverfree};
DePICT treats that vanishing as information about relevance rather than as an obstacle
to training.

\paragraph{Task-specific data and feature relevance.}
The closest conceptual neighbor is \emph{What Data Enables Optimal Decisions?}, which characterizes which data directions are sufficient for recovering optimal decisions in linear optimization under cost uncertainty \cite{bennouna2025whatdata,ning2018data}. We share its task-specific notion of relevance, but study a different object. Rather than dataset sufficiency for linear programs, we characterize primitive context directions that preserve the local solution map of a nonlinear constrained problem~\cite{gupta2016pac}. This requires local KKT sensitivity rather than global polyhedral geometry~\cite{fiacco1976sensitivity}. Which leads us to task-aware feature selection and prediction-focused representation learning \cite{sharma2023pfmm,ren2020prediction}. These methods identify features useful for prediction, clustering, or latent-variable modeling. In contrast, our relevance notion is optimizer-induced: a context coordinate is relevant only if perturbing it changes the constrained optimum. This distinction matters because a feature may appear symbolically in constraints or objectives while being annihilated by the KKT geometry of the active set~\cite{creese2022geometric}.
\paragraph{Feature-selection baselines.}
We compare against standard feature-selection methods spanning multiple paradigms, including 
mutual-information screening \cite{peng2005mrmr}, LASSO \cite{tibshirani1996lasso}, 
recursive feature elimination (RFE) \cite{kohavi1997rfe,guyon2002rfe}, 
perturbation-based finite-difference (FD) ranking a removal-style attribution baseline \cite{lin2023removal}, and random subset selection. 
We further include task-aware feature selection via prediction-focused mixture models (PFMM) 
\cite{sharma2023pfmm}, and black-box metaheuristic approaches such as KMHBO 
\cite{ye2026kmhbo}. 
These methods select features based on prediction utility, sparsity, or heuristic or local perturbation signals, 
and do not account for constraint geometry. 
In contrast, our approach defines relevance through KKT sensitivity of the downstream optimizer, 
identifying context directions that are necessary to preserve the constrained optimum.

\paragraph{Positioning.}
Taken together, prior work explains how to train predictors for downstream optimization, how to reason about task-sufficient data in linear decision problems, and how parametric optima vary under perturbations. What remains missing is a characterization of which primitive context directions are genuinely necessary to preserve the local optimum of a constrained downstream task, and when symbolic participation overestimates optimizer relevance. DePICT fills this gap by introducing decision-preserving interfaces, characterizing them through local KKT sensitivity, and proving that discarding an irreducible direction creates structural first-order error that downstream learning cannot repair.

\begin{figure*}[t]
    \centering
    \includegraphics[width=0.95\linewidth]{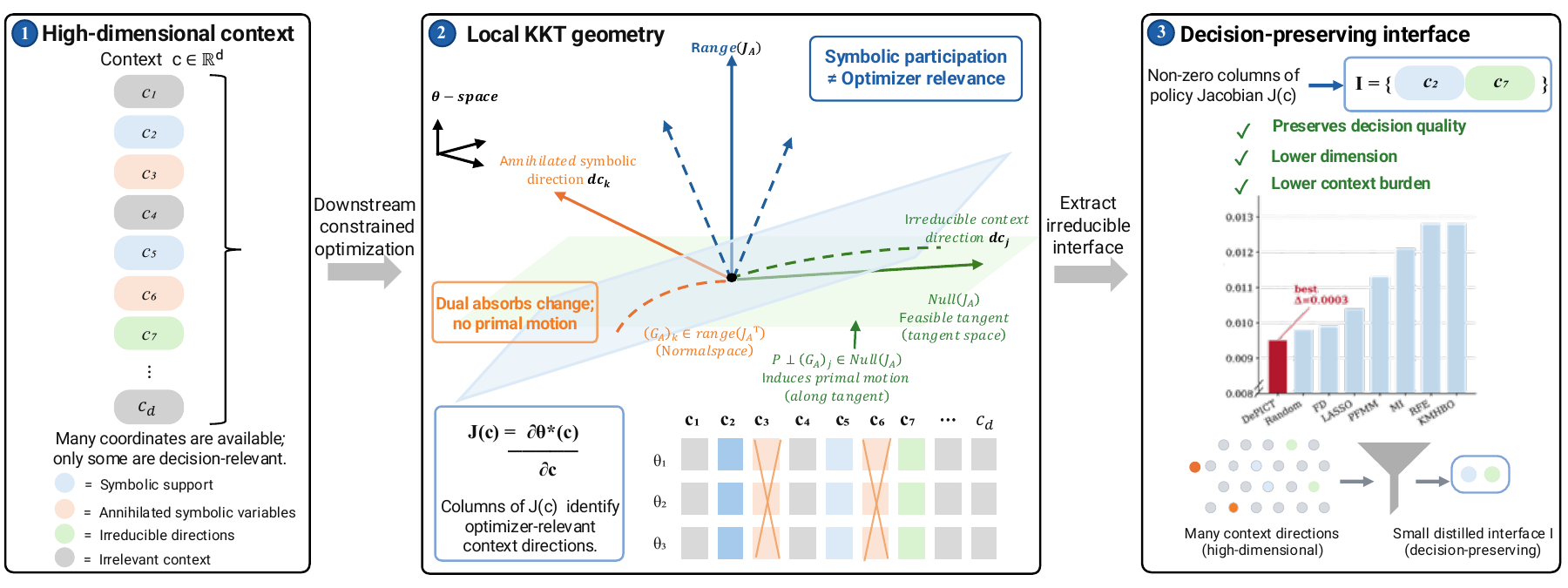}
    %\vspace{-0.3cm}
    \caption{DePICT constructs the information interface from the downstream decision.}
    \label{fig:Main}
    %\vspace{-0.4cm}
\end{figure*}

\section{Interface Construction Details}
\label{app:interface_generator}

Algorithm~\ref{alg:depict_detailed} and Fig~\ref{fig:Main} gives the full DePICT interface construction procedure. The input is an operating regime represented by
contexts \(\{c_t\}_{t=1}^T\), together with the downstream objective, constraints, and a solver or benchmark-specific continuous relaxation. For each context, DePICT computes how the oracle optimizer changes under primitive context perturbations. These local sensitivities are then aggregated across the regime to produce a reduced interface
\(\widehat{\mathcal I}\subseteq\{1,\ldots,n_1\}\).

The key computation is the solution-sensitivity Jacobian \(J(c_t)=\partial\theta^*(c_t)/\partial c_t\). DePICT computes this Jacobian by solving the linearized active-set KKT system. This is the
general formulation used by the method. When the stronger closed-form conditions hold, the same Jacobian can equivalently be computed through the geometric decomposition \(J=P^\perp G_{\mathcal A}-\Phi C_{\mathcal A}\). In implementation, the saddle-system form is preferred because it avoids explicitly forming
\(H_L^{-1}\) or \(S_{\mathcal A}^{-1}\).

For shortest-path benchmarks, DePICT does not differentiate the discrete argmin. Instead, it computes sensitivities through a strongly convex regularized flow QP. The same regularized flow formulation is used for downstream evaluation, so interface construction and decision evaluation operate on a common smooth optimization problem.

Each coordinate \(j\) receives two quantities at every representative context. The score \(s_t(j)=\|\operatorname{col}_j(J(c_t))\|_2\) measures the magnitude of the first-order optimizer response to perturbing coordinate \(j\). The indicator \(z_t(j)=\mathbf 1\{s_t(j)>\tau\}\) records whether the coordinate is numerically irreducible at context
\(c_t\). The threshold \(\tau\) avoids treating numerical noise as nonzero sensitivity.

After all contexts are processed, DePICT aggregates the scores \(\{s_t(j)\}_{t=1}^T\) into a regime-level score \(\bar s(j)\), and
computes the persistence frequency \(\pi(j)=T^{-1}\sum_t z_t(j)\). The final interface can then be selected in three modes. \textsc{Top-}\(m\) returns a fixed-budget interface by taking the \(m\) largest aggregated scores. \textsc{Freq-Filtered} returns coordinates whose irreducibility persists on at least an \(\alpha\)-fraction of representative contexts. \textsc{Strict-Union} returns every coordinate that is irreducible at least once in the sampled regime. These modes trade off compactness and coverage: \textsc{Top-}\(m\) is budget-controlled, \textsc{Freq-Filtered} suppresses unstable directions, and \textsc{Strict-Union} is conservative.
\begin{algorithm}[t]
\caption{Detailed DePICT Interface Construction}
\label{alg:depict_detailed}
\small
\begin{algorithmic}[1]
\Require Representative contexts $\{c_t\}_{t=1}^{T}$;
downstream solver or relaxation solver;
sensitivity routine;
mode in \{\textsc{Top-}$m$, \textsc{Freq-Filtered},
\textsc{Strict-Union}\};
budget $m$;
optional persistence thresholds $\alpha,\tau$
\Ensure Interface estimate $\widehat{\mathcal I}$

\For{$t=1,\dots,T$}
    \State Solve the downstream problem, or the benchmark-specific continuous relaxation, at $c_t$:
    \[
    \theta^*(c_t)\in
    \arg\min_{\theta} f(\theta;c_t)
    \quad
    \mathrm{s.t.}\quad
    m_i(\theta;c_t)\ge 0 .
    \]
    \State Recover the primal-dual point $(\theta^*(c_t),\lambda^*(c_t))$ and active set $\mathcal A_t$.
    \State Compute the solution-sensitivity Jacobian $J(c_t)$ by solving the linearized active-set KKT system:
    \[
    K_{\mathcal A_t}(c_t)
    \begin{bmatrix}
    J(c_t)\\
    \Lambda_c(c_t)
    \end{bmatrix}
    =
    \begin{bmatrix}
    G_{\mathcal A_t}(c_t)\\
    -C_{\mathcal A_t}(c_t)
    \end{bmatrix}.
    \]
    \Statex \hspace{\algorithmicindent}\textit{When the closed-form conditions hold, equivalently compute}
    \[
    J(c_t)
    =
    P^\perp(c_t)G_{\mathcal A_t}(c_t)
    -
    \Phi(c_t)C_{\mathcal A_t}(c_t).
    \]
    \For{$j=1,\dots,n_1$}
        \State $s_t(j)\gets \|\operatorname{col}_j(J(c_t))\|_2$
        \State $z_t(j)\gets \mathbf{1}\{s_t(j)>\tau\}$
    \EndFor
\EndFor

\For{$j=1,\dots,n_1$}
    \State Aggregate score:
    \[
    \bar s(j)\gets \mathsf{Agg}\bigl(\{s_t(j)\}_{t=1}^T\bigr)
    \]
    \State Estimate persistence:
    \[
    \pi(j)\gets \frac{1}{T}\sum_{t=1}^T z_t(j)
    \]
\EndFor

\If{mode = \textsc{Top-}$m$}
    \State $\widehat{\mathcal I}\gets$ indices of the $m$ largest values of $\bar s(j)$
\ElsIf{mode = \textsc{Freq-Filtered}}
    \State $\widehat{\mathcal I}\gets \{j:\pi(j)\ge \alpha\}$
\ElsIf{mode = \textsc{Strict-Union}}
    \State $\widehat{\mathcal I}\gets \{j:\pi(j)>0\}$
\EndIf

\State \Return $\widehat{\mathcal I}$
\end{algorithmic}
\end{algorithm}

\paragraph{Computational cost.}
For each representative context, exact DePICT scoring requires one linearized KKT solve with saddle matrix dimension \(n_\theta+|\mathcal A_t|\). A dense implementation may factor this
matrix once per context and reuse the factorization across multiple right-hand sides. The algorithm does not require explicit matrix
inverses, and the saddle-system form supports matrix-free KKT solves, Hessian-vector products, adjoint products, randomized score estimation,
and parallel computation across contexts. Thus the exact implementation used in our benchmarks is not the only route; large-scale deployments can trade exact Jacobian recovery for approximate or amortized sensitivity scores while preserving the same optimizer-relevance criterion.

\section{Derivations}
\subsection{Derivation of the solution-sensitivity Jacobian}
\label{app:jacobian-derivation}

Each coordinate $j$ receives the local sensitivity score
\[
s_t(j)=\|\operatorname{col}_j(J(c_t))\|_2,
\]
which measures the magnitude of the first-order optimizer response to
perturbing coordinate $j$. The main TOP-$m$ construction averages these
scores across representative contexts,
\[
\bar{s}(j)=\frac{1}{T}\sum_{t=1}^{T}s_t(j),
\]
and returns the $m$ coordinates with largest $\bar{s}(j)$.

For the optional persistence-based variants, we additionally define
$z_t(j)=\mathbf{1}\{s_t(j)>\tau\}$ and
$\pi(j)=T^{-1}\sum_t z_t(j)$, where $\tau$ is used only to suppress
numerical zero-level noise. FREQ-FILTERED retains coordinates with
$\pi(j)\geq\alpha$, while STRICT-UNION retains coordinates with
$\pi(j)>0$.

\paragraph{Regularity conditions.}
We assume the following standard local regularity conditions on the
operating regime. \emph{LICQ} requires the gradients of the active
constraints to be linearly independent, so the active set defines a
well-posed local tangent space. \emph{SOSC} requires the Lagrangian
Hessian to be positive definite on feasible tangent directions, ensuring
that the KKT point is a strict local minimum rather than a saddle or flat
direction. \emph{Strict complementarity} requires every active inequality
constraint to have a strictly positive multiplier, separating active and
inactive constraints and preventing degenerate boundary behavior.
Together, these conditions make the active-set KKT system nonsingular and
allow the optimizer map \(c\mapsto(\theta^\star(c),\lambda^\star(c))\)
to be locally differentiable.
\paragraph{LICQ.}
Let \(\mathcal A(c)=\{i:m_i(\theta^\star(c);c)=0\}\) be the active set.
LICQ requires
\[
A_{\mathcal A}(c)
=
\bigl[\nabla_\theta m_i(\theta^\star(c);c)\bigr]_{i\in\mathcal A}
\]
to have full row rank. Equivalently, the active constraint gradients are
linearly independent.

\paragraph{SOSC.}
Let
\[
H_L(c)=\nabla_{\theta\theta}^2 L(\theta^\star(c),\lambda^\star(c);c)
\]
be the Lagrangian Hessian. SOSC requires
\[
d^\top H_L(c)d>0
\qquad
\forall d\neq0
\ \text{such that}\
A_{\mathcal A}(c)d=0 .
\]
That is, curvature is positive along feasible tangent directions.

\paragraph{Strict complementarity.}
For every active inequality constraint,
\[
\lambda_i^\star(c)>0
\qquad \forall i\in\mathcal A(c).
\]
Inactive constraints have zero multipliers by complementary slackness.
Strict complementarity rules out active constraints with zero multiplier,
which are degenerate and can cause active-set instability.

\paragraph{Step 1: KKT residual system.}
At the oracle solution $(\theta^*(c),\lambda^*(c))$, the KKT
conditions on the active set are
\begin{align}
\nabla_\theta f(\theta^*(c);c)
-
\sum_{i\in\mathcal{A}(\mathcal{M})}
\lambda_i^*(c)\,\nabla_\theta m_i(\theta^*(c);c)
&=0,
\label{eq:kkt_stationarity_app}\\
m_i(\theta^*(c);c)&=0,
\qquad i\in\mathcal{A}(\mathcal{M}).
\label{eq:kkt_feasibility_app}
\end{align}
Define the residual
\begin{equation}
F(\theta,\lambda,c)
:=
\begin{bmatrix}
\nabla_\theta f(\theta;c)-A_{\mathcal A}(\theta,c)^\top\lambda\\[3pt]
m_{\mathcal A}(\theta;c)
\end{bmatrix},
\label{eq:kkt_residual_app}
\end{equation}
where
\[
A_{\mathcal A}(\theta,c)
:=
\bigl[\nabla_\theta m_i(\theta;c)\bigr]_{i\in\mathcal{A}(\mathcal{M})}
\in\mathbb{R}^{|\mathcal{A}(\mathcal{M})|\times n_\theta}.
\]
Then the KKT system is
\[
F(\theta^*(c),\lambda^*(c),c)=0.
\]

\paragraph{Step 2: Jacobian of the KKT residual with respect to $(\theta,\lambda)$.}
Differentiating $F$ with respect to $(\theta,\lambda)$ gives the saddle matrix
\begin{equation}
\frac{\partial F}{\partial(\theta,\lambda)}
=
\begin{bmatrix}
H_L(c) & -A_{\mathcal A}(c)^\top\\[3pt]
A_{\mathcal A}(c) & 0
\end{bmatrix}
=: \mathbf{M}(c),
\label{eq:saddle_matrix_app}
\end{equation}
where
\[
H_L(c)
=
\nabla_{\theta\theta}^2
\Bigl(
f(\theta^*(c);c)
-
\sum_{i\in\mathcal{A}(\mathcal{M})}
\lambda_i^*(c)m_i(\theta^*(c);c)
\Bigr)
\]
is the Lagrangian Hessian. Under LICQ and SOSC, $\mathbf{M}(c)$ is
invertible on $\mathcal{M}$.

When \(H_L\) and the Schur complement
\[
S_{\mathcal A}(c):=
A_{\mathcal A}(c)H_L(c)^{-1}A_{\mathcal A}(c)^\top
\]
are nonsingular, define
\[
\Phi(c):=H_L(c)^{-1}A_{\mathcal A}(c)^\top
S_{\mathcal A}(c)^{-1},\qquad
P^\perp(c):=H_L(c)^{-1}
-\Phi(c)A_{\mathcal A}(c)H_L(c)^{-1}.
\]
These objects are used only for the closed-form interpretation; DePICT
uses the saddle-system formulation and does not require forming
\(H_L^{-1}\), \(S_{\mathcal A}^{-1}\), or \(M(c)^{-1}\) explicitly.

\paragraph{Step 3: Jacobian of the KKT residual with respect to \(c\).}
To apply the implicit function theorem, differentiate \(F\) with respect
to \(c\), holding \((\theta,\lambda)\) fixed. For the upper block,
\[
\frac{\partial}{\partial c}
\left(\nabla_\theta f(\theta;c)-A_{\mathcal A}(\theta,c)^\top\lambda\right)
=
\nabla^2_{\theta c}f(\theta;c)
-
\sum_{i\in\mathcal A(M)}
\lambda_i
\frac{\partial^2 m_i}{\partial\theta\,\partial c}(\theta;c).
\]
Define
\[
G_{\mathcal A}(c)
:=
\sum_{i\in\mathcal A(M)}
\lambda_i^\star(c)
\frac{\partial^2 m_i}{\partial\theta\,\partial c}
(\theta^\star(c);c)
-
\nabla^2_{\theta c}f(\theta^\star(c);c).
\]
Then the upper block of \(\partial F/\partial c\) is
\(-G_{\mathcal A}(c)\). If \(f\) is independent of \(c\), the objective
mixed-curvature term vanishes and \(G_{\mathcal A}\) reduces to the
constraint-driven expression.
For the lower block,
\begin{equation}
\frac{\partial}{\partial c}m_{\mathcal A}(\theta;c)
=
C_{\mathcal A}(c),
\qquad
C_{\mathcal A}(c)
:=
\bigl[\partial m_i(\theta^*(c);c)/\partial c\bigr]_{i\in\mathcal{A}(\mathcal{M})}.
\label{eq:CA_appendix}
\end{equation}
Thus
\begin{equation}
\frac{\partial F}{\partial c}
=
\begin{bmatrix}
-\,G_{\mathcal A}(c)\\[3pt]
C_{\mathcal A}(c)
\end{bmatrix}.
\label{eq:F_c_appendix}
\end{equation}

\paragraph{Step 4: Apply the Implicit Function Theorem.}
Since
\[
F(\theta^*(c),\lambda^*(c),c)=0
\]
and $\partial F/\partial(\theta,\lambda)$ is invertible, the Implicit
Function Theorem yields
\begin{equation}
\frac{\partial(\theta^*,\lambda^*)}{\partial c}(c)
=
-
\left(
\frac{\partial F}{\partial(\theta,\lambda)}
\right)^{-1}
\frac{\partial F}{\partial c}.
\label{eq:IFT_appendix}
\end{equation}
Substituting \eqref{eq:saddle_matrix_app} and \eqref{eq:F_c_appendix},
the top block gives
\begin{align}
\frac{\partial\theta^*}{\partial c}(c)
&=
-
\begin{bmatrix}
P^\perp(c) & \Phi(c)
\end{bmatrix}
\begin{bmatrix}
-\,G_{\mathcal A}(c)\\[3pt]
C_{\mathcal A}(c)
\end{bmatrix}
\notag\\
&=
P^\perp(c)\,G_{\mathcal A}(c)
-
\Phi(c)\,C_{\mathcal A}(c).
\label{eq:policy_jacobian_appendix}
\end{align}
This is the solution-sensitivity Jacobian formula used throughout the paper.

\paragraph{Interpretation.}
The two terms in \eqref{eq:policy_jacobian_appendix} have distinct
meanings. The term $-\Phi(c)C_{\mathcal A}(c)$ is the
\emph{feasibility-tracking} component: it is the primal motion needed
to compensate for direct context-induced shifts in the active margins.
The term $P^\perp(c)G_{\mathcal A}(c)$ is the
\emph{geometry-rotation} component: it captures the part of active
constraint-gradient rotation that cannot be absorbed through multiplier
adjustment and therefore forces genuine motion of the optimum along the
feasible tangent space.

\paragraph{Objective dependence on context.}
The definition of $G_{\mathcal A}$ already includes both sources of stationarity
sensitivity:
\[
G_{\mathcal A}(c)
=
\sum_{i\in\mathcal A}
\lambda_i^\star(c)\nabla^2_{\theta c}m_i(\theta^\star(c);c)
-
\nabla^2_{\theta c}f(\theta^\star(c);c).
\]
When the objective is independent of $c$, the second term vanishes and
$G_{\mathcal A}$ reduces to the constraint-driven contribution.

\subsection{Symbolic Support vs.\ Optimizer Relevance: Full Statements and Proofs}
\label{app:separation}

\subsubsection{Symbolic support as a strict upper bound}

\begin{proposition}[Symbolic support as a strict upper bound]
\label{prop:symbolic-bound-app}
Under A1--A2:
\begin{enumerate}[label=\textup{(\roman*)},leftmargin=2em]
  \item \textup{(Necessary containment.)}
    $\Irr(\mathcal{M}) \subseteq \Supp(\mathcal{M})$.
    If $j \notin \Supp(\mathcal{M})$ then $v_j \equiv 0$ on $\mathcal{M}$.

  \item \textup{(Pointwise annihilation.)}
    $v_j(c) = 0$ if and only if
    \begin{equation}
      (C_{\mathcal A}(c))_{\cdot j} = 0
      \quad\text{and}\quad
      (G_{\mathcal A}(c))_{\cdot j} \in \range\bigl(A_{\mathcal A}(c)^\top\bigr).
      \label{eq:annihilation_app}
    \end{equation}
    Equivalently:
    \[
      (C_{\mathcal A}(c))_{\cdot j}=0
      \qquad\text{and}\qquad
      P^\perp(c)(G_{\mathcal A}(c))_{\cdot j}=0.
    \]

  \item \textup{(Regime-level annihilation.)}
    $j \notin \Irr(\mathcal{M})$ if and only if
    \eqref{eq:annihilation_app} holds for every $c\in\mathcal{M}$.
    In generic nondegenerate regimes, it is enough to verify this on an
    open dense subset.

  \item \textup{(Two-layer structure.)}
    $\Supp(\mathcal{M}) \setminus \Irr(\mathcal{M})$ decomposes into:
    \begin{itemize}[leftmargin=1.5em]
      \item \textbf{Layer~1:}
        $j \in \Supp_{\mathrm{full}}\setminus\Supp(\mathcal{M})$,
        i.e.\ $c_j$ appears only in inactive constraints and
        $(C_{\mathcal A})_{\cdot j}\equiv
        (G_{\mathcal A})_{\cdot j}\equiv 0$;
      \item \textbf{Layer~2:}
        $j \in \Supp(\mathcal{M})\setminus\Irr(\mathcal{M})$,
        i.e.\ $(C_{\mathcal A})_{\cdot j}\equiv 0$,
        $(G_{\mathcal A})_{\cdot j}\not\equiv 0$, and
        $(G_{\mathcal A})_{\cdot j}\in\range(A_{\mathcal A}^\top)$
        throughout $\mathcal{M}$.
    \end{itemize}
\end{enumerate}
\end{proposition}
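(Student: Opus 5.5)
The core step is part (ii); parts (i), (iii) and (iv) then follow by bookkeeping over $\mathcal M$. Fix $c\in\mathcal M$ and a coordinate $j$. Under A1--A2, which we read as the regularity of Assumption~\ref{assump:regularity}, the saddle matrix $K_{\mathcal A}(c)$ in \eqref{eq:policy_jacobian_saddle} is nonsingular. This follows from LICQ and SOSC by the standard argument. Suppose $K_{\mathcal A}(c)(d,\mu)=0$. Then $A_{\mathcal A}d=0$ and $H_Ld=A_{\mathcal A}^\top\mu$, so $d^\top H_L d=(A_{\mathcal A}d)^\top\mu=0$. SOSC then forces $d=0$, and LICQ (full row rank of $A_{\mathcal A}$) forces $\mu=0$.

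Column $j$ of \eqref{eq:policy_jacobian_saddle} reads
\[
H_L v_j - A_{\mathcal A}^\top (\Lambda_c)_{\cdot j} = (G_{\mathcal A})_{\cdot j},
\qquad
A_{\mathcal A} v_j = -(C_{\mathcal A})_{\cdot j}.
\]
For ($\Rightarrow$), set $v_j=0$. The second equation gives $(C_{\mathcal A})_{\cdot j}=0$. The first gives $(G_{\mathcal A})_{\cdot j}=-A_{\mathcal A}^\top(\Lambda_c)_{\cdot j}\in\range(A_{\mathcal A}^\top)$. For ($\Leftarrow$), write $(G_{\mathcal A})_{\cdot j}=A_{\mathcal A}^\top w$. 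Then $(0,-w)$ solves the column system. By uniqueness of the solution (nonsingularity), $v_j=0$ and $(\Lambda_c)_{\cdot j}=-w$. This is exactly Theorem~\ref{thm:kkt-characterization}.

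The equivalent $P^\perp$ form needs $P^\perp u=0\iff u\in\range(A_{\mathcal A}^\top)$ whenever $H_L$ and $S_{\mathcal A}$ are invertible. For the forward direction of ``$\Leftarrow$'', take $u=A_{\mathcal A}^\top w$. Then $P^\perp u=H_L^{-1}A_{\mathcal A}^\top w-H_L^{-1}A_{\mathcal A}^\top S_{\mathcal A}^{-1}S_{\mathcal A}w=0$. Conversely, $P^\perp u=0$ gives $H_L^{-1}u=H_L^{-1}A_{\mathcal A}^\top S_{\mathcal A}^{-1}A_{\mathcal A}H_L^{-1}u$, and multiplying by $H_L$ shows $u\in\range(A_{\mathcal A}^\top)$.

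For (i), take $j\notin\Supp(\mathcal M)$. Then both columns vanish at every $c\in\mathcal M$, and $0\in\range(A_{\mathcal A}^\top)$. By (ii), $v_j\equiv0$, so $j\notin\Irr(\mathcal M)$. This is the contrapositive of $\Irr(\mathcal M)\subseteq\Supp(\mathcal M)$.

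For (iii), the definition \eqref{eq:irr_set} gives $j\notin\Irr(\mathcal M)$ iff $v_j(c)=0$ for all $c\in\mathcal M$. By (ii), this holds iff \eqref{eq:annihilation_app} holds pointwise everywhere. For the dense-subset remark, I would use continuity of $c\mapsto v_j(c)$. This follows from the implicit function theorem together with $C^2$ data, which make $(\theta^\star,\lambda^\star)$ $C^1$ and hence $J$ continuous. If $v_j=0$ on a dense subset of $\mathcal M$, continuity gives $v_j\equiv0$ on $\mathcal M$ (taking $\mathcal M$ open, or at least with dense interior). I would state this as the sufficient direction, which is what the remark needs.

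For (iv), I partition $\Supp_{\mathrm{full}}\setminus\Irr(\mathcal M)$ according to membership in $\Supp(\mathcal M)$, using $\Supp(\mathcal M)\subseteq\Supp_{\mathrm{full}}$. That inclusion holds because a coordinate not appearing symbolically has identically zero partial derivatives. Coordinates outside $\Supp(\mathcal M)$ have columns that vanish identically, which gives Layer 1. If $c_j$ also appeared in an active constraint, I would simply characterize Layer 1 by the vanishing columns rather than by ``inactive only''. Coordinates in $\Supp(\mathcal M)\setminus\Irr(\mathcal M)$ satisfy the pointwise conditions for all $c$ by (iii). Since $j\in\Supp(\mathcal M)$, some column is nonzero somewhere, and it cannot be the $C$-column. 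Hence $(G_{\mathcal A})_{\cdot j}\not\equiv0$, which gives Layer 2.

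The main obstacle is invertibility of the saddle matrix; everything else is linear algebra. A secondary subtlety is the informal "generic, open dense" clause in (iii) and the "only in inactive constraints" phrasing in (iv), which I would make precise through continuity and column vanishing respectively.
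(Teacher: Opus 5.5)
Your proposal is correct and follows the paper's proof essentially step for step. The shared steps are nonsingularity of the saddle matrix via LICQ and SOSC, reading off column $j$ of the linearized KKT system for both directions of (ii), the same Schur-complement computation for the $P^\perp$ form, and pointwise bookkeeping for (iii)--(iv). The one real difference is that you derive (i) from the uniqueness argument in (ii), whereas the paper uses the closed form $P^\perp G_{\mathcal A}-\Phi C_{\mathcal A}$; your route is slightly stronger because it does not need $H_L$ and $S_{\mathcal A}$ to be invertible. Your continuity reading of the dense-subset clause and your column-vanishing reading of Layer~1 are sound tightenings of points the paper leaves informal.
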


\begin{proof}
\textbf{(i).}
If $j\notin\Supp(\mathcal{M})$, then by definition
\[
(C_{\mathcal A}(c))_{\cdot j}\equiv 0,
\qquad
(G_{\mathcal A}(c))_{\cdot j}\equiv 0
\qquad
\forall c\in\mathcal{M}.
\]
Hence
\[
v_j(c)
=
P^\perp(c)(G_{\mathcal A}(c))_{\cdot j}
-
\Phi(c)(C_{\mathcal A}(c))_{\cdot j}
=
0
\]
for all $c\in\mathcal{M}$.

\medskip
\textbf{(ii).}
Differentiate the KKT residual
\[
F(\theta^*(c),\lambda^*(c),c)
=
\bigl(
\nabla_\theta \mathcal{L}(\theta^*(c),\lambda^*(c),c),
m_{\mathcal A}(\theta^*(c);c)
\bigr)
=
0
\]
with respect to $c_j$.
Using the $C^1$ regularity of $(\theta^*,\lambda^*)$ on $\mathcal{M}$,
the stationarity block yields
\[
H_L(c)\,v_j(c)-A_{\mathcal A}(c)^\top \mu_j(c)
=
(G_{\mathcal A}(c))_{\cdot j},
\]
while the feasibility block yields
\[
A_{\mathcal A}(c)\,v_j(c)
=
-(C_{\mathcal A}(c))_{\cdot j},
\]
where $\mu_j(c):=\partial \lambda^*(c)/\partial c_j$.
Thus
\begin{equation}
\mathbf M(c)
\begin{pmatrix}
v_j(c)\\
\mu_j(c)
\end{pmatrix}
=
\begin{pmatrix}
(G_{\mathcal A}(c))_{\cdot j}\\[2pt]
-(C_{\mathcal A}(c))_{\cdot j}
\end{pmatrix},
\qquad
\mathbf M(c)
:=
\begin{pmatrix}
H_L(c) & -A_{\mathcal A}(c)^\top\\
A_{\mathcal A}(c) & 0
\end{pmatrix}.
\label{eq:kkt_saddle_app}
\end{equation}
By Lemma~\ref{lem:kkt-nonsingular}, \(M(c)\) is invertible, so the
solution is unique.

For the forward implication, if $v_j(c)=0$, then the lower block of
\eqref{eq:kkt_saddle_app} gives
\[
(C_{\mathcal A}(c))_{\cdot j}=0,
\]
and the upper block gives
\[
(G_{\mathcal A}(c))_{\cdot j}
=
-\,A_{\mathcal A}(c)^\top \mu_j(c)
\in
\range(A_{\mathcal A}(c)^\top).
\]

For the reverse implication, suppose
\[
(C_{\mathcal A}(c))_{\cdot j}=0,
\qquad
(G_{\mathcal A}(c))_{\cdot j}=A_{\mathcal A}(c)^\top \bar\mu
\]
for some $\bar\mu$.
Then $(v_j,\mu_j)=(0,-\bar\mu)$ satisfies both blocks of
\eqref{eq:kkt_saddle_app}, and uniqueness implies $v_j(c)=0$.

For the projector equivalence, let $u=(G_{\mathcal A}(c))_{\cdot j}$.
If $u=A_{\mathcal A}(c)^\top w$, then
\begin{align*}
P^\perp(c)u
&=
\Bigl(
H_L(c)^{-1}
-
\Phi(c)A_{\mathcal A}(c)H_L(c)^{-1}
\Bigr)
A_{\mathcal A}(c)^\top w \\
&=
H_L(c)^{-1}A_{\mathcal A}(c)^\top w
-
\Phi(c)\underbrace{A_{\mathcal A}(c)H_L(c)^{-1}A_{\mathcal A}(c)^\top}_{S_{\mathcal A}(c)}w \\
&=
H_L(c)^{-1}A_{\mathcal A}(c)^\top w
-
H_L(c)^{-1}A_{\mathcal A}(c)^\top S_{\mathcal A}(c)^{-1}S_{\mathcal A}(c)w \\
&=0.
\end{align*}
Conversely, if $P^\perp(c)u=0$, then
\[
H_L(c)^{-1}u
=
\Phi(c)A_{\mathcal A}(c)H_L(c)^{-1}u.
\]
Premultiplying by $H_L(c)$ gives
\[
u
=
H_L(c)\Phi(c)A_{\mathcal A}(c)H_L(c)^{-1}u
=
A_{\mathcal A}(c)^\top
S_{\mathcal A}(c)^{-1}
A_{\mathcal A}(c)H_L(c)^{-1}u
\in
\range(A_{\mathcal A}(c)^\top).
\]

\medskip
\textbf{(iii).}
By definition,
$j\notin\Irr(\mathcal{M})$ iff $v_j(c)=0$ for all $c\in\mathcal M$.
Applying part~(ii) pointwise gives the claim.
The open-dense formulation follows from continuity of $v_j(c)$ and the
fact that, in generic nondegenerate regimes, the zero/nonzero pattern of
a continuous field is determined away from lower-dimensional
degeneracies.

\medskip
\textbf{(iv).}
Layer~1 follows immediately from
\[
(C_{\mathcal A})_{\cdot j}\equiv 0,
\qquad
(G_{\mathcal A})_{\cdot j}\equiv 0.
\]
Layer~2 follows from part~(ii): if
$j\in \Supp(\mathcal M)\setminus \Irr(\mathcal M)$, then
$(G_{\mathcal A})_{\cdot j}\not\equiv 0$ but $v_j\equiv 0$, which
forces
\[
(C_{\mathcal A})_{\cdot j}\equiv 0,
\qquad
(G_{\mathcal A})_{\cdot j}\in\range(A_{\mathcal A}^\top)
\quad\forall c\in\mathcal M.
\]
\end{proof}

\begin{remark}[Geometry of Layer-2 annihilation]
\label{rem:geometry-app}
The space $\range(A_{\mathcal A}(c)^\top)$ is the constraint-normal
subspace, while $\Null(A_{\mathcal A}(c))$ is the feasible-tangent
subspace. A context perturbation may rotate the active constraint
gradients by $(G_{\mathcal A}(c))_{\cdot j}$. If that rotation lies
entirely in the constraint-normal space, it can be absorbed by
multiplier adjustment with zero primal displacement. The operator
$P^\perp(c)$ extracts precisely the tangent component of this rotation.
Hence Layer-2 annihilation occurs exactly when the curvature-induced
rotation has no feasible-tangent component.
\end{remark}

\begin{lemma}[Nonsingularity of the active-set KKT matrix]
\label{lem:kkt-nonsingular}
Suppose \(A_A\) has full row rank and \(H_L\) is positive definite on
\(\ker A_A\). Then
\[
K_A =
\begin{bmatrix}
H_L & -J_A^\top\\
J_A & 0
\end{bmatrix}
\]
is nonsingular.
\end{lemma}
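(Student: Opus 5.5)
We will prove nonsingularity by showing that the kernel of $K_A$ is trivial. Since $K_A$ is square, this suffices. Throughout, $J_A$ denotes the same matrix as $A_A$; the statement uses both names for the active constraint Jacobian.

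Suppose $K_A(d,\mu)^\top=0$ for some $d\in\mathbb{R}^{n_\theta}$ and $\mu\in\mathbb{R}^{|\mathcal A|}$. Written out, the two block equations are
\[
H_L d - A_A^\top \mu = 0,
\qquad
A_A d = 0 .
\]

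\textbf{Step 1: $d=0$.} The second equation places $d$ in $\ker A_A$. Premultiplying the first equation by $d^\top$ gives
\[
d^\top H_L d = d^\top A_A^\top \mu = (A_A d)^\top \mu = 0 .
\]
Positive definiteness of $H_L$ on $\ker A_A$ (the SOSC hypothesis) then forces $d=0$.

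\textbf{Step 2: $\mu=0$.} With $d=0$, the first equation reduces to $A_A^\top\mu=0$. Full row rank of $A_A$ (LICQ) means $A_A^\top$ has trivial kernel, so $\mu=0$.

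Hence $\ker K_A=\{0\}$ and $K_A$ is nonsingular.

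The only point needing care is the sign convention: the off-diagonal blocks are $-A_A^\top$ and $+A_A$ rather than the symmetric form. The cross term $d^\top A_A^\top\mu$ still vanishes because $A_A d=0$, so the sign is harmless. No step is a real obstacle; the argument is the standard saddle-point nonsingularity proof, and it does not need $H_L$ to be invertible on the whole space.
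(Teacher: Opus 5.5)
Your proof is correct and matches the paper's argument essentially step for step: you show the kernel is trivial by using $A_A d=0$ to eliminate the cross term, then positive definiteness on $\ker A_A$ to get $d=0$, and finally full row rank to get $\mu=0$. Reading $J_A$ as $A_A$ is also how the paper's own proof treats that notational slip.
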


\begin{proof}
It suffices to show that the null space of \(K_{\mathcal A}\) is trivial. Suppose
\[
K_{\mathcal A}
\begin{bmatrix}
u\\ \nu
\end{bmatrix}
=
\begin{bmatrix}
H_L & -A_{\mathcal A}^{\top}\\
A_{\mathcal A} & 0
\end{bmatrix}
\begin{bmatrix}
u\\ \nu
\end{bmatrix}
=0 .
\]
Then
\[
H_Lu-A_{\mathcal A}^{\top}\nu=0,
\qquad
A_{\mathcal A}u=0 .
\]
Multiplying the first equation by \(u^\top\) gives
\[
u^\top H_Lu-u^\top A_{\mathcal A}^{\top}\nu=0.
\]
Since \(A_{\mathcal A}u=0\),
\[
u^\top A_{\mathcal A}^{\top}\nu
=
(A_{\mathcal A}u)^\top\nu
=
0.
\]
Hence
\[
u^\top H_Lu=0.
\]
Because \(u\in\ker(A_{\mathcal A})\) and \(H_L\) is positive definite on
\(\ker(A_{\mathcal A})\), this implies \(u=0\). Substituting \(u=0\) into
the first block equation yields
\[
A_{\mathcal A}^{\top}\nu=0.
\]
Since \(A_{\mathcal A}\) has full row rank, \(A_{\mathcal A}^{\top}\) has
full column rank, and therefore \(\nu=0\). Thus the only vector in the null
space of \(K_{\mathcal A}\) is the zero vector, so \(K_{\mathcal A}\) is
nonsingular.
\end{proof}

\subsubsection{Affine collapse}

\textbf{Corollary 11 (Affine-collapse).}
Suppose all active constraints are affine in $(\theta,c)$ and the objective
has no mixed context--decision curvature in the directions under
consideration,
\[
    \nabla_{\theta c}^2 f(\theta^\star(c);c)=0.
\]
Then $G_A\equiv0$ and
\[
    \operatorname{Irr}(M)
    =
    \operatorname{Supp}(M)
    =
    \left\{
        j:(C_A)_{\cdot j}\not\equiv0
    \right\}.
\]

\textit{Proof.}
Affineness of the active constraints gives
\[
    \nabla_{\theta c}^2 m_i\equiv0,
\]
while the assumed absence of mixed objective curvature gives
\[
    \nabla_{\theta c}^2 f\equiv0.
\]
Hence $G_A\equiv0$.

\subsubsection{Structural characterisation of Layer-2 annihilation}

\begin{theorem}[Structural characterisation of Layer-2 annihilation]
\label{thm:structural-app}
Under A1-A2, fix \(j\) such that
\[
(C_A(c))_{\cdot j}\equiv 0
\quad\text{and}\quad
\nabla^2_{\theta c_j} f(\theta^\star(c);c)\equiv 0
\quad\text{on } M.
\]
Then \(j\notin \mathrm{Irr}(M)\) in each of the following cases:
\begin{enumerate}[label=\textup{(\Roman*)},leftmargin=2em]
    \item each active constraint has the form
    $m_i(\theta;c)=c_j\,g_i(\theta,c_{-j})$ with $c_j>0$;
    \item there exist continuous scalars $\beta_i(c)$ such that
    \[
    \frac{\partial^2 m_i}{\partial\theta\,\partial c_j}(c)
    =
    \beta_i(c)\,\nabla_\theta m_i(\theta^*(c);c)
    \quad
    \forall i\in\mathcal A(\mathcal M),\ \forall c\in\mathcal M;
    \]
    \item $|\mathcal A(\mathcal M)|=1$ and the active constraint has
    the scaling form of \textup{(I)}.
\end{enumerate}
\end{theorem}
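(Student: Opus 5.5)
The plan is to reduce every case to the pointwise annihilation criterion of Proposition~\ref{prop:symbolic-bound-app}(ii), equivalently Theorem~\ref{thm:kkt-characterization}. Since $(C_{\mathcal A})_{\cdot j}\equiv0$ by hypothesis, it suffices to show that $(G_{\mathcal A}(c))_{\cdot j}\in\range(A_{\mathcal A}(c)^\top)$ for every $c\in\mathcal M$. Part~(iii) of that proposition then gives $v_j\equiv0$ on $\mathcal M$, that is, $j\notin\Irr(\mathcal M)$.

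\textbf{Case (II) first.} By hypothesis $\nabla^2_{\theta c_j}f(\theta^\star(c);c)\equiv0$. The $j$th column of $G_{\mathcal A}$ is therefore
\[
(G_{\mathcal A}(c))_{\cdot j}=\sum_{i\in\mathcal A}\lambda_i^\star(c)\,\frac{\partial^2 m_i}{\partial\theta\,\partial c_j}(\theta^\star(c);c)=\sum_{i\in\mathcal A}\lambda_i^\star(c)\beta_i(c)\,\nabla_\theta m_i(\theta^\star(c);c).
\]
The right-hand side equals $A_{\mathcal A}(c)^\top w(c)$ with $w_i=\lambda_i^\star\beta_i$, so it lies in the range of $A_{\mathcal A}^\top$. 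The reverse direction of Proposition~\ref{prop:symbolic-bound-app}(ii) also identifies the multiplier response: $\mu_j(c)=-w(c)$, with $v_j(c)=0$. Continuity of $\beta_i$ is not needed for the pointwise argument. It only guarantees that $\mu_j$ is continuous, which is consistent with the $C^1$ primal--dual branch.

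\textbf{Case (I) reduces to (II).} For a constraint that does not depend on $c_j$, the mixed partial vanishes, so $\beta_i=0$. For a scaled constraint $m_i=c_jg_i(\theta,c_{-j})$, differentiate to get $\partial^2 m_i/\partial\theta\,\partial c_j=\nabla_\theta g_i=c_j^{-1}\nabla_\theta m_i$; this is legitimate because $c_j>0$. Hence $\beta_i=c_j^{-1}$, which is continuous on $\mathcal M$. As a consistency check, the hypothesis $(C_{\mathcal A})_{\cdot j}=0$ is in fact automatic here: $\partial m_i/\partial c_j=g_i(\theta^\star,c_{-j})=m_i(\theta^\star;c)/c_j=0$ because the constraint is active. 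I would note this but still rely on the stated hypothesis.

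\textbf{Case (III) is a special case of (I).} It is the situation $|\mathcal A|=1$, so it follows immediately. It is exactly the mechanism of Example~\ref{ex:scaling}.

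The main subtlety is the bookkeeping of which coordinates are held fixed when computing mixed partials. In (I), the functions $g_i$ may depend on $c_{-j}$, but they are independent of $c_j$, so $\partial_{c_j}\nabla_\theta g_i=0$ and no extra term appears. Once this is checked, the rest of the argument is a direct application of the range criterion and needs no further estimates.
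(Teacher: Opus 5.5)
Your proposal is correct and follows the paper's proof: both use the hypothesis $(C_{\mathcal A})_{\cdot j}\equiv 0$ to reduce to the range condition of Proposition~\ref{prop:symbolic-bound-app}(ii), show $(G_{\mathcal A})_{\cdot j}=\sum_{i}\lambda_i^\star\beta_i\nabla_\theta m_i\in\range(A_{\mathcal A}^\top)$ with $\beta_i=c_j^{-1}$ in the scaling case, and treat (III) as a special case of (I). The only difference is cosmetic: you obtain (I) from (II) instead of computing it directly, which is the same reduction the paper states in the main text after Proposition~\ref{prop:structural-main}.
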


\begin{proof}
In each case, it is enough to verify the range condition in
\eqref{eq:annihilation_app}.

For \textup{(I)}, if
\[
m_i(\theta;c)=c_j\,g_i(\theta,c_{-j}),
\qquad c_j>0,
\]
then active-constraint tightness implies
\[
g_i(\theta^*(c),c_{-j})=0.
\]
Moreover,
\[
\frac{\partial^2 m_i}{\partial\theta_k\,\partial c_j}
=
\frac{\partial g_i}{\partial\theta_k},
\]
so
\[
(G_{\mathcal A}(c))_{\cdot j}
=
\sum_{i\in\mathcal A(\mathcal M)}
\lambda_i^*(c)\,\nabla_\theta g_i
=
\sum_{i\in\mathcal A(\mathcal M)}
\frac{\lambda_i^*(c)}{c_j}\,\nabla_\theta m_i(\theta^*(c);c)
\in
\range(A_{\mathcal A}(c)^\top).
\]

For \textup{(II)},
\[
(G_{\mathcal A}(c))_{\cdot j}
=
\sum_{i\in\mathcal A(\mathcal M)}
\lambda_i^*(c)\beta_i(c)\,
\nabla_\theta m_i(\theta^*(c);c)
\in
\range(A_{\mathcal A}(c)^\top).
\]

For \textup{(III)}, the range $\range(A_{\mathcal A}(c)^\top)$ is
one-dimensional, and the scaling form reduces to case \textup{(I)}.
Thus the range condition holds there as well.
\end{proof}

\begin{remark}[When Layer-2 annihilation fails]
\label{rem:layer2-fail-app}
Consider
\[
m_1(\theta;c)
=
c_j(\theta_1+\theta_2-c_1)+\theta_1\theta_3-c_2.
\]
At the active constraint,
$(C_{\mathcal A}(c))_{\cdot j}=0$, but
\[
(G_{\mathcal A}(c))_{\cdot j}
=
\lambda_1^*(c)(1,1,0)^\top,
\qquad
\nabla_\theta m_1(\theta^*(c);c)
=
(c_j+\theta_3^*,\,c_j,\,\theta_1^*)^\top.
\]
For generic $\theta_1^*\neq 0$,
$(1,1,0)^\top$ is not proportional to
$(c_j+\theta_3^*,c_j,\theta_1^*)^\top$, so
$(G_{\mathcal A}(c))_{\cdot j}\notin\range(A_{\mathcal A}(c)^\top)$.
Thus $j$ remains irreducible. The distinction is geometric:
$c_j$ changes not just constraint scale, but constraint shape.
\end{remark}

\subsubsection{Feature-map necessity and structural warm-start error}

\begin{corollary}[Feature-map necessity]
\label{cor:necessity-app}
Let $c_j\in\Irr(\mathcal M)$ and let
$\psi:\mathcal M\to\mathbb R^d$ be a $C^1$ feature map satisfying
$D\psi(c)e_j=0$ for all $c\in\mathcal M$.
Then for every \(C^1\) downstream decision map
\(h:\mathbb{R}^{d}\to\mathbb{R}^{n_\theta}\),
while $v_j(c)\neq 0$ on an open subset of $\mathcal M$.
\end{corollary}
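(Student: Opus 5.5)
The statement as printed has lost its conclusion. We read it as saying that $D(h\circ\psi)(c)e_j=0$ for every $c\in\mathcal M$, while $v_j(c)\neq0$ on a nonempty (relatively) open subset of $\mathcal M$. The proof splits into these two claims, which are essentially independent.

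\textbf{First claim.} We invoke Theorem~\ref{thm:annihilation-main} pointwise. Fix any $c\in\mathcal M$ and a $C^1$ map $h$. By the chain rule,
\[
D(h\circ\psi)(c)e_j = Dh(\psi(c))\,D\psi(c)e_j.
\]
The hypothesis $D\psi(c)e_j=0$ holds at every point of $\mathcal M$, not just at a single context. Hence the right-hand side vanishes identically on $\mathcal M$.

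\textbf{Second claim.} We turn the regime-level membership $j\in\Irr(\mathcal M)$ into persistence on an open set. By \eqref{eq:irr_set}, there is some $c_0\in\mathcal M$ with $v_j(c_0)\neq0$. We then argue in three steps.
\begin{enumerate}[label=(\alph*),leftmargin=2em]
\item Under Assumption~\ref{assump:regularity}, the implicit function argument of Appendix~\ref{app:jacobian-derivation} applies. The residual $F$ in \eqref{eq:kkt_residual_app} is $C^1$ in $(\theta,\lambda,c)$, because $f$ and $m_i$ are $C^2$. Its partial Jacobian $\mathbf M(c)$ is nonsingular by Lemma~\ref{lem:kkt-nonsingular}. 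Therefore $(\theta^\star,\lambda^\star)$ is $C^1$ near $c_0$.
\item It follows that $J(c)$, given by \eqref{eq:IFT_appendix} as $-\mathbf M(c)^{-1}\,\partial F/\partial c$, is continuous. This uses continuity of matrix inversion on the open set of nonsingular matrices, together with continuity of $H_L$, $A_{\mathcal A}$, $G_{\mathcal A}$ and $C_{\mathcal A}$ evaluated along the solution branch.
\item Consequently $c\mapsto v_j(c)=J(c)e_j$ is continuous. The set $\{c: v_j(c)\neq0\}$ is then the preimage of the open set $\mathbb R^{n_\theta}\setminus\{0\}$. It is open relative to $\mathcal M$ and contains $c_0$.
\end{enumerate}

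Combining the two claims gives $D(h\circ\psi)e_j\equiv0$ on that open neighbourhood while $J e_j\neq0$ there. So the first-order mismatch of Corollary~\ref{cor:erm-main} holds on a set of positive volume, not just at a single point.

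\textbf{Main obstacle.} The only delicate point is the topology of $\mathcal M$. To get a genuinely open subset of $\mathbb R^{n_1}$, and hence positive volume, we need $c_0$ to lie in the interior of $\mathcal M$. Otherwise we need $\mathcal M$ to be open, as is implicit in the paper's treatment of the regime as a region with a fixed active set. We would first try to assume $\mathcal M$ is open. Failing that, we would use that the active set and the regularity conditions are stable under small perturbations: LICQ, SOSC and strict complementarity are open conditions. This lets us extend the branch to an open neighbourhood of $c_0$ and take the intersection with $\mathcal M$. We would then state the result as ``relatively open'' if $\mathcal M$ itself has empty interior.
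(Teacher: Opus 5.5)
Your proposal is correct and follows essentially the same route as the paper: the chain rule gives $D(h\circ\psi)(c)e_j=Dh(\psi(c))\,D\psi(c)e_j=0$ at every $c\in\mathcal M$, and continuity of $v_j$ extends $v_j(c_0)\neq0$ to an open neighbourhood of $c_0$. You go further than the paper by justifying that continuity through the implicit function theorem and Lemma~\ref{lem:kkt-nonsingular}, and by noting that the neighbourhood is only relatively open in $\mathcal M$ unless $c_0$ is an interior point; the paper leaves both points implicit.
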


\begin{proof}
By the chain rule,
\[
D(h\circ\psi)(c)e_j
=
Dh(\psi(c))\,D\psi(c)e_j
=
0.
\]
Since $c_j\in\Irr(\mathcal M)$, there exists $c_0$ with
$v_j(c_0)\neq 0$. Continuity of $v_j$ implies the same on an open
neighbourhood of $c_0$.
\end{proof}

\begin{remark}[Structural warm-start error]
\label{rem:warmstart-app}
Under the hypotheses of Corollary~\ref{cor:necessity-app},
\[
\theta^*(c+te_j)=\theta^*(c)+t\,v_j(c)+O(t^2),
\qquad
(h\circ\psi)(c+te_j)=(h\circ\psi)(c)+O(t^2).
\]
Therefore
\[
\|(h\circ\psi)(c+te_j)-\theta^*(c+te_j)\|
\ge
|t|\,\|v_j(c)\|-O(t^2)
=
\Theta(|t|).
\]
This is a structural, data-independent first-order error induced by
discarding an irreducible primitive direction.
\end{remark}

\subsection{Ranking Stability and Frequency Filtering}
\label{app:stability}

This appendix proves Proposition~\ref{prop:stability-main}, which
justifies local stability of Jacobian rankings and the use of
frequency-filtered interfaces across an operating regime.

\subsubsection{Lipschitz continuity of the solution-sensitivity Jacobian}

\begin{lemma}[Lipschitz bound for the solution-sensitivity Jacobian]
\label{lem:lip-jacobian}
Under A1-A2, for any compact \(K\subseteq\mathcal{M}\), the map
\(c\mapsto J(c)\) is Lipschitz on \(K\). In particular, there exists
\(\Lambda_K<\infty\) such that
\[
\|J(c)-J(c')\|_F
\le
\Lambda_K\|c-c'\|,
\qquad
\forall\,c,c'\in K .
\]
When the closed-form representation is valid, one may take
\[
\Lambda_K
\ge
\sup_{c\in K}
\left\|
\frac{\partial}{\partial c}
\bigl[
P^\perp(c)G_{\mathcal A}(c)-\Phi(c)C_{\mathcal A}(c)
\bigr]
\right\|_F .
\]
\end{lemma}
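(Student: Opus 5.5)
The approach is to write $J(c)$ as the upper block of $-K_{\mathcal A}(c)^{-1}\,\partial F/\partial c$, as in Step~4 of Appendix~\ref{app:jacobian-derivation}, and show that every factor is $C^1$ on an open neighbourhood of $K$. Then $J$ is $C^1$ there, and a compactness argument turns local $C^1$ into a global Lipschitz bound on $K$.

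\textbf{Regularity of the primal--dual branch.} Under Assumption~\ref{assump:regularity}, Lemma~\ref{lem:kkt-nonsingular} gives nonsingularity of $K_{\mathcal A}(c)$ at every $c\in\mathcal M$. The implicit function theorem applied to $F(\theta,\lambda,c)=0$ then gives that $c\mapsto(\theta^\star(c),\lambda^\star(c))$ is $C^1$ near each point of $\mathcal M$. Note that $F$ is $C^1$ because $f$ and $m_i$ are $C^2$.

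\textbf{Regularity of the entries of \eqref{eq:policy_jacobian_saddle}.} The blocks $H_L$, $A_{\mathcal A}$, $C_{\mathcal A}$ and $G_{\mathcal A}$ are built from second derivatives of $f$ and $m_i$, evaluated along the $C^1$ branch. For these to be $C^1$ in $c$, I need $f$ and $m_i$ to be $C^3$, or at least to have locally Lipschitz second derivatives. This is the main subtlety: with only the $C^2$ hypothesis stated in A1, $J$ is merely continuous.

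I would handle this in one of two ways. One option is to strengthen to $C^3$, which is implicitly needed for the displayed derivative bound on $\Lambda_K$ to make sense. The other is to assume locally Lipschitz Hessians. In that case each block is locally Lipschitz, being a composition of a Lipschitz map with a $C^1$ map. Matrix inversion is smooth on nonsingular matrices, so $K_{\mathcal A}(c)^{-1}$ is locally Lipschitz as well, because $c\mapsto K_{\mathcal A}(c)$ is continuous and nonsingular. Products of locally Lipschitz maps are locally Lipschitz, so $J$ is locally Lipschitz.

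\textbf{From local to uniform on $K$.} Here I need the convention that $\mathcal M$ is open, or that the assumption holds on an open set containing $K$. Cover $K$ by finitely many balls $B(c_k,r_k/2)$ such that on each $B(c_k,r_k)$ the map $J$ is Lipschitz with constant $L_k$. Let $\delta=\min_k r_k/2$. For $\|c-c'\|<\delta$, both points lie in a common $B(c_k,r_k)$, so $\|J(c)-J(c')\|\le L_k\|c-c'\|$. For $\|c-c'\|\ge\delta$, use
\[
\|J(c)-J(c')\|_F\le 2\sup_K\|J\|_F\,\|c-c'\|/\delta .
\]
The supremum is finite by continuity and compactness. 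Taking $\Lambda_K$ as the maximum of these constants proves the claim.

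\textbf{The explicit constant.} When the closed form \eqref{eq:policy_jacobian_formula} is valid and $K$ is convex, the mean value inequality along segments gives $\|J(c)-J(c')\|_F\le \sup_K\|DJ\|\,\|c-c'\|$, so any $\Lambda_K$ at least that supremum works. For nonconvex $K$, I would combine this with the covering argument above.
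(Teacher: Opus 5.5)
Your proposal is correct and follows the same route as the paper's proof: the implicit function theorem gives a $C^1$ primal--dual branch, the linearized KKT data and $K_{\mathcal A}(c)^{-1}$ are locally Lipschitz, compactness yields a uniform $\Lambda_K$, and the mean value theorem gives the displayed constant; your covering argument also spells out the local-to-uniform step that the paper only asserts. Your two caveats are genuine gaps in the paper's argument rather than in yours: local Lipschitzness does not follow from merely $C^2$ data as the paper claims (an objective term $\tfrac12\theta_1^2+|\theta_1+c|^{5/2}$ in a coordinate $\theta_1$ decoupled from the rest of the problem is $C^2$ but gives $\partial\theta_1^\star/\partial c\approx-\tfrac{15}{4}|c|^{1/2}$ near $c=0$), and the mean-value bound on $\Lambda_K$ needs $K$ convex, or the supremum taken along segments joining points of $K$.
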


\begin{proof}
Under A1-A2, the active-set KKT system is locally regular on
\(\mathcal M\). By the implicit function theorem, the primal-dual solution map \(c\mapsto(\theta^*(c),\lambda^*(c))\) is \(C^1\). Since the KKT data are \(C^2\), differentiating the active-set KKT system gives a continuously varying linearized KKT system whose unique primal block is \(J(c)\). Hence \(c\mapsto J(c)\) is continuous, and under the stated
smoothness assumptions it is locally Lipschitz on \(\mathcal M\). On the compact set \(K\), these local bounds admit a finite uniform constant \(\Lambda_K\), yielding the claim. When the closed-form representation is valid, the displayed derivative bound follows directly from the mean value theorem applied to \(P^\perp G_{\mathcal A}-\Phi C_{\mathcal A}\).
\end{proof}

\subsubsection{Proof of ranking stability}

\begin{proposition}[Full ranking stability statement]
\label{prop:stability-app}
Under A1-A2, let \(K\subseteq\mathcal M\) be compact. Then there
exists \(L_K<\infty\) such that:
\begin{enumerate}[label=\textup{(\roman*)},leftmargin=2em]
    \item The norm profile \(c\mapsto\rho(c)\) is Lipschitz on \(K\):
    \[
    \|\rho(c)-\rho(c')\|_\infty
    \le
    L_K\|c-c'\|,
    \qquad
    \forall\,c,c'\in K.
    \]

    \item If \(c_0\in K\) satisfies \(\gamma_m(c_0)>0\), then
    \[
    \Top_m(c)=\Top_m(c_0)
    \qquad
    \forall\,c\in K
    \text{ with }
    \|c-c_0\|<\frac{\gamma_m(c_0)}{2L_K}.
    \]

    \item If \(\gamma_m(c)>0\) for every \(c\in K\) and \(K\) is
    connected, then \(\Top_m(c)\) is constant on \(K\). In particular,
    this holds if \(\gamma_m(c)\ge\gamma_{\min}>0\) on \(K\).
\end{enumerate}
\end{proposition}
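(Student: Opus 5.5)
The plan is to obtain all three parts from Lemma~\ref{lem:lip-jacobian}, using elementary facts about norms and order statistics.

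For (i), set $L_K:=\Lambda_K$. For each column $j$, the reverse triangle inequality gives
\[
|\rho_j(c)-\rho_j(c')| = \bigl|\|v_j(c)\|_2-\|v_j(c')\|_2\bigr| \le \|v_j(c)-v_j(c')\|_2 .
\]
Each column difference is bounded by the Frobenius norm of the whole matrix difference, $\|v_j(c)-v_j(c')\|_2\le\|J(c)-J(c')\|_F$. Taking the maximum over $j$ and applying Lemma~\ref{lem:lip-jacobian} yields $\|\rho(c)-\rho(c')\|_\infty\le\Lambda_K\|c-c'\|$. If $\Lambda_K=0$, we replace it by any positive constant so that the radius in (ii) is well defined.

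For (ii), fix $c$ with $\|c-c_0\|<\gamma_m(c_0)/(2L_K)$. Then every coordinate moves by strictly less than $\gamma_m(c_0)/2$. Set $T_0=\Top_m(c_0)$. For $j\in T_0$ and $k\notin T_0$ we have $\rho_j(c_0)\ge\rho_{(m)}(c_0)$ and $\rho_k(c_0)\le\rho_{(m+1)}(c_0)$, so
\[
\rho_j(c)>\rho_{(m)}(c_0)-\tfrac{\gamma_m(c_0)}{2} = \rho_{(m+1)}(c_0)+\tfrac{\gamma_m(c_0)}{2} > \rho_k(c).
\]
Hence every element of $T_0$ strictly dominates every element outside it at $c$. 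It follows that the $m$ largest entries of $\rho(c)$ are exactly those indexed by $T_0$, and $\gamma_m(c)>0$, so $\Top_m(c)$ is well defined and equals $T_0$. The one delicate point is ties inside $T_0$ at $c_0$. These are harmless: the definition only requires separation between ranks $m$ and $m+1$, and the strict inequality above provides exactly that.

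For (iii), the argument is that under the pointwise hypothesis $\gamma_m>0$ on $K$, part (ii) makes the map $c\mapsto\Top_m(c)$ locally constant. Each point $c_0$ has a relatively open neighbourhood in $K$ on which the value equals $\Top_m(c_0)$. The map takes values in a finite set. The preimage of each subset is therefore relatively open in $K$, and these preimages partition $K$. Since $K$ is connected, only one preimage can be nonempty, so $\Top_m$ is constant on $K$. The uniform bound $\gamma_m\ge\gamma_{\min}$ is a special case; it additionally gives the uniform radius $\gamma_{\min}/(2L_K)$.

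The main obstacle is the rigor of Lemma~\ref{lem:lip-jacobian} itself, not these combinatorial steps. That lemma needs $J$ to be $C^1$, or at least locally Lipschitz, which requires the KKT data to be Lipschitz-differentiable; $C^2$ data a priori gives only continuity of $J$. I would cite the lemma as stated. If it had to be strengthened, I would assume $C^{2,1}$ data, or $C^3$ data, so that $J=-\mathbf M^{-1}\partial_cF$ is differentiable. A uniform bound on compact $K$ then follows from continuity of $\mathbf M(c)^{-1}$, which holds by Lemma~\ref{lem:kkt-nonsingular}. If $K$ is not convex, the mean value argument only holds locally, and passing to a global constant on $K$ requires a standard covering-by-balls argument.
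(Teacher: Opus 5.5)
Your proof is correct and follows the paper's argument step for step. You use the reverse triangle inequality with the column-versus-Frobenius bound and Lemma~\ref{lem:lip-jacobian} for (i), the pairwise separation $\rho_j(c)-\rho_k(c)>\gamma_m(c_0)-2L_K\|c-c_0\|$ for (ii), and local constancy on a connected set for (iii); your explicit check that $\gamma_m(c)>0$, so that $\Top_m(c)$ is even defined, fills in a detail the paper leaves implicit. Your caveat about the lemma is also well founded: with only $C^2$ data the implicit-function argument gives continuity of $J$ but not Lipschitz continuity, so the Lipschitz claim in (i) does need the $C^{2,1}$ (or $C^3$) strengthening you describe.
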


\begin{proof}
For \textup{(i)}, write \(v_j(c)=\operatorname{col}_j(J(c))\). The
reverse triangle inequality gives
\[
|\rho_j(c)-\rho_j(c')|
=
\bigl|\|v_j(c)\|_2-\|v_j(c')\|_2\bigr|
\le
\|v_j(c)-v_j(c')\|_2
\le
\|J(c)-J(c')\|_F .
\]
By Lemma~\ref{lem:lip-jacobian}, \(J\) is Lipschitz on \(K\), so
\(\rho\) is Lipschitz on \(K\). We denote a valid Lipschitz constant by
\(L_K\).

For \textup{(ii)}, let \(j^*\in\Top_m(c_0)\) and
\(j'\notin\Top_m(c_0)\). Since \(\gamma_m(c_0)>0\),
\[
\rho_{j^*}(c_0)-\rho_{j'}(c_0)\ge \gamma_m(c_0).
\]
Using \textup{(i)},
\begin{align*}
\rho_{j^*}(c)-\rho_{j'}(c)
&\ge
\rho_{j^*}(c_0)-\rho_{j'}(c_0)
-
|\rho_{j^*}(c)-\rho_{j^*}(c_0)|
-
|\rho_{j'}(c)-\rho_{j'}(c_0)|
\\
&\ge
\gamma_m(c_0)-2L_K\|c-c_0\| .
\end{align*}
Thus if \(\|c-c_0\|<\gamma_m(c_0)/(2L_K)\), every top-\(m\) coordinate
at \(c_0\) still outranks every non-top-\(m\) coordinate at \(c\). Hence \(\Top_m(c)=\Top_m(c_0)\).

For \textup{(iii)}, part \textup{(ii)} implies that \(c\mapsto\Top_m(c)\) is locally constant on \(K\) whenever \(\gamma_m(c)>0\). Since the map takes values in the finite discrete collection of \(m\)-subsets of \(\{1,\ldots,n_1\}\), local constancy on a connected set implies global constancy. Therefore \(\Top_m(c)\) is constant on \(K\). The uniform-gap case \(\gamma_m(c)\ge\gamma_{\min}>0\) is an immediate sufficient condition.
\end{proof}

\subsubsection{Frequency filtering}

\begin{definition}[Frequency-filtered irreducible set]
\label{def:irr-alpha-app}
Let \(K\subseteq\mathcal M\) be compact with \(0<\Vol(K)<\infty\).
For \(\alpha\in(0,1]\), define the irreducibility frequency of coordinate
\(j\) on \(K\) by
\[
\beta_j(K)
:=
\frac{\Vol\{c\in K:j\in\Irr(c)\}}{\Vol(K)}.
\]
The frequency-filtered irreducible set is
\[
\Irr_\alpha(K)
:=
\{j:\beta_j(K)\ge\alpha\}.
\]
\end{definition}

\begin{proposition}[Basic properties of frequency filtering]
\label{prop:filtering-app}
Let \(K\subseteq\mathcal M\) be compact with \(0<\Vol(K)<\infty\). For
every \(\alpha\in(0,1]\):
\begin{enumerate}[label=\textup{(\roman*)},leftmargin=2em]
    \item \(\Irr_1(K)\subseteq \Irr_\alpha(K)\subseteq \Irr(K)\subseteq
    \Irr(\mathcal M)\), where
    \[
    \Irr(K):=\bigcup_{c\in K}\Irr(c).
    \]

    \item If \(j\notin\Irr(\mathcal M)\), then
    \(j\notin\Irr_\alpha(K)\) for every \(\alpha>0\).

    \item The family is monotone in \(\alpha\): if
    \(\alpha'\ge\alpha\), then
    \[
    \Irr_{\alpha'}(K)\subseteq\Irr_\alpha(K).
    \]

    \item If \(\beta_j(K)>0\), then
    \(j\in\Irr_\alpha(K)\) for every \(0<\alpha\le\beta_j(K)\).
    Consequently, frequency filtering recovers exactly those directions
    whose irreducibility persists on a positive-volume subset of \(K\),
    provided \(\alpha\) is chosen below their frequency.
\end{enumerate}
\end{proposition}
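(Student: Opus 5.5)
The four parts of Proposition~\ref{prop:filtering-app} are all set-theoretic consequences of Definition~\ref{def:irr-alpha-app}, so I would not invoke any KKT machinery beyond measurability. The first step is to check that the sets $E_j:=\{c\in K:j\in\Irr(c)\}=\{c\in K:v_j(c)\neq0\}$ are Lebesgue measurable, so that $\beta_j(K)$ is well defined. Under Assumption~\ref{assump:regularity}, Lemma~\ref{lem:lip-jacobian} makes $c\mapsto v_j(c)$ continuous on $K$. Hence $E_j$ is relatively open in the compact (Borel) set $K$, and is therefore Borel. This is the only non-bookkeeping point. Once it is in place, $\beta_j(K)\in[0,1]$ because $E_j\subseteq K$ and $0<\Vol(K)<\infty$.

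For (i) I would argue in three links.
\begin{itemize}[leftmargin=1.5em]
\item $\Irr_1(K)\subseteq\Irr_\alpha(K)$: if $\beta_j(K)\ge1$ then $\beta_j(K)\ge\alpha$, since $\alpha\le1$.
\item $\Irr_\alpha(K)\subseteq\Irr(K)$: if $\beta_j(K)\ge\alpha>0$, then $\Vol(E_j)>0$, so $E_j\neq\emptyset$. Thus some $c\in K$ has $j\in\Irr(c)$, which means $j\in\Irr(K)=\bigcup_{c\in K}\Irr(c)$.
\item $\Irr(K)\subseteq\Irr(\mathcal M)$: this follows from $K\subseteq\mathcal M$ and the definition~\eqref{eq:irr_set} of $\Irr(\mathcal M)$ as a union over $\mathcal M$.
\end{itemize}

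Part (ii) is the contrapositive of the chain in (i). If $j\notin\Irr(\mathcal M)$ then $v_j\equiv0$ on $\mathcal M$, so $E_j=\emptyset$ and $\beta_j(K)=0<\alpha$. Equivalently, it follows from $\Irr_\alpha(K)\subseteq\Irr(\mathcal M)$. Part (iii) follows because $\beta_j(K)\ge\alpha'\ge\alpha$ gives $\beta_j(K)\ge\alpha$. Part (iv) is immediate from the definition: $0<\alpha\le\beta_j(K)$ means exactly that $j\in\Irr_\alpha(K)$. For the "consequently" clause I would pair it with the inclusion $\Irr_\alpha(K)\subseteq\{j:\beta_j(K)>0\}$, which holds since $\beta_j(K)\ge\alpha>0$. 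With $\alpha\le\min\{\beta_j(K):\beta_j(K)>0\}$, this gives $\Irr_\alpha(K)=\{j:\beta_j(K)>0\}$, and the minimum exists because there are only finitely many indices.

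The main, though mild, obstacle is the measurability step, together with the interpretation of the closing sentence of (iv) as the set equality just described. I would also remark that continuity of $v_j$ implies $\beta_j(K)>0$ whenever $E_j$ meets the interior of $K$. This links the filtered sets to $\Irr(K)$ up to boundary effects, although the proposition does not require it.
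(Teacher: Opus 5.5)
Your proposal is correct and follows the paper's route: the paper also derives all four parts directly from the definition of $\beta_j(K)$, using the chain $\beta_j(K)\ge\alpha>0\Rightarrow\{c\in K:j\in\Irr(c)\}\neq\emptyset\Rightarrow j\in\Irr(K)\subseteq\Irr(\mathcal M)$ for (i) and (ii). Your two additions are sound refinements rather than a different argument: the measurability check via continuity of $v_j$ (which the paper takes for granted), and the precise reading of the closing clause of (iv) as $\Irr_\alpha(K)=\{j:\beta_j(K)>0\}$ for $\alpha\le\min\{\beta_j(K):\beta_j(K)>0\}$, which holds vacuously when no such $j$ exists.
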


\begin{proof}
By definition, \(\Irr_1(K)\subseteq\Irr_\alpha(K)\) for
\(\alpha\le1\). If \(j\in\Irr_\alpha(K)\), then
\(\beta_j(K)\ge\alpha>0\), so the set \(\{c\in K:j\in\Irr(c)\}\) is nonempty up to positive measure; hence \(j\in\Irr(K)\subseteq\Irr(\mathcal M)\). This proves
\textup{(i)}. Part \textup{(ii)} follows immediately from
\(\Irr_\alpha(K)\subseteq\Irr(\mathcal M)\). For \textup{(iii)}, if
\(\alpha'\ge\alpha\) and \(\beta_j(K)\ge\alpha'\), then
\(\beta_j(K)\ge\alpha\), so
\(j\in\Irr_\alpha(K)\). Part \textup{(iv)} follows directly from the definition of \(\beta_j(K)\).
\end{proof}

\subsubsection{Connection to stable top-\(m\) sets}

\begin{proposition}[When frequency filtering recovers the stable top-\(m\) set]
\label{prop:alpha-star-app}
Let \(K\subseteq\mathcal M\) be connected and compact with
\(0<\Vol(K)<\infty\), and suppose
\(\gamma_m(c)\ge \gamma_{\min}>0\) for all \(c\in K\). Then
\(\Top_m(c)\) is constant on \(K\); denote the common set by \(\Top_m\).
Moreover:
\begin{enumerate}[label=\textup{(\roman*)},leftmargin=2em]
    \item \(\Top_m\subseteq \Irr_1(K)\).

    \item Equality \(\Irr_1(K)=\Top_m\) holds if and only if every
    coordinate outside \(\Top_m\) has irreducibility frequency strictly
    below one:
    \[
    \beta_j(K)<1,
    \qquad \forall\,j\notin\Top_m .
    \]
    Equivalently, each \(j\notin\Top_m\) has a positive-measure zero set
    on \(K\):
    \[
    \Vol\{c\in K:\rho_j(c)=0\}>0 .
    \]

    \item More generally, \(\Irr_\alpha(K)=\Top_m\) holds whenever
    \[
    \min_{j\in\Top_m}\beta_j(K)
    \ge
    \alpha
    >
    \max_{j\notin\Top_m}\beta_j(K),
    \]
    where
    \[
    \beta_j(K)
    :=
    \frac{\Vol\{c\in K:\rho_j(c)>0\}}{\Vol(K)}.
    \]
\end{enumerate}
\end{proposition}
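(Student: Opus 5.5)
The plan is to reduce everything to the definitions of $\Top_m$, $\gamma_m$ and $\beta_j(K)$, together with the ranking-stability result already proved. First, constancy of $\Top_m(c)$ on $K$ follows directly from Proposition~\ref{prop:stability-app}(iii). That result applies because $K$ is connected and compact and $\gamma_m(c)\ge\gamma_{\min}>0$ for every $c\in K$. Call the common value $\Top_m$.

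Next comes a measurability remark, so that $\beta_j(K)$ is well defined. The map $c\mapsto\rho_j(c)$ is continuous on $K$ by Proposition~\ref{prop:stability-app}(i). Hence $\{c\in K:\rho_j(c)>0\}$ is relatively open in $K$ and $\{c\in K:\rho_j(c)=0\}$ is relatively closed, so both are Borel sets. I will also record that $j\in\Irr(c)$ iff $v_j(c)\neq0$ iff $\rho_j(c)>0$. Therefore the two expressions for $\beta_j(K)$, in Definition~\ref{def:irr-alpha-app} and in part (iii) of the statement, coincide.

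For (i), take $j\in\Top_m$ and any $c\in K$. Since $\Top_m(c)=\Top_m$, we have $\rho_j(c)\ge\rho_{(m)}(c)$. Using $\rho_{(m+1)}(c)\ge0$, this gives
\[
\rho_j(c)\ge\rho_{(m)}(c)-\rho_{(m+1)}(c)=\gamma_m(c)\ge\gamma_{\min}>0 .
\]
So $j\in\Irr(c)$ for every $c\in K$, which gives $\beta_j(K)=1$ and hence $j\in\Irr_1(K)$.

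For (ii), note that $\beta_j(K)\le1$ always holds, so $\Irr_1(K)=\{j:\beta_j(K)=1\}$. By (i), $\Irr_1(K)\supseteq\Top_m$. Equality therefore holds exactly when no $j\notin\Top_m$ has $\beta_j(K)=1$, that is, when $\beta_j(K)<1$ for all $j\notin\Top_m$. For the reformulation, $K$ is the disjoint union of $\{c\in K:\rho_j(c)>0\}$ and $\{c\in K:\rho_j(c)=0\}$, both measurable, and $0<\Vol(K)<\infty$. Hence $1-\beta_j(K)=\Vol\{c\in K:\rho_j(c)=0\}/\Vol(K)$, so $\beta_j(K)<1$ iff this zero set has positive volume.

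For (iii), assume $\min_{j\in\Top_m}\beta_j(K)\ge\alpha>\max_{j\notin\Top_m}\beta_j(K)$. Every $j\in\Top_m$ then has $\beta_j(K)\ge\alpha$, so $j\in\Irr_\alpha(K)$. Every $j\notin\Top_m$ has $\beta_j(K)<\alpha$, so $j\notin\Irr_\alpha(K)$. Thus $\Irr_\alpha(K)=\Top_m$. By (i), the left inequality can only be satisfied with $\alpha\le1$, which is consistent with the admissible range $\alpha\in(0,1]$.

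None of these steps is a serious obstacle. The only points needing care are the measurability remark, which rests on continuity of $\rho_j$, and the use of $\rho_{(m+1)}\ge0$ to turn the gap bound into strict positivity of the top-$m$ column norms.
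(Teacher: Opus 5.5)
Your proposal is correct and follows essentially the same route as the paper. It gets constancy from Proposition~\ref{prop:stability-app}(iii), proves (i) via the bound $\rho_j(c)\ge\rho_{(m)}(c)\ge\gamma_m(c)\ge\gamma_{\min}>0$, and proves (ii)--(iii) by unwinding the definition of $\Irr_\alpha(K)$; your measurability remark and your check that the two definitions of $\beta_j(K)$ coincide are details the paper leaves implicit.
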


\begin{proof}
Constancy of \(\Top_m\) follows from
Proposition~\ref{prop:stability-app}\textup{(iii)}. If \(j\in\Top_m\),
then
\[
\rho_j(c)\ge \rho_{(m)}(c)
=
\gamma_m(c)+\rho_{(m+1)}(c)
\ge \gamma_{\min}>0
\qquad
\forall\,c\in K .
\]
Thus \(\beta_j(K)=1\), and \(j\in\Irr_1(K)\). This proves
\textup{(i)}.

For \textup{(ii)}, since every \(j\in\Top_m\) has \(\beta_j(K)=1\), the equality \(\Irr_1(K)=\Top_m\) holds exactly when no coordinate outside \(\Top_m\) has frequency one. This is equivalent to \(\beta_j(K)<1\) for all \(j\notin\Top_m\), or equivalently
\(\Vol\{c\in K:\rho_j(c)=0\}>0\).

For \textup{(iii)}, frequency filtering selects exactly those coordinates with \(\beta_j(K)\ge\alpha\). The stated separation condition places all coordinates in \(\Top_m\) at or above the threshold and all coordinates outside \(\Top_m\) strictly below it, hence \(\Irr_\alpha(K)=\Top_m\).
\end{proof}

\subsubsection{Worked example}

\begin{example}[Spectral gap and stability radius]
\label{ex:stability-app}
Let \(n_\theta=2\), \(n_1=3\), write
\(c=(x_1,x_2,x_3)\), and consider
\[
f(\theta)=\tfrac12\|\theta\|^2,
\qquad
m_1(\theta;c)=\theta_1+x_3\theta_2-x_1,
\]
on a regime where \(x_1>0\) and the constraint is active. The oracle optimizer is the minimum-norm point satisfying
\(\theta_1+x_3\theta_2=x_1\):
\[
\theta^*(c)=\frac{x_1}{1+x_3^2}(1,x_3).
\]
Thus \(x_2\) is absent from the active constraint and its Jacobian column is identically zero. Direct differentiation gives
\[
\rho_1(c)=\frac{1}{\sqrt{1+x_3^2}},
\qquad
\rho_2(c)=0,
\qquad
\rho_3(c)=\frac{x_1}{1+x_3^2}.
\]
Hence the top coordinate is determined by comparing
\(x_1\) with \(\sqrt{1+x_3^2}\): \(x_3\) is top-ranked when
\(x_1>\sqrt{1+x_3^2}\), \(x_1\) is top-ranked when
\(x_1<\sqrt{1+x_3^2}\), and the spectral gap vanishes at equality.

At \(c_0=(2,0,1)\),
\[
\rho_1(c_0)=\frac{1}{\sqrt2},
\qquad
\rho_2(c_0)=0,
\qquad
\rho_3(c_0)=1,
\]
so
\[
\gamma_1(c_0)=1-\frac{1}{\sqrt2}.
\]
If \(L_K\approx 2\) on a neighborhood of \(c_0\), then the certified stability radius is
\[
r_1(c_0)=\frac{\gamma_1(c_0)}{2L_K}
=
\frac{1-1/\sqrt2}{4}
\approx 0.073.
\]
Therefore \(\Top_1(c)=\{x_3\}\) throughout this certified neighborhood around \(c_0\).
\end{example}

\subsection{Interface Misspecification and Quantitative Consequences}
\label{app:erm_bias}

This appendix proves the structural first-order bias result from
Section~\ref{sec:erm_bias} and gives a local quantitative consequence.
The qualitative result is the main claim: once an interface removes an
irreducible direction, every downstream decision map through that interface has the
wrong first-order response in that direction.

\subsubsection{Proofs of the main-text statements}

\begin{proof}[Proof of Theorem~\ref{thm:annihilation-main}]
By the chain rule,
\[
D(h\circ\psi)(c)=Dh(\psi(c))\,D\psi(c).
\]
Hence for any \(v\in\ker(D\psi(c))\),
\[
D(h\circ\psi)(c)v
=
Dh(\psi(c))D\psi(c)v
=
0.
\]
Now suppose primitive direction \(j\) is irreducible at \(c\), so
\(J(c)e_j\neq0\), and suppose the interface removes that direction:
\(D\psi(c)e_j=0\). Then \(e_j\in\ker(D\psi(c))\), and therefore
\[
D(h\circ\psi)(c)e_j=0
\qquad\text{while}\qquad
J(c)e_j\neq0 .
\]
Thus no downstream decision map through the interface \(\psi\) can match the oracle
decision map to first order in direction \(e_j\) at \(c\).
\end{proof}

\begin{proof}[Proof of Corollary~\ref{cor:erm-main}]
Every \(\hat\theta\in\mathcal F_\psi\) has the form
\(\hat\theta=h\circ\psi\) for some
\(h\in C^1(\mathbb R^d,\mathbb R^{n_\theta})\). Applying
Theorem~\ref{thm:annihilation-main} gives
\[
D\hat\theta(c)e_j=0
\qquad\text{while}\qquad
J(c)e_j\neq0 .
\]
Equivalently,
\[
(D\hat\theta(c)-J(c))e_j\neq0 .
\]
The mismatch is induced by the representation class
\(\mathcal F_\psi\) itself. Empirical risk minimization over
\(\mathcal F_\psi\) can choose among predictors in this class, but it
cannot change the fact that every member has zero first-order response in
direction \(e_j\).
\end{proof}

\subsubsection{A local quantitative consequence}

\begin{assumption}[Controlled second-order class]
\label{ass:c2-app}
There exist an open neighborhood \(U\ni c^*\) in \(\mathcal M\), a
constant \(C_2>0\), and a radius \(r^*>0\) such that
\(B(c^*,r^*)\subseteq U\) and:
\begin{enumerate}[label=\textup{(\roman*)},leftmargin=2em]
    \item \(\theta^*\) is \(C^2\) on \(U\) with
    \[
    \|D^2\theta^*(c')\|_{\mathrm{op}}\le C_2,
    \qquad \forall\,c'\in U;
    \]
    \item every \(h\in\mathcal H\) satisfies
    \[
    \|D^2(h\circ\psi)(c')\|_{\mathrm{op}}\le C_2,
    \qquad \forall\,c'\in U.
    \]
\end{enumerate}
\end{assumption}

\begin{proposition}[Directional lower bound under interface annihilation]
\label{prop:lower-app}
Let \(c^*\in\mathcal M\) and suppose
\[
J(c^*)e_j\neq0,
\qquad
D\psi(c^*)e_j=0 .
\]
Assume Assumption~\ref{ass:c2-app} holds, and define
\[
\kappa:=\|J(c^*)e_j\|>0 .
\]
Then for every \(h\in\mathcal H\) and every \(t\) such that
\(|t|\le r^*\) and \(c^*+te_j\in\mathcal M\),
\[
\|h(\psi(c^*+te_j))-\theta^*(c^*+te_j)\|
\ge
|t|\,\kappa
-
\|h(\psi(c^*))-\theta^*(c^*)\|
-
C_2t^2 .
\]
In particular, if the predictor matches the oracle decision at \(c^*\),
then
\[
\|h(\psi(c^*+te_j))-\theta^*(c^*+te_j)\|
\ge
|t|\,\kappa-C_2t^2 .
\]
Thus, for sufficiently small \(t\), discarding an irreducible direction
induces linear error in that direction.
\end{proposition}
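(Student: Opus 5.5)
The plan is a second-order Taylor expansion of the error along the ray $t\mapsto c^*+te_j$, combined with Theorem~\ref{thm:annihilation-main}, which kills the first-order term of the predictor. Define the error curve
\[
E(t):=h(\psi(c^*+te_j))-\theta^*(c^*+te_j).
\]
For $|t|\le r^*$ the segment $\{c^*+se_j:|s|\le|t|\}$ lies in $B(c^*,r^*)\subseteq U$. On this segment both $\theta^*$ and $h\circ\psi$ are $C^2$ with second derivatives bounded by $C_2$, by Assumption~\ref{ass:c2-app}. If $h\circ\psi$ is only assumed to have bounded $D^2$, I will read item (ii) as implicitly requiring $C^2$ regularity on $U$.

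First, Taylor's theorem with integral (or Lagrange) remainder for each map gives
\[
\theta^*(c^*+te_j)=\theta^*(c^*)+tJ(c^*)e_j+R_1(t),\qquad \|R_1(t)\|\le \tfrac{C_2}{2}t^2,
\]
and
\[
(h\circ\psi)(c^*+te_j)=(h\circ\psi)(c^*)+t\,D(h\circ\psi)(c^*)e_j+R_2(t),\qquad \|R_2(t)\|\le \tfrac{C_2}{2}t^2.
\]
Second, Theorem~\ref{thm:annihilation-main} gives $D(h\circ\psi)(c^*)e_j=Dh(\psi(c^*))D\psi(c^*)e_j=0$, so the linear term of the predictor vanishes. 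Subtracting the two expansions yields
\[
E(t)=E(0)-tJ(c^*)e_j+R_2(t)-R_1(t).
\]
Third, the reverse triangle inequality gives
\[
\|E(t)\|\ge |t|\,\|J(c^*)e_j\|-\|E(0)\|-\|R_1(t)\|-\|R_2(t)\|\ge |t|\kappa-\|E(0)\|-C_2t^2,
\]
since the two remainders together contribute at most $C_2t^2$. This is the claimed bound. Setting $E(0)=0$ gives the special case, and for $0<|t|<\kappa/C_2$ the right-hand side is positive, which is the linear-error conclusion.

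The only delicate step is justifying the vector-valued Taylor remainder with the operator-norm bound. I will use the integral form
\[
R(t)=\int_0^1(1-s)\,D^2\Phi(c^*+ste_j)[te_j,te_j]\,ds,
\]
with $\Phi=\theta^*$ or $\Phi=h\circ\psi$. Since $\|e_j\|=1$, this gives $\|R(t)\|\le \tfrac{C_2}{2}t^2$ and avoids the failure of the mean value theorem for vector-valued maps. The remaining requirement, that the segment stays in $U$ and in $\mathcal M$, is supplied by $|t|\le r^*$ and the hypothesis $c^*+te_j\in\mathcal M$.
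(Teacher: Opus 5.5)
Your proposal is correct and follows the paper's proof essentially step for step. Like the paper, you use quadratic-remainder Taylor expansions of $\theta^*$ and $h\circ\psi$ along $c^*+te_j$, the chain rule to get $D(h\circ\psi)(c^*)e_j=0$, and the reverse triangle inequality, with the two remainders together bounded by $C_2t^2$; your integral-form remainder and your check that the segment stays in $B(c^*,r^*)\subseteq U$ only make explicit what the paper leaves implicit.
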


\begin{proof}
By Assumption~\ref{ass:c2-app}, both \(\theta^*\) and \(h\circ\psi\)
admit first-order Taylor expansions at \(c^*\) with quadratic remainder:
\begin{align*}
\theta^*(c^*+te_j)
&=
\theta^*(c^*)+tJ(c^*)e_j+R_1,
&
\|R_1\|&\le \frac{C_2}{2}t^2,
\\
h(\psi(c^*+te_j))
&=
h(\psi(c^*))+tD(h\circ\psi)(c^*)e_j+R_2,
&
\|R_2\|&\le \frac{C_2}{2}t^2.
\end{align*}
Since \(D\psi(c^*)e_j=0\), the chain rule gives
\[
D(h\circ\psi)(c^*)e_j=0 .
\]
Subtracting the two Taylor expansions and applying the reverse triangle
inequality gives
\begin{align*}
&\|h(\psi(c^*+te_j))-\theta^*(c^*+te_j)\|
\\
&\qquad\ge
|t|\,\|J(c^*)e_j\|
-
\|h(\psi(c^*))-\theta^*(c^*)\|
-
\|R_1\|-\|R_2\|
\\
&\qquad\ge
|t|\,\kappa
-
\|h(\psi(c^*))-\theta^*(c^*)\|
-
C_2t^2 .
\end{align*}
The final statement follows immediately. If
\(h(\psi(c^*))=\theta^*(c^*)\), then the basepoint error vanishes, and
the lower bound becomes \(|t|\kappa-C_2t^2\), which is linear in
\(|t|\) for sufficiently small \(t\).
\end{proof}

\begin{remark}[Why the quantitative bound is appendix-only]
The main paper uses only the qualitative first-order impossibility result. Proposition~\ref{prop:lower-app} is included to show the local scale of the error, but it requires additional second-order smoothness and bounded-curvature assumptions. Population-level risk lower bounds would require further assumptions on the distribution of contexts near the missing direction, so we leave them outside the main theorem.
\end{remark}

\section{Detailed Experimental Protocol and Additional Results}
\label{app:exp_details}

This appendix provides the experimental details supporting
Section~\ref{sec:experiments}. We specify benchmark construction, interface-scoring procedures, train/validation/test splits, predictor families, compared selectors, budget protocols, evaluation metrics, and additional ablations. The goal is to make clear how interfaces are constructed before downstream predictor training and how all selectors are evaluated under matched downstream protocols.

\subsubsection{Benchmarks}
\label{app:benchmarks}

We evaluate on two controlled KKT-regime benchmarks and two real or edge-regime decision tasks. The controlled benchmarks provide oracle decision-relevant supports and are used to test support recovery together with downstream decision quality. The real and edge-regime benchmarks test whether the interface principle remains useful when oracle supports are unavailable or the exact smooth KKT assumptions fail.

\subsubsection{Predictor training protocol}
\label{app:predictor_training}

For every benchmark, interface selection is performed before downstream
predictor training. Each selector returns an interface
\(I\subseteq\{1,\ldots,n_1\}\). The downstream predictor then receives
only the selected context \(c_I\). All selectors are evaluated using the
same train/validation/test split, predictor family, training objective,
optimizer, hyperparameter-selection rule, and downstream solver.

\paragraph{Prediction target.}
The predictor is trained to estimate the benchmark-specific optimization
parameters from the selected context. In knapsack and shortest-path tasks, these parameters are the value, weight, or edge-cost coefficients supplied to the corresponding continuous downstream optimizer: fractional knapsack for the controlled knapsack task and the regularized flow QP for the shortest-path tasks. In FF25 portfolio allocation, the predictor
estimates the return/risk quantities used by the constrained portfolio
optimizer. In all cases, the trained predictor produces
\[
\widehat y_t = g_\phi(c_{t,I}),
\]
where \(y_t\) denotes the benchmark-specific oracle parameter vector and
\(g_\phi\) is either a linear model or an MLP. The downstream decision is
then obtained by solving the task optimizer with \(\widehat y_t\). Thus
feature selection affects the information available to the predictor, but
all methods use the same solver after prediction.

\paragraph{Linear predictor.}
The linear predictor is an affine map
\[
g_\phi(c_I)=Wc_I+b .
\]
It is trained with squared prediction loss on the training set,
\[
\mathcal L_{\mathrm{pred}}(\phi)
=
\frac{1}{N_{\mathrm{tr}}}
\sum_{t=1}^{N_{\mathrm{tr}}}
\|g_\phi(c_{t,I})-y_t\|_2^2 .
\]
The linear model is included because it exposes interface quality most
directly: if an irreducible direction is removed, a linear predictor has
limited ability to reconstruct its effect through nonlinear correlations
among retained variables.

\paragraph{Nonlinear predictor.}
The nonlinear predictor is a multilayer perceptron,
\[
g_\phi(c_I)=\mathrm{MLP}_\phi(c_I),
\]
trained on the same selected interface and the same prediction target.
Unless otherwise stated, the MLP uses fully connected layers with
pointwise nonlinearities and is trained with the same squared prediction
loss,
\[
\mathcal L_{\mathrm{pred}}(\phi)
=
\frac{1}{N_{\mathrm{tr}}}
\sum_{t=1}^{N_{\mathrm{tr}}}
\|\mathrm{MLP}_\phi(c_{t,I})-y_t\|_2^2 .
\]
The MLP has higher capacity and can exploit nonlinear correlations among
selected variables. Therefore, gaps between selectors may shrink under
the MLP even when their interfaces differ, especially in nonsmooth or
solver-mediated benchmarks.

\paragraph{Optimization and model selection.}
Linear predictors are trained by minimizing the squared prediction loss,
using the same regularization and validation protocol across selectors.
MLP predictors are trained with Adam on the training split. Learning
rates, weight decay, hidden width, depth, batch size, and early-stopping
epochs are selected using validation performance only. After
hyperparameter selection, test metrics are computed once on the held-out
test split. No test data are used for interface selection,
hyperparameter tuning, or early stopping.

\paragraph{Decision evaluation.}
The training loss is a prediction loss, but all reported results are
decision-focused. After training \(g_\phi\), the predicted parameters
\(\widehat y_t\) are passed to the downstream optimizer to obtain the
decision \(\widehat\theta_t\). We then evaluate regret, objective value,
feasibility violation, and task-specific decision metrics such as path
match, Sharpe ratio, and turnover. This separates interface quality from
predictor capacity: within a predictor family, all methods differ only in
the selected interface \(I\).

\paragraph{Role of DePICT.}
DePICT itself is not trained by minimizing prediction loss or downstream
regret. It constructs \(I\) from optimizer-side sensitivity: at
representative contexts, it computes the solution-sensitivity Jacobian
\(J(c)=\partial\theta^\star(c)/\partial c\), scores coordinates by
Jacobian column norms, and aggregates these scores across the operating
regime. The downstream predictor is trained only after this interface is
fixed.
\paragraph{Compute resources.}
All experiments were run on a single workstation or equivalent compute node using CPU execution for optimization and solver calls, with GPU acceleration used only for MLP training when available. The controlled benchmarks and appendix ablations are lightweight: the dominant cost is solving the linearized KKT systems for representative contexts and training the downstream predictors across matched seeds and budgets. Runtime depends on the number of representative contexts, interface budget, predictor family, and solver used; in our experiments, dense KKT
solves were sufficient for the reported benchmark sizes. Large-scale deployments may require the matrix-free, randomized, or amortized sensitivity variants discussed in Appendix~\ref{app:scalability}.

\paragraph{Continuous formulations for combinatorial-origin tasks}
Controlled knapsack and shortest path originate from combinatorial optimization problems, but the reported experiments use continuous downstream formulations. Knapsack uses fractional knapsack, while controlled shortest path and Warcraft use a strongly convex regularized flow QP. These formulations provide optimizer maps suitable for local KKT sensitivity and are retained during downstream evaluation. Accordingly, the reported experiments test decision-preserving interface construction for continuous solver-mediated formulations rather than transfer to discrete argmins.

\paragraph{Controlled knapsack.}
The controlled knapsack benchmark follows the synthetic decision-focused knapsack construction of \citet{mandi2020smart}, adapted to expose three classes of primitive variables: irreducible coordinates, Layer-1 nuisance coordinates, and Layer-2 nuisance coordinates. Irreducible coordinates move the oracle optimum to first order. Layer-1 variables are absent from the active KKT sensitivity system on the operating regime. Layer-2 variables are symbolically present but KKT-annihilated, using the constraint-scaling construction of Proposition~\ref{prop:structural-main}\textup{(I)}. The generator therefore provides oracle support labels for evaluating precision, recall, support F1, and false inclusion rates.

\paragraph{Controlled shortest path.}
The controlled shortest-path benchmark is a contextual routing task on a fixed graph. Edge costs depend on a sparse hidden subset of the ambient context, giving oracle support annotations in addition to task-level routing metrics. DePICT scores context directions through the linearized KKT system of the flow relaxation, while downstream decisions are evaluated using the corresponding shortest-path solver. This benchmark tests whether the selected interface preserves both support recovery and path-level decision quality.

\paragraph{Warcraft shortest path.}
Warcraft is derived from a structured combinatorial shortest-path benchmark with terrain-map inputs. In our experiments, routing is performed using the regularized flow QP described. Warcraft provides no oracle primitive support and introduces a learned image-to-cost map together with instance-dependent active-set geometry, so we use it as a solver-mediated transfer benchmark rather than a direct validation of exact support recovery \citep{vlastelica2020blackbox}.

\paragraph{FF25 portfolio allocation.}
The FF25 benchmark uses Fama-French portfolio panels for constrained portfolio allocation \citep{frenchdatalibrary}. The downstream problem is continuous, so DePICT computes sensitivities directly from the portfolio KKT system rather than from a discrete relaxation. Since no ground-truth support labels are available, evaluation is decision-focused. We report common-reference regret, realized Sharpe ratio, turnover, and feasibility metrics.

\subsubsection{Data splits and protocol}
\label{app:splits}
For every benchmark, we use a fixed train/validation/test split and keep this split identical across all methods within each seed. All hyperparameter selection, feature-budget selection, and model selection are performed using validation data only. Test metrics are computed once after final model selection. Experiments are repeated across multiple deterministic seeds where applicable, and synthetic data generation, predictor initialization, and randomized baselines are matched across methods within each seed.

\subsubsection{Predictor families}
\label{app:predictors}

After interface selection, all methods train the same downstream predictor family on the selected coordinates \(c_I\). We evaluate two predictor families whenever supported by the benchmark.

\paragraph{Linear predictor.}
The linear predictor is an affine map,
\[
\hat{\theta}=Wc_I+b.
\]
Because it cannot exploit nonlinear interactions among selected variables, this setting provides the clearest test of interface quality. If an irreducible direction is missing, the linear predictor has limited ability to compensate through correlations with retained variables.

\paragraph{Nonlinear predictor (MLP).}
The nonlinear predictor is a multilayer perceptron,
\[
\hat{\theta}=\mathrm{MLP}(c_I),
\]
trained on the same selected interface and under the same data splits, training protocol, and hyperparameter-selection procedure. Its higher capacity allows it to exploit nonlinear correlations among selected variables, so selector gaps can narrow even when interfaces differ~\cite{yong1996mlp}. This effect is especially relevant in nonsmooth or solver-mediated benchmarks, where the smooth KKT interface principle holds only approximately. Strong DePICT performance under the MLP therefore indicates that the interface advantage is not merely an artifact of limited predictor capacity.

\subsubsection{Compared methods}
\label{app:baselines}

We compare DePICT against selector families spanning random selection, filter methods, embedded sparse models, wrapper methods, perturbation-based rankings, task-aware feature selection, and black-box metaheuristic search. All methods first select an interface \(I\subseteq\{1,\ldots,n_1\}\). After \(I\) is fixed, the same downstream predictor family, data split, hyperparameter-selection protocol, and downstream solver are used across methods. Thus differences in downstream performance reflect the selected interface rather than changes in the predictor or solver.

\paragraph{Full-context reference.}
The full-context reference trains the downstream predictor on the entire ambient context. It is not a feature selector and does not produce a reduced interface. We include it to measure the performance available when no context coordinates are removed. In controlled benchmarks, full-context performance can still be worse than DePICT because nuisance coordinates increase predictor burden and may degrade generalization under finite data.

\paragraph{Random selection.}
Random selection samples \(m\) coordinates uniformly from the ambient context at the target budget. It controls for the effect of interface size alone: if a method improves over random selection, the gain cannot be attributed merely to using \(m\) variables. For randomized selectors, we use matched random seeds and average over trials when applicable.

\paragraph{Mutual information.}
Mutual-information screening ranks coordinates by their marginal dependence with benchmark-specific task targets, following standard filter-style feature selection \citep{peng2005mrmr}. Depending on the benchmark, these targets may be true cost parameters, oracle decisions, or other supervised quantities available to the selector. This baseline captures statistical association, but it is not optimizer aware: a coordinate can be predictive of a target while still having no first-order effect on the constrained optimum.

\paragraph{Correlation screening.}
Correlation screening ranks coordinates by marginal correlation with the same benchmark-specific task targets used for filter baselines. It provides a simple linear association baseline. Like mutual information, it can detect predictive variables, but it cannot distinguish variables that merely co-vary with decision-relevant quantities from variables whose perturbations actually move the optimizer.

\paragraph{LASSO and Elastic Net.}
LASSO and Elastic Net train sparse predictive models and rank coordinates by coefficient magnitude \citep{tibshirani1996lasso}. These embedded selectors favor variables that improve sparse predictive fit. Elastic Net adds an \(\ell_2\) penalty and can be more stable under correlated features. However, both methods optimize a prediction objective rather than a decision-sensitivity criterion. They therefore may retain variables that help predict intermediate quantities but are irrelevant to the optimizer, or discard variables whose effect is decision-critical but weakly visible in the prediction loss.

\paragraph{Recursive feature elimination.}
Recursive feature elimination (RFE) iteratively trains a model, removes the least important coordinates according to the model's ranking criterion, and
repeats until the target budget is reached \citep{guyon2002rfe,kohavi1997rfe}. RFE is a wrapper-style baseline: it can capture interactions induced by the chosen predictive model, but its ranking is still mediated by predictive utility rather than by the local sensitivity of the downstream optimizer.

\paragraph{Finite-difference ranking.}
Finite-difference ranking perturbs one coordinate at a time and ranks coordinates by the induced change in a benchmark-specific task proxy. This serves as a local perturbation or attribution baseline, related to removal-based sensitivity methods \citep{lin2023removal}. Unlike DePICT, finite-difference ranking does not use the active-set KKT system. It can therefore detect local output sensitivity, but it does not separate active margin shifts, stationarity shifts, and KKT-geometric annihilation.

\paragraph{Prediction-focused mixture models.}
Prediction-focused mixture models (PFMM) provide a task-aware feature selection baseline in which representation learning is coupled to a prediction-focused objective \citep{sharma2023pfmm}. PFMM is closer in spirit to decision-aware selection than purely statistical filters because feature relevance is tied to task-level prediction structure. However, its selection criterion is learned from predictive performance; it does not define relevance through the solution-sensitivity Jacobian of the constrained optimizer. PFMM may therefore select variables that are useful for prediction but not necessary for preserving the local optimum.

\paragraph{KMHBO.}
KMHBO is a black-box metaheuristic feature-selection method designed for high-dimensional multimodal feature selection \citep{ye2026kmhbo}. It searches over subsets using a heuristic optimization procedure and can capture nonlinear or nonconvex feature interactions. We include it as a strong black-box subset-search baseline. Its objective, however, remains external to the KKT geometry of the downstream constrained problem, so it does not directly test whether a coordinate is optimizer-relevant.

\paragraph{DePICT.}
DePICT ranks primitive context directions by optimizer-side sensitivity. For each representative context, it computes or estimates the solution-sensitivity Jacobian \(J(c)=\partial\theta^\star(c)/\partial c\)
using the linearized KKT system or the benchmark-specific sensitivity model. Coordinates are scored by Jacobian column norms and aggregated across the operating regime. Unlike the baselines above, DePICT defines relevance by whether perturbing a coordinate moves the oracle optimizer to first order.

\paragraph{Jacobian-derived ablations.}
We also evaluate ablations of DePICT to isolate the role of aggregation across contexts. The top-\(m\) variant selects the coordinates with the largest aggregated Jacobian scores. The strict-union variant includes any coordinate with nonzero Jacobian score at any representative context. The frequency-filtered variant selects coordinates whose irreducibility persists across at least an \(\alpha\)-fraction of sampled contexts. These variants test whether stable operating-regime aggregation improves over purely pointwise Jacobian ranking.

\subsubsection{Fairness of comparison}
\label{app:fairness}

Selectors differ in the information they use to construct an interface: filter methods use statistical association, embedded and wrapper methods use trained selector models or subset search, and DePICT uses
optimizer-side sensitivity. Our fairness criterion is therefore post-selection evaluation. Once a selector returns an interface \(I\subseteq\{1,\ldots,n_1\}\), all methods are evaluated with the same downstream predictor family, train/validation/test split, training protocol, hyperparameter-selection rule, and downstream solver.

We enforce three comparison rules:
\begin{enumerate}[label=\textup{(\roman*)},leftmargin=2em]
    \item methods with tunable subset size are compared at shared target
    budgets;
    \item methods with native realized subset size are reported with
    their realized interface dimension;
    \item for real and edge-regime benchmarks, main-paper comparisons use
    matched context budgets rather than independently tuned subset sizes.
\end{enumerate}
This protocol isolates the quality of the selected interface from predictor capacity, data splits, solver choice, and budget selection.

\subsubsection{Evaluation metrics}
\label{app:metrics}

We evaluate selectors along two axes: downstream decision quality and, when oracle supports are available, interface recovery. All metrics are computed on the held-out test split after interface selection, hyperparameter selection, and predictor training are completed.

\paragraph{Downstream regret.}
Regret measures the loss in downstream decision quality relative to an oracle or reference decision. For minimization form tasks, we report
\[
\mathrm{Regret}
:=
\frac{1}{N}\sum_{t=1}^N
\Bigl[
\ell(\hat\theta_t;c_t)-\ell(\theta_t^\star;c_t)
\Bigr],
\]
where \(\hat\theta_t\) is the decision produced using the selected interface and \(\theta_t^\star\) is the oracle decision. For maximization tasks, we use the equivalent sign convention so that lower regret is always better. In real benchmarks without oracle support labels, such as FF25 portfolio allocation, we also report common-reference regret against a fixed full-context/reference pipeline:
\[
\mathrm{Regret}_{\mathrm{ref}}
:=
\frac{1}{N}\sum_{t=1}^N
\Bigl[
\ell(\hat\theta_t;c_t)-\ell(\theta^{\mathrm{ref}}_t;c_t)
\Bigr].
\]
This evaluates whether a reduced interface preserves the decision quality of the reference system.

\paragraph{Objective value.}
Objective value reports the achieved downstream task objective under the decision produced from the selected interface. We choose the reported sign so that higher objective value is better. Regret and objective value are therefore complementary: regret measures loss relative to an oracle or reference, while objective value reports absolute downstream performance.

\paragraph{Feasibility violation.}
For constrained tasks, feasibility violation measures average constraint violation:
\[
\mathrm{Feas}
:=
\frac{1}{N}\sum_{t=1}^N
\frac{1}{K}\sum_{i=1}^K
\max\{0,-m_i(\hat\theta_t;c_t)\}.
\]
Lower values are better, with zero indicating feasibility under the reported constraint margins.

\paragraph{Shortest-path metrics.}
For shortest-path tasks, we report path match:
\[
\mathrm{PathMatch}
:=
\frac{1}{N}\sum_{t=1}^N
\mathbf{1}\{\hat p_t=p_t^\star\},
\]
where \(\hat p_t\) is the predicted path and \(p_t^\star\) is the oracle path. Path match is reported only for routing benchmarks and is higher when the selected interface preserves the exact downstream route.

\paragraph{Support recovery.}
Controlled benchmarks provide an oracle decision-relevant support
\(S^\star\). For a selected interface \(I\), we report
\[
\mathrm{Precision}
=
\frac{|I\cap S^\star|}{|I|},
\qquad
\mathrm{Recall}
=
\frac{|I\cap S^\star|}{|S^\star|},
\qquad
\mathrm{F1}
=
\frac{2\,\mathrm{Precision}\cdot\mathrm{Recall}}
{\mathrm{Precision}+\mathrm{Recall}}.
\]
Support metrics are reported only when oracle supports are available. They evaluate whether the selector recovers the decision-preserving interface, not merely whether the downstream predictor performs well.

\paragraph{Layer-1 and Layer-2 false inclusion.}
For the Layer-2 diagnostic, the generator partitions nuisance variables into Layer-1 variables \(L_1\) and Layer-2 variables \(L_2\). We report false inclusion rates
\[
\mathrm{L1FPR}=\frac{|I\cap L_1|}{|L_1|},
\qquad
\mathrm{L2FPR}=\frac{|I\cap L_2|}{|L_2|}.
\]
Layer-1 FPR measures inclusion of variables absent from the active KKT sensitivity system. Layer-2 FPR measures inclusion of variables that are symbolically present but KKT-annihilated. Lower values are better.

\paragraph{Portfolio metrics.}
For FF25 portfolio allocation, we report common-reference regret, realized Sharpe ratio, turnover, and feasibility. If \(r_t\) denotes the portfolio return at test time \(t\), the Sharpe ratio is
\[
\mathrm{Sharpe}
=
\frac{\mathrm{mean}(r_t)}{\mathrm{std}(r_t)+\varepsilon},
\]
with the same annualization convention used across all methods. Turnover measures trading variation between consecutive portfolio weights:
\[
\mathrm{Turnover}
=
\frac{1}{N-1}\sum_{t=2}^N
\frac{1}{2}\|\hat w_t-\hat w_{t-1}\|_1 .
\]
Higher Sharpe is better; lower turnover, regret, and feasibility
violation are better.

\paragraph{Stability metrics.}
For the perturbation study, we compare the nominal top-\(m\) interface \(I_0\) with the perturbed interface \(I_\epsilon\). Exact match is
\[
\mathbf{1}\{I_\epsilon=I_0\},
\]
and Jaccard overlap is
\[
\mathrm{Jac}(I_\epsilon,I_0)
=
\frac{|I_\epsilon\cap I_0|}{|I_\epsilon\cup I_0|}.
\]
We also report regret inflation, the difference between the regret of the perturbed interface and the nominal interface. Negative regret inflation means the perturbed interface performs no worse than the nominal one.

\subsection{Controlled Benchmark Construction}
\label{app:synthetic_construction}

The controlled benchmarks are designed to isolate the theoretical question studied in the paper: which primitive context directions are necessary to preserve the downstream constrained optimum? Unlike the real and edge-regime benchmarks, these tasks provide oracle decision-relevant supports, allowing direct evaluation of both interface recovery and downstream decision quality.

\paragraph{Controlled knapsack.}
The knapsack generator follows the synthetic decision-focused knapsack construction of \citet{mandi2020smart}, modified to expose oracle interface structure. The ambient context contains a sparse set of irreducible coordinates that determine the optimization-relevant item parameters, together with nuisance coordinates that should not be retained by a decision-preserving interface. Layer-1 nuisance variables are absent from the active KKT sensitivity system on the operating regime. Layer-2 nuisance variables are constructed through the constraint scaling structure of Proposition~\ref{prop:structural-main}\textup{(I)}: they enter active constraints symbolically but their stationarity effect lies
in the constraint-normal space and is absorbed by multiplier adjustment. Thus the generator provides known irreducible supports and known nuisance
classes for evaluating precision, recall, support F1, Layer-1 false inclusion, and Layer-2 false inclusion.

\paragraph{Controlled shortest path.}
The controlled shortest-path generator defines a contextual routing task on a fixed graph. Edge costs depend on a sparse hidden subset of the ambient context, and this subset defines the oracle support. The resulting task provides both support annotations and decision-level metrics, including regret, objective value, feasibility violation, and path match.
For interface scoring, DePICT uses the linearized KKT sensitivity system of the flow relaxation; downstream decisions are evaluated with the shortest-path solver. This construction tests whether the recovered interface preserves both the cost-relevant primitive directions and the resulting route decisions.

% ============================================================
% Controlled KKT-Regime Experiments
% ============================================================

\subsubsection{Controlled benchmark Validation}
\label{sec:controlled}

The controlled experiments test the central claim of DePICT under settings in
which the context-to-optimization map is known and the decision-relevant
coordinates can be identified by construction. They serve a different purpose
from the real and edge-regime experiments in Section~\ref{sec:exp_edge}: rather
than asking whether sensitivity-based interfaces transfer to an imperfect or
nonsmooth downstream pipeline, they test whether DePICT identifies the
primitive context directions that actually move the constrained optimum.

We consider two contextual optimization problems: fractional knapsack and
shortest-path flow. In both cases, a high-dimensional primitive context
$c \in \mathbb{R}^{n_1}$ determines the coefficients of a downstream
optimization problem through a known differentiable map. Only a sparse subset
of the coordinates of $c$ enters this map. Because the data-generating process
is known, these benchmarks provide an oracle support against which the
selected interface can be evaluated directly.

\begin{table*}[t]
\centering
\caption{
Full controlled KKT-regime results corresponding to Figure~\ref{fig:controlled_synthetic_main}. DePICT recovers the oracle interface exactly while preserving downstream decision quality under matched linear and MLP predictors. Lower regret and feasibility violation are better; higher objective value, path match, precision, and support F1 are better. Values are averaged over 10 independent seeds.
}
\label{tab:controlled_synthetic_appendix}
\setlength{\tabcolsep}{3.2pt}
\renewcommand{\arraystretch}{1.12}
\scriptsize
\resizebox{\textwidth}{!}{
\begin{tabular}{lcccccc cccccc}
\toprule
& \multicolumn{6}{c}{\textbf{Knapsack}}
& \multicolumn{6}{c}{\textbf{Shortest Path}} \\
\cmidrule(lr){2-7} \cmidrule(lr){8-13}
\textbf{Method}
& $m$
& Regret $\downarrow$
& Obj. $\uparrow$
& Feas. $\downarrow$
& Prec. $\uparrow$
& F1 $\uparrow$
& $m$
& Regret $\downarrow$
& Obj. $\uparrow$
& Feas. $\downarrow$
& Path $\uparrow$
& F1 $\uparrow$ \\
\midrule
\multicolumn{13}{l}{\textbf{Nonlinear predictor (MLP)}} \\
\midrule
Random
& 60 & 3.1588 & 8.5163 & 2.70e-04 & 0.2833 & 0.2720
& 160 & 2.9469 & -8.1239 & 7.35e-06 & 0.7593 & 0.3920 \\

Mutual Information \cite{peng2005mrmr}
& 60 & 0.2394 & 11.4357 & 1.40e-04 & 0.9833 & 0.9440
& 160 & 0.8043 & -5.9812 & 8.19e-06 & 0.8452 & 0.8125 \\

LASSO \cite{tibshirani1996lasso}
& 60 & 0.3115 & 11.3636 & 3.51e-04 & 0.8667 & 0.8320
& 160 & 1.2531 & -6.4300 & 6.15e-06 & 0.8156 & 0.5966 \\

FD Ranking \cite{lin2023removal}
& 60 & 2.0047 & 9.6704 & 6.07e-04 & 0.6500 & 0.6240
& 160 & 2.7098 & -7.8867 & 8.46e-06 & 0.7657 & 0.4602 \\

RFE \cite{kohavi1997rfe}
& 60 & 4.6177 & 7.0574 & \textbf{2.18e-05} & 0.5000 & 0.5430
& 160 & 4.3437 & -9.5206 & 6.56e-06 & 0.7293 & 0.0000 \\

PFMM \cite{sharma2023pfmm}
& 60 & 0.1850 & 11.4901 & 1.76e-04 & 0.9833 & 0.9440
& 160 & 0.9382 & -6.1152 & 6.84e-06 & 0.8337 & 0.6989 \\

KMHBO \cite{ye2026kmhbo}
& 60 & 0.2344 & 11.4407 & 1.84e-04 & 1.0000 & 0.9600
& 160 & 0.6804 & -5.8574 & 7.37e-06 & 0.8533 & 0.8807 \\

\textbf{DePICT}
& 60 & \textbf{0.1226} & \textbf{11.5525} & 2.00e-04 & \textbf{1.0000} & \textbf{1.0000}
& 160 & \textbf{0.6180} & \textbf{-5.7950} & \textbf{6.54e-06} & \textbf{0.8535} & \textbf{1.0000} \\

\midrule
\multicolumn{13}{l}{\textbf{Linear}} \\
\midrule
Random
& 60 & 2.6429 & 9.0322 & 9.63e-05 & 0.2833 & 0.2720
& 160 & 2.2091 & -7.3860 & 5.86e-06 & 0.7769 & 0.3920 \\

Mutual Information \cite{peng2005mrmr}
& 60 & 0.2003 & 11.4748 & 4.90e-05 & 0.9833 & 0.9440
& 160 & 0.3639 & -5.5408 & \textbf{5.30e-06} & 0.8856 & 0.8125 \\

LASSO \cite{tibshirani1996lasso}
& 60 & 0.2457 & 11.4294 & 3.70e-04 & 0.8667 & 0.8320
& 160 & 0.6314 & -5.8083 & 5.69e-06 & 0.8545 & 0.5966 \\

FD Ranking \cite{lin2023removal}
& 60 & 1.4950 & 10.1801 & 1.55e-05 & 0.6500 & 0.6240
& 160 & 2.0654 & -7.2424 & 6.09e-06 & 0.7929 & 0.4602 \\

RFE \cite{kohavi1997rfe}
& 60 & 4.0556 & 7.6195 & \textbf{2.23e-06} & 0.5000 & 0.5430
& 160 & 4.1896 & -9.3665 & 7.53e-06 & 0.7157 & 0.6000 \\

PFMM \cite{sharma2023pfmm}
& 60 & 0.1726 & 11.5025 & 3.36e-05 & 0.9833 & 0.9440
& 160 & 0.4807 & -5.6577 & 6.51e-06 & 0.8704 & 0.6989 \\

KMHBO \cite{ye2026kmhbo}
& 60 & 0.1983 & 11.4768 & 5.03e-05 & 1.0000 & 0.9600
& 160 & 0.2826 & -5.4595 & 6.40e-06 & 0.8941 & 0.8807 \\

\textbf{DePICT}
& 60 & \textbf{0.1315} & \textbf{11.5436} & 6.74e-06 & \textbf{1.0000} & \textbf{1.0000}
& 160 & \textbf{0.2204} & \textbf{-5.3973} & 7.16e-06 & \textbf{0.8997} & \textbf{1.0000} \\
\bottomrule
\end{tabular}
}
\end{table*}

The experimental protocol separates \emph{interface construction} from
\emph{downstream prediction}. For each selector, we first construct an
interface
\[
    \mathcal{I} \subseteq \{1,\ldots,n_1\}
\]
using only the designated selection split. After $\mathcal{I}$ is fixed, we
train the same downstream predictor on $c_{\mathcal{I}}$ and pass its predicted
optimization parameters to the same downstream solver. Thus, within a
predictor family, methods differ only in the interface supplied to the
predictor.

This design evaluates two distinct questions:
\begin{enumerate}
    \item \textbf{Interface recovery:}
    does the selector retain the primitive directions that are genuinely
    relevant to the downstream optimizer?
    \item \textbf{Decision preservation:}
    after restricting the predictor to the selected interface, does the
    resulting optimization pipeline recover decisions of comparable quality
    to the oracle pipeline?
\end{enumerate}

% ------------------------------------------------------------
\subsubsection{DePICT scoring in the controlled benchmarks}
\label{sec:controlled_depict_scoring}

For a representative context $c_t$, let the downstream problem be written as
\begin{equation}
    \theta^\star(c_t)
    \in
    \arg\min_{\theta}
    f(\theta;c_t)
    \qquad
    \text{s.t.}
    \qquad
    m_i(\theta;c_t) \geq 0,
    \quad i=1,\ldots,K.
    \label{eq:controlled_generic_problem}
\end{equation}

Let
\[
    z^\star(c)
    =
    \begin{bmatrix}
        \theta^\star(c) \\
        \lambda^\star(c)
    \end{bmatrix}
\]
denote the corresponding primal--dual solution and let
\[
    F(z,c)=0
\]
denote its local KKT residual system. At a regular solution, implicit
differentiation gives
\begin{equation}
    \frac{\partial z^\star}{\partial c}
    =
    -
    \left(
        \frac{\partial F}{\partial z}
    \right)^{-1}
    \frac{\partial F}{\partial c}.
    \label{eq:controlled_implicit}
\end{equation}

DePICT retains the primal block,
\begin{equation}
    J(c)
    :=
    \frac{\partial \theta^\star(c)}{\partial c},
    \label{eq:controlled_solution_jacobian}
\end{equation}
and assigns primitive coordinate $j$ the pointwise score
\begin{equation}
    s_t(j)
    =
    \left\|
        J(c_t)_{:,j}
    \right\|_2.
    \label{eq:controlled_score}
\end{equation}

The score therefore has a direct interpretation: it measures the first-order
motion of the optimal decision induced by perturbing primitive context
coordinate $c_j$, while holding the remaining coordinates fixed.

We compute these scores over representative contexts
$\{c_t\}_{t=1}^{T}$ from the selection split and aggregate them into a
regime-level score
\begin{equation}
    \bar{s}(j)
    =
    \operatorname{Agg}
    \left(
        s_1(j),\ldots,s_T(j)
    \right).
    \label{eq:controlled_aggregate}
\end{equation}

For a fixed interface budget $m$, the top-$m$ DePICT interface is
\begin{equation}
    \widehat{\mathcal{I}}_m
    =
    \operatorname{TopM}
    \left\{
        \bar{s}(j)
    \right\}_{j=1}^{n_1}.
    \label{eq:controlled_topm}
\end{equation}

We additionally evaluate persistence-based variants that retain coordinates
whose sensitivity is repeatedly nonzero across the operating regime. These
variants distinguish a coordinate that is important at a single context from
one whose optimizer relevance persists across contexts.

% ============================================================
% KNAPSACK
% ============================================================

\subsubsection{Controlled contextual knapsack}
\label{sec:controlled_knapsack}

\paragraph{Primitive context.}
The contextual knapsack benchmark contains $n=20$ items. Each item is
associated with $p=10$ primitive contextual coordinates,
\[
    c_i \in \mathbb{R}^{10},
    \qquad i=1,\ldots,20,
\]
giving the flattened ambient context
\begin{equation}
    c
    =
    [c_1^\top,\ldots,c_{20}^\top]^\top
    \in \mathbb{R}^{200}.
    \label{eq:knapsack_context}
\end{equation}

Contexts are sampled independently from a standard Gaussian distribution.
Only a sparse subset of coordinates affects the optimization coefficients;
the remaining coordinates are nuisance variables.

\paragraph{Context-to-value map.}
For item $i$, the true value is generated as
\begin{equation}
    v_i(c)
    =
    \operatorname{softplus}
    \left(
        \theta_{v,i}^{\top} c_i
        +
        0.35\sin(c_{i,1})
        +
        0.20 c_{i,2}c_{i,3}
    \right)
    +0.20,
    \label{eq:knapsack_value}
\end{equation}
where $\theta_{v,i}$ is sparse by construction. The nonlinear terms make the
context-to-value relation non-affine while preserving a known primitive
support.

The item weights are weakly context dependent:
\begin{equation}
    w_i(c)
    =
    \bar{w}_i
    \left[
        1
        +
        0.06
        \tanh
        \left(
            \theta_{w,i}^{\top}c_i
        \right)
    \right],
    \label{eq:knapsack_weight}
\end{equation}
where $\bar{w}_i$ is a fixed item-specific base weight. The available capacity
is set to a fixed fraction of the total contextual weight,
\begin{equation}
    B(c)
    =
    0.30
    \sum_{i=1}^{n}
    w_i(c).
    \label{eq:knapsack_budget}
\end{equation}

Because the coefficient maps in
Eqs.~\eqref{eq:knapsack_value}--\eqref{eq:knapsack_budget} are generated by us,
the benchmark provides ground-truth annotations for the primitive coordinates
that enter the downstream optimization parameters.

\paragraph{Downstream optimization problem.}
The controlled implementation uses fractional knapsack:
\begin{equation}
    x^\star(c)
    \in
    \arg\max_{x\in\mathbb{R}^{n}}
    v(c)^\top x
    \qquad
    \text{s.t.}
    \qquad
    w(c)^\top x \leq B(c),
    \qquad
    0 \leq x_i \leq 1.
    \label{eq:fractional_knapsack}
\end{equation}

For consistency with the minimization convention used in the theory, we write
the objective equivalently as
\begin{equation}
    f(x;c)
    =
    -v(c)^\top x.
    \label{eq:knapsack_min_objective}
\end{equation}

The inequality functions are
\begin{align}
    g_{\mathrm{bud}}(x;c)
    &=
    w(c)^\top x-B(c)
    \leq 0,
    \label{eq:knapsack_budget_constraint}
    \\
    g_i^{\mathrm{lo}}(x)
    &=
    -x_i
    \leq 0,
    \label{eq:knapsack_lower}
    \\
    g_i^{\mathrm{up}}(x)
    &=
    x_i-1
    \leq 0.
    \label{eq:knapsack_upper}
\end{align}

This continuous formulation gives an explicit local KKT system through which
we can compute the optimizer sensitivity with respect to the primitive
context.

\paragraph{Implicit sensitivity.}
Let $\lambda_{\mathrm{bud}}$ denote the multiplier of the capacity constraint,
and let $\lambda^{\mathrm{lo}}$ and $\lambda^{\mathrm{up}}$ denote the
multipliers associated with the box constraints. The stationarity condition is
\begin{equation}
    -v(c)
    +
    \lambda_{\mathrm{bud}} w(c)
    -
    \lambda^{\mathrm{lo}}
    +
    \lambda^{\mathrm{up}}
    =
    0.
    \label{eq:knapsack_stationarity}
\end{equation}

Together with complementary slackness, these equations define a residual
system
\[
    F_{\mathrm{KS}}(z,c)=0.
\]

For every representative context $c_t$, we first solve
Eq.~\eqref{eq:fractional_knapsack} to obtain a reference primal solution and
then refine the associated primal--dual state against the explicit KKT
residual. We differentiate this residual with respect to both the primal--dual
state and the primitive context and solve
\begin{equation}
    \frac{\partial z^\star}{\partial c}
    =
    -
    \left(
        \frac{\partial F_{\mathrm{KS}}}{\partial z}
        +
        \varepsilon_{\mathrm{reg}} I
    \right)^{-1}
    \frac{\partial F_{\mathrm{KS}}}{\partial c}.
    \label{eq:knapsack_implicit}
\end{equation}

The primal block yields
\begin{equation}
    J_{\mathrm{KS}}(c)
    =
    \frac{\partial x^\star(c)}{\partial c}
    \in
    \mathbb{R}^{n\times 200}.
    \label{eq:knapsack_jacobian}
\end{equation}

Coordinate $j$ is then scored by
\begin{equation}
    s_j(c)
    =
    \left\|
        \frac{\partial x^\star(c)}
             {\partial c_j}
    \right\|_2.
    \label{eq:knapsack_score}
\end{equation}

Thus DePICT does not ask whether $c_j$ predicts item value well in isolation.
It asks whether perturbing $c_j$ moves the optimal allocation.

\paragraph{Post-selection prediction.}
After an interface $\mathcal{I}$ has been selected, we discard the remaining
primitive coordinates and train a predictor using only
\[
    c_{\mathcal{I}}.
\]

For the linear predictor,
\begin{equation}
    \widehat{v}
    =
    W_{\mathcal{I}}c_{\mathcal{I}}+b,
    \label{eq:knapsack_linear_predictor}
\end{equation}
where $W_{\mathcal{I}}$ and $b$ are fitted by least squares on the training
split. The nonlinear predictor uses the same selected interface but replaces
the affine map with a multilayer perceptron.

The predicted values are then passed to the same downstream optimization
problem:
\begin{equation}
    \widehat{x}
    \in
    \arg\max_x
    \widehat{v}^{\top}x
    \qquad
    \text{s.t.}
    \qquad
    w(c)^\top x\leq B(c),
    \quad
    0\leq x\leq1.
    \label{eq:knapsack_prediction_problem}
\end{equation}

Importantly, interface selection is complete before this predictor is fitted.
DePICT therefore does not obtain an advantage from a different predictor,
training objective, or downstream solver.

% ============================================================
% SHORTEST PATH
% ============================================================

\subsubsection{Controlled contextual shortest path}
\label{sec:controlled_shortest_path}

\paragraph{Graph and primitive context.}
The controlled shortest-path benchmark uses a fixed directed $6\times6$ grid.
Edges connect each node to its right and lower neighbors, producing an acyclic
directed graph
\[
    G=(V,E).
\]

Each edge $e\in E$ has $p=8$ primitive contextual coordinates,
\[
    c_e\in\mathbb{R}^{8},
\]
and the full primitive context is
\begin{equation}
    c
    =
    [c_1^\top,\ldots,c_{|E|}^\top]^\top
    \in
    \mathbb{R}^{8|E|}.
    \label{eq:sp_context}
\end{equation}

For the $6\times6$ right/down grid, $|E|=60$, so the ambient context dimension
is $480$.

\paragraph{Context-to-cost map.}
The true cost of edge $e$ is generated by
\begin{equation}
    a_e(c)
    =
    \operatorname{softplus}
    \left(
        \theta_e^\top c_e
        +
        0.30\sin(c_{e,1})
        +
        0.15c_{e,2}c_{e,3}
    \right)
    +0.05,
    \label{eq:sp_cost}
\end{equation}
where $\theta_e$ is sparse. Hence only a subset of the primitive coordinates
can affect the true edge-cost vector
\[
    a(c)\in\mathbb{R}^{|E|}.
\]

The known generator provides oracle support labels for evaluating interface
recovery.

\paragraph{Flow formulation of shortest path.}
Let
\[
    x_e
\]
denote the amount of source-to-destination flow assigned to edge $e$. Let
\[
    A_{\mathrm{flow}}
    \in
    \mathbb{R}^{|V|\times|E|}
\]
be the node--edge incidence matrix and let
$b_{\mathrm{flow}}\in\mathbb{R}^{|V|}$ contain $+1$ at the source, $-1$ at
the destination, and zero elsewhere. Feasible source-to-destination flows
satisfy
\begin{equation}
    A_{\mathrm{flow}}x
    =
    b_{\mathrm{flow}},
    \qquad
    0\leq x\leq1.
    \label{eq:sp_flow_constraints}
\end{equation}

Without regularization, the usual shortest-path linear program is
\begin{equation}
    \min_x
    a(c)^\top x
    \qquad
    \text{s.t.}
    \qquad
    A_{\mathrm{flow}}x=b_{\mathrm{flow}},
    \quad
    0\leq x\leq1.
    \label{eq:sp_lp}
\end{equation}

The network-flow constraint matrix is totally unimodular, so with integral
flow demand the vertices of this polytope are integral. This is the classical
connection between the combinatorial shortest-path problem and its continuous
flow formulation.

\paragraph{Differentiable local sensitivity model.}
The discrete shortest-path map is unsuitable for ordinary local
differentiation. As the edge costs vary, the optimal discrete path remains
unchanged over open regions and switches discontinuously at path boundaries.
Consequently, differentiating the discrete argmin directly does not provide a
useful local sensitivity signal.

For interface construction, we therefore use the same flow polytope with a
small strongly convex regularizer:
\begin{equation}
    \widetilde{x}^{\star}(c)
    =
    \arg\min_x
    \left[
        a(c)^\top x
        +
        \varepsilon_{\mathrm{path}}
        \|x\|_2^2
    \right]
    \label{eq:sp_regularized_qp_obj}
\end{equation}
subject to
\begin{equation}
    A_{\mathrm{flow}}x=b_{\mathrm{flow}},
    \qquad
    0\leq x\leq1,
    \label{eq:sp_regularized_qp_constraints}
\end{equation}
with
\[
    \varepsilon_{\mathrm{path}}
    =
    5\times10^{-3}
\]
in the checked-in implementation.

The quadratic term is introduced to provide a well-conditioned local
sensitivity model. We use this regularized flow problem to define the
Jacobian employed for interface scoring; we do \emph{not} rely on the
integrality of its optimizer. The integrality statement applies to the
unregularized linear program in Eq.~\eqref{eq:sp_lp}, whereas
Eq.~\eqref{eq:sp_regularized_qp_obj} is a strongly convex quadratic program.

This distinction is important: the continuous problem is an instrument for
estimating which context directions influence the downstream route geometry,
rather than a claim that the discrete shortest-path argmin itself is
differentiable.

\paragraph{KKT system.}
Let $\nu$ denote the multiplier associated with the flow-conservation
equalities and let
$\lambda^{\mathrm{lo}},\lambda^{\mathrm{up}}\geq0$ denote the lower- and
upper-bound multipliers. The stationarity equation is
\begin{equation}
    a(c)
    +
    2\varepsilon_{\mathrm{path}}x
    +
    A_{\mathrm{flow}}^\top\nu
    -
    \lambda^{\mathrm{lo}}
    +
    \lambda^{\mathrm{up}}
    =
    0.
    \label{eq:sp_stationarity}
\end{equation}

Together with
\begin{align}
    A_{\mathrm{flow}}x-b_{\mathrm{flow}} &= 0,
    \label{eq:sp_kkt_flow}
    \\
    \lambda^{\mathrm{lo}}\odot(-x) &= 0,
    \label{eq:sp_comp_lower}
    \\
    \lambda^{\mathrm{up}}\odot(x-1) &= 0,
    \label{eq:sp_comp_upper}
\end{align}
this defines the residual
\[
    F_{\mathrm{SP}}(z,c)=0.
\]

We initialize the primal and dual variables from a numerical constrained solve,
refine them against the explicit KKT residual, and then compute
\begin{equation}
    \frac{\partial z^\star}{\partial c}
    =
    -
    \left(
        \frac{\partial F_{\mathrm{SP}}}{\partial z}
        +
        \varepsilon_{\mathrm{reg}}I
    \right)^{-1}
    \frac{\partial F_{\mathrm{SP}}}{\partial c}.
    \label{eq:sp_implicit}
\end{equation}

The primal block is
\begin{equation}
    J_{\mathrm{SP}}(c)
    =
    \frac{\partial\widetilde{x}^{\star}(c)}
         {\partial c}
    \in
    \mathbb{R}^{|E|\times 8|E|}.
    \label{eq:sp_jacobian}
\end{equation}

Primitive coordinate $j$ receives score
\begin{equation}
    s_j(c)
    =
    \left\|
        J_{\mathrm{SP}}(c)_{:,j}
    \right\|_2.
    \label{eq:sp_feature_score}
\end{equation}

A high score means that a small perturbation of $c_j$ produces substantial
first-order motion in the optimal regularized flow; a zero score means that
the local KKT geometry annihilates that perturbation.

\paragraph{Why the flow relaxation is used.}
The role of the continuous optimization problem can be summarized as
\begin{equation}
    \underbrace{c}_{\text{primitive context}}
    \longrightarrow
    \underbrace{a(c)}_{\text{edge costs}}
    \longrightarrow
    \underbrace{\widetilde{x}^{\star}(c)}_{\text{regularized flow}}
    \longrightarrow
    \underbrace{
        \frac{\partial\widetilde{x}^{\star}}{\partial c}
    }_{\text{interface signal}}.
    \label{eq:sp_scoring_pipeline}
\end{equation}

We use the regularized flow QP as the downstream routing model throughout the reported shortest-path experiments. The quadratic term removes local ambiguity, yields a unique optimizer map suitable for KKT sensitivity, and is retained during downstream evaluation. We do not claim that this QP is identical to the discrete shortest-path argmin.

\paragraph{Post-selection prediction.}
After selecting an interface $\mathcal{I}$, the downstream predictor receives
only $c_{\mathcal{I}}$ and predicts the full edge-cost vector. For the linear
model,
\begin{equation}
    \widehat{a}
    =
    W_{\mathcal{I}}c_{\mathcal{I}}+b,
    \label{eq:sp_linear_predictor}
\end{equation}
with parameters fitted by least squares on the training split. The MLP
predictor receives the same selected coordinates and predicts the same target
$a$.

The predicted costs are then passed to the downstream flow optimization:
\begin{equation}
    \widehat{x}
    =
    \arg\min_x
    \left[
        \widehat{a}^{\top}x
        +
        \varepsilon_{\mathrm{path}}\|x\|_2^2
    \right]
    \qquad
    \text{s.t.}
    \qquad
    A_{\mathrm{flow}}x=b_{\mathrm{flow}},
    \quad
    0\leq x\leq1.
    \label{eq:sp_predicted_decision}
\end{equation}

The oracle reference uses the true edge costs:
\begin{equation}
    x^{\mathrm{ref}}
    =
    \arg\min_x
    \left[
        a(c)^\top x
        +
        \varepsilon_{\mathrm{path}}\|x\|_2^2
    \right]
    \qquad
    \text{s.t.}
    \qquad
    A_{\mathrm{flow}}x=b_{\mathrm{flow}},
    \quad
    0\leq x\leq1.
    \label{eq:sp_reference_decision}
\end{equation}

% ============================================================
% MATCHED EVALUATION
% ============================================================

\subsubsection{Matched post-selection evaluation}
\label{sec:controlled_eval}

The selector is evaluated only through the interface it returns. Once
$\mathcal{I}$ is fixed, DePICT and all baselines use the same:
\begin{itemize}
    \item training, validation, and test splits;
    \item interface budget;
    \item downstream predictor family;
    \item prediction target;
    \item predictor-training objective; and
    \item downstream optimization solver.
\end{itemize}

This matched protocol isolates interface quality from predictor capacity and
solver choice.

\paragraph{Prediction targets.}
For knapsack, the predictor estimates the item-value vector,
\[
    c_{\mathcal{I}}
    \longmapsto
    \widehat{v}.
\]
For shortest path, it estimates the edge-cost vector,
\[
    c_{\mathcal{I}}
    \longmapsto
    \widehat{a}.
\]

The linear predictor provides a deliberately restricted diagnostic of
interface sufficiency: because it cannot reconstruct omitted information
through nonlinear interactions, differences between interfaces are exposed
directly. The MLP experiment asks whether those differences persist when the
downstream model has additional capacity to exploit correlations among the
retained coordinates.

% ============================================================
% METRICS
% ============================================================

\subsubsection{Evaluation metrics}
\label{sec:controlled_metrics}

\paragraph{Support recovery.}
Because the controlled generators provide oracle primitive supports
$\mathcal{S}^{\star}$, we evaluate
\begin{align}
    \operatorname{Precision}
    &=
    \frac{
        |\widehat{\mathcal{I}}\cap\mathcal{S}^{\star}|
    }{
        |\widehat{\mathcal{I}}|
    },
    \\
    \operatorname{Recall}
    &=
    \frac{
        |\widehat{\mathcal{I}}\cap\mathcal{S}^{\star}|
    }{
        |\mathcal{S}^{\star}|
    },
    \\
    F_1
    &=
    \frac{
        2\,\operatorname{Precision}\operatorname{Recall}
    }{
        \operatorname{Precision}+\operatorname{Recall}
    }.
    \label{eq:support_metrics}
\end{align}

These metrics test whether the selected interface agrees with the
construction-level notion of relevant primitive coordinates.

\paragraph{Decision regret.}
For a test context $c_t$, let $\theta_t^{\mathrm{ref}}$ denote the decision
obtained from the true optimization parameters and let
$\widehat{\theta}_t$ denote the decision obtained after predicting those
parameters from the selected interface. We report
\begin{equation}
    \operatorname{Regret}
    =
    \frac{1}{N_{\mathrm{test}}}
    \sum_{t=1}^{N_{\mathrm{test}}}
    \left[
        \ell(\widehat{\theta}_t;c_t)
        -
        \ell(\theta_t^{\mathrm{ref}};c_t)
    \right],
    \label{eq:controlled_regret}
\end{equation}
with signs chosen so that lower regret is better.

For knapsack, decisions are evaluated using the true item values:
\begin{equation}
    \operatorname{Regret}_{\mathrm{KS},t}
    =
    v(c_t)^\top x_t^{\mathrm{ref}}
    -
    v(c_t)^\top\widehat{x}_t.
    \label{eq:knapsack_regret}
\end{equation}

For the regularized shortest-path experiment,
\begin{equation}
    \operatorname{Regret}_{\mathrm{SP},t}
    =
    \left[
        a(c_t)^\top\widehat{x}_t
        +
        \varepsilon_{\mathrm{path}}
        \|\widehat{x}_t\|_2^2
    \right]
    -
    \left[
        a(c_t)^\top x_t^{\mathrm{ref}}
        +
        \varepsilon_{\mathrm{path}}
        \|x_t^{\mathrm{ref}}\|_2^2
    \right].
    \label{eq:sp_regret}
\end{equation}

\paragraph{Feasibility.}
We additionally report downstream constraint violation. For knapsack,
\begin{equation}
    \operatorname{Feas}_{\mathrm{KS}}
    =
    \max
    \left\{
        0,\,
        w^\top\widehat{x}-B
    \right\}
    +
    \|[-\widehat{x}]_+\|_1
    +
    \|[\widehat{x}-1]_+\|_1.
    \label{eq:knapsack_feas}
\end{equation}

For shortest path,
\begin{equation}
    \operatorname{Feas}_{\mathrm{SP}}
    =
    \left\|
        A_{\mathrm{flow}}\widehat{x}
        -
        b_{\mathrm{flow}}
    \right\|_1
    +
    \|[-\widehat{x}]_+\|_1
    +
    \|[\widehat{x}-1]_+\|_1.
    \label{eq:sp_feas}
\end{equation}

\paragraph{Flow agreement.}
The checked-in implementation measures shortest-path agreement as the
fraction of edge coordinates whose optimized flows agree within numerical
tolerance:
\begin{equation}
    \operatorname{FlowMatch}
    =
    \frac{1}{|E|}
    \sum_{e\in E}
    \mathbf{1}
    \left\{
        \left|
            \widehat{x}_e-x_e^{\mathrm{ref}}
        \right|
        \leq\tau_{\mathrm{match}}
    \right\}.
    \label{eq:flow_match}
\end{equation}

We call this quantity \emph{flow agreement} rather than exact path match to
avoid conflating coordinatewise agreement of continuous flow solutions with
equality of two discrete paths.

% ============================================================
% WHAT EACH CONTROLLED EXPERIMENT ESTABLISHES
% ============================================================

\subsubsection{What the controlled experiments establish}
\label{sec:controlled_interpretation}

The controlled experiments separate three levels of evidence.

First, \emph{support recovery} tests whether optimizer sensitivity identifies
the primitive coordinates known to parameterize the downstream task.

Second, \emph{post-selection regret} tests whether preserving those
coordinates is sufficient to retain downstream decision quality after the
interface is passed to a separately trained predictor.

Third, the shortest-path benchmark tests whether the same interface principle
can be used when the underlying task has a combinatorial interpretation. In
this case, DePICT does not differentiate the discrete argmin. It differentiates
a regularized continuous flow problem during interface construction and uses
the resulting sensitivity only to rank primitive context coordinates.

This separation is essential to the interpretation of DePICT: the Jacobian is
used to construct the interface, not to train the downstream predictor or to
replace the downstream optimizer.

% ============================================================
% Real-Data and Edge-Regime Stress Tests
% ============================================================

\subsubsection{Real-Data and Edge-Regime Stress Tests}
\label{sec:real_edge}

The controlled experiments test DePICT where the context-to-optimizer map is
known and the relevant primitive coordinates are available by construction.
We next ask a different question: whether the same interface principle remains
useful when these favorable conditions are removed.

We consider two real-data benchmarks: Warcraft terrain routing and FF25
portfolio allocation. Neither benchmark provides an oracle
decision-preserving support, and neither is used as a direct validation of the
support-recovery theory. Instead, they test whether local optimizer
sensitivity can identify compact interfaces that preserve downstream decision
quality in solver-mediated pipelines with learned context-to-task mappings,
nonsmooth behavior, changing active constraints, or real observational data.

For both benchmarks, we first construct a differentiable map from primitive
context to the parameters of the downstream optimization problem. DePICT then
differentiates the resulting optimizer map,
\begin{equation}
    J(c)
    =
    \frac{\partial \theta^\star(c)}{\partial c},
\end{equation}
aggregates its column norms across representative contexts, and fixes an
interface $\mathcal I$. A new downstream predictor is subsequently fitted
using only $c_{\mathcal I}$. Thus the stress tests preserve the central
experimental separation:
\begin{equation}
    \text{task model}
    \longrightarrow
    \text{optimizer sensitivity}
    \longrightarrow
    \mathcal I
    \longrightarrow
    \text{reduced-interface predictor}
    \longrightarrow
    \text{downstream decision}.
    \label{eq:real_general_pipeline}
\end{equation}

% ============================================================
% WARCRAFT
% ============================================================

\subsubsection{Warcraft terrain routing}
\label{sec:warcraft}

\paragraph{Primitive image context.}
Warcraft provides RGB terrain maps together with terrain costs for a
shortest-path task. We use the $12\times12$ grid version. The original
high-resolution RGB image is block-averaged so that each logical grid cell is
represented by one RGB vector. The resulting primitive context is
\begin{equation}
    c
    \in
    \mathbb{R}^{12\times12\times3}
    \equiv
    \mathbb{R}^{432}.
    \label{eq:warcraft_context}
\end{equation}

Each primitive coordinate therefore corresponds to one color channel at one
grid-aligned terrain cell. We deliberately retain this low-level
representation rather than introducing semantic image features: the purpose
of the benchmark is to test whether DePICT can identify decision-relevant
directions in a structured, high-dimensional input.

\paragraph{Reference edge costs.}
The benchmark supplies a cost for each grid vertex. For a directed edge
$e=(u,v)$, we define its reference traversal cost as
\begin{equation}
    a_e
    =
    \frac{1}{2}
    \left(
        q_u + q_v
    \right),
    \label{eq:warcraft_edge_cost}
\end{equation}
where $q_u$ and $q_v$ are the benchmark-provided costs of the incident cells.
This produces a reference edge-cost vector
\[
    a \in \mathbb{R}^{|E|}.
\]

\paragraph{Differentiable context-to-cost model.}
The benchmark does not provide an analytical map from RGB values to edge
costs. We therefore fit this map using the training split.

For each directed edge $e=(u,v)$, the model receives the RGB vectors of its
two incident cells,
\begin{equation}
    \phi_e(c)
    =
    \begin{bmatrix}
        \operatorname{RGB}_u \\
        \operatorname{RGB}_v
    \end{bmatrix}
    \in\mathbb{R}^{6},
\end{equation}
and predicts
\begin{equation}
    \widetilde a_e(c)
    =
    \operatorname{softplus}
    \left(
        \beta_e^\top\phi_e(c)+b_e
    \right).
    \label{eq:warcraft_cost_model}
\end{equation}

The edge-specific coefficients are fitted from training maps and their
reference edge costs. This model is used only to provide a differentiable
context-to-task map for sensitivity computation; it does not define the
interface itself.

The resulting dependency is
\begin{equation}
    c
    \longrightarrow
    \widetilde a(c)
    \longrightarrow
    x^\star(c).
    \label{eq:warcraft_context_cost_path}
\end{equation}

\paragraph{Regularized flow optimization.}
Let $x_e$ denote flow on edge $e$ and let
$A_{\mathrm{flow}}x=b_{\mathrm{flow}}$ impose unit flow from the source to
the destination. For sensitivity computation, we solve
\begin{equation}
    \widetilde{x}^{\star}(c)
    =
    \arg\min_x
    \left[
        \widetilde a(c)^\top x
        +
        \varepsilon_{\mathrm{path}}\|x\|_2^2
    \right]
    \label{eq:warcraft_flow_objective}
\end{equation}
subject to
\begin{equation}
    A_{\mathrm{flow}}x=b_{\mathrm{flow}},
    \qquad
    0\leq x\leq1.
    \label{eq:warcraft_flow_constraints}
\end{equation}

We use
\[
    \varepsilon_{\mathrm{path}}=5\times10^{-3}.
\]

The quadratic term removes local ambiguity and yields a stable optimizer map
for implicit differentiation. Accordingly, we treat
Eqs.~\eqref{eq:warcraft_flow_objective}--\eqref{eq:warcraft_flow_constraints}
as a continuous sensitivity model of the routing problem; we do not claim
that the discrete shortest-path map itself is differentiable.

\paragraph{DePICT sensitivity.}
For each representative map, we solve the primal--dual KKT system associated
with the regularized flow problem and compute
\begin{equation}
    J_{\mathrm{WC}}(c)
    =
    \frac{
        \partial\widetilde{x}^{\star}(c)
    }{
        \partial c
    }
    \in
    \mathbb{R}^{|E|\times432}.
    \label{eq:warcraft_jacobian}
\end{equation}

Equivalently, the sensitivity propagates through both components of the
pipeline,
\begin{equation}
    \frac{\partial\widetilde{x}^{\star}}{\partial c}
    =
    \frac{\partial\widetilde{x}^{\star}}
         {\partial\widetilde a}
    \frac{\partial\widetilde a}{\partial c}.
    \label{eq:warcraft_chain}
\end{equation}

Primitive coordinate $j$ receives score
\begin{equation}
    s_j(c)
    =
    \left\|
        J_{\mathrm{WC}}(c)_{:,j}
    \right\|_2.
    \label{eq:warcraft_score}
\end{equation}

We aggregate these scores across the designated interface-selection maps and
select the highest-scoring coordinates at each target budget.

The interpretation is direct: a coordinate receives a high score when a
small perturbation of that cell-channel value induces substantial local motion
of the optimized flow.

\paragraph{Reduced-interface evaluation.}
After fixing $\mathcal I$, we train a new edge-cost predictor using only
$c_{\mathcal I}$:
\begin{equation}
    \widehat a
    =
    W_{\mathcal I}c_{\mathcal I}+b.
    \label{eq:warcraft_reduced_predictor}
\end{equation}

The predicted edge costs are passed to the same regularized flow optimization
problem to obtain $\widehat x$. The reference decision is obtained from the
benchmark-derived edge costs $a$.

We evaluate the selected interface using downstream regret, objective value,
feasibility, and agreement between the predicted-cost and reference-cost flow
solutions. Because Warcraft provides no oracle primitive support, we do not
report support precision, recall, or F1.

Thus Warcraft evaluates
\begin{equation}
\boxed{
    \text{terrain image}
    \rightarrow
    \text{optimizer sensitivity}
    \rightarrow
    \text{reduced RGB interface}
    \rightarrow
    \text{edge-cost prediction}
    \rightarrow
    \text{routing decision}.
}
\end{equation}

% ============================================================
% FF25
% ============================================================

\subsubsection{FF25 constrained portfolio allocation}
\label{sec:ff25}

\paragraph{Data and primitive context.}
The FF25 benchmark uses monthly returns from the Kenneth R.\ French
$5\times5$ size--book-to-market portfolio panel. At each decision month $t$,
we construct the primitive context exclusively from information available
before the realized return at month $t$.

For each of the $25$ portfolios, the context contains trailing return means
over $1$, $3$, $6$, and $12$ months; trailing volatilities over $3$, $6$, and
$12$ months; and return momentum over $1$, $6$, and $12$ months. We
add trailing means of the Fama--French factor series over $3$, $6$, and
$12$ months. The resulting context has dimension
\begin{equation}
    c_t\in\mathbb{R}^{262}.
    \label{eq:ff25_context}
\end{equation}

All features are backward-looking, preventing future-return information from
entering the decision at month $t$. We split the observations
chronologically into training, validation, and test periods.

\paragraph{Context-to-return model.}
The downstream optimizer requires an estimate of expected returns. We fit a
multivariate linear model on the training period,
\begin{equation}
    \mu(c)
    =
    W_\mu^\top c+b_\mu,
    \qquad
    \mu(c)\in\mathbb{R}^{25},
    \label{eq:ff25_mu}
\end{equation}
where the target is the vector of realized portfolio returns at the
corresponding month.

This model supplies the differentiable context dependence required by the
optimizer:
\begin{equation}
    c_t
    \longrightarrow
    \mu(c_t)
    \longrightarrow
    w^\star(c_t).
    \label{eq:ff25_context_optimizer}
\end{equation}

The rolling covariance matrix
$\Sigma_t$ is estimated from the preceding $60$ months and is treated as
fixed with respect to the local context perturbation used for DePICT scoring.

\paragraph{Constrained portfolio problem.}
At each decision month, we solve the convex program
\begin{equation}
    w_t^\star
    =
    \arg\min_w
    \left[
        -\mu(c_t)^\top w
        +
        \gamma w^\top\Sigma_t w
        +
        \eta
        \|w-w_{t-1}\|_2^2
    \right],
    \label{eq:ff25_objective}
\end{equation}
subject to
\begin{align}
    \mathbf 1^\top w &= 1,
    \label{eq:ff25_budget}
    \\
    0\leq w_i &\leq u,
    \qquad i=1,\ldots,25,
    \label{eq:ff25_box}
    \\
    \ell
    \leq
    F_t^\top w
    &\leq
    u_F.
    \label{eq:ff25_exposure}
\end{align}

The implementation uses
\[
    \gamma=5,
    \qquad
    \eta=1,
    \qquad
    u=0.10.
\]

The matrix $F_t$ contains rolling factor exposures to the market, SMB, and
HML factors. Exposure bounds are determined from the training period and
remain fixed during test evaluation. The turnover term couples consecutive
portfolio decisions while remaining differentiable with respect to the
current portfolio weights.

\paragraph{DePICT sensitivity.}
Let
\[
    z_t^\star
    =
    \left(
        w_t^\star,
        \lambda_t^\star,
        \nu_t^\star
    \right)
\]
denote the local primal--dual solution. From the portfolio KKT residual
\[
    F_{\mathrm{PF}}(z_t,c_t)=0,
\]
implicit differentiation gives
\begin{equation}
    \frac{\partial z_t^\star}{\partial c_t}
    =
    -
    \left(
        \frac{\partial F_{\mathrm{PF}}}{\partial z_t}
        +
        \varepsilon_{\mathrm{reg}}I
    \right)^{-1}
    \frac{\partial F_{\mathrm{PF}}}{\partial c_t}.
    \label{eq:ff25_implicit}
\end{equation}

The primal block yields
\begin{equation}
    J_{\mathrm{FF25}}(c_t)
    =
    \frac{\partial w_t^\star}{\partial c_t}
    \in
    \mathbb{R}^{25\times262}.
    \label{eq:ff25_jacobian}
\end{equation}

The pointwise importance of primitive coordinate $j$ is
\begin{equation}
    s_{t,j}
    =
    \left\|
        J_{\mathrm{FF25}}(c_t)_{:,j}
    \right\|_2.
    \label{eq:ff25_score}
\end{equation}

For the reported implementation, Jacobian evidence used for feature selection
is restricted to training-period dates, with at most $120$ representative
dates entering the aggregation. Test observations never contribute to
interface construction.

After aggregating the scores,
\begin{equation}
    \bar{s}_j
    =
    \operatorname{Agg}_t
    \left(
        s_{t,j}
    \right),
\end{equation}
we construct fixed-budget interfaces by retaining the coordinates with the
largest aggregated sensitivities.

\paragraph{Reduced-interface evaluation.}
For each selected interface $\mathcal I$, we refit the expected-return model
using only the retained coordinates:
\begin{equation}
    \widehat{\mu}_{\mathcal I}(c_t)
    =
    W_{\mathcal I}^\top
    c_{t,\mathcal I}
    +
    b_{\mathcal I}.
    \label{eq:ff25_reduced_mu}
\end{equation}

The reduced-interface and full-context models are then passed independently
through the same constrained portfolio optimizer over the chronological test
period. Each path carries forward its own previous portfolio weights, so the
turnover penalty is evaluated consistently with the decisions generated by
that path.

We report realized out-of-sample Sharpe ratio, mean return, volatility,
turnover, feasibility, and the mean optimization-objective gap between the
reduced- and full-context pipelines. Because the data provide no oracle
decision-relevant feature set, these metrics evaluate preservation of
downstream behavior rather than support recovery.

% ============================================================
% INTERPRETATION
% ============================================================

\subsubsection{Interpretation of the real-data experiments}
\label{sec:real_interpretation}

These experiments test transfer of the DePICT principle, not the exactness of
the local theory. The controlled benchmarks establish whether the
solution-sensitivity criterion recovers known decision-relevant directions.
Warcraft and FF25 instead ask whether the same criterion remains useful when
the context-to-task relationship must be estimated from data and the
deployment pipeline departs from the idealized fixed-regime setting.

Accordingly, we make no claim that the interfaces selected on these datasets
are globally minimal or equal to an unknown oracle support. The relevant
empirical question is narrower: at a matched interface budget and under the
same downstream prediction and optimization pipeline, does selecting context
according to optimizer sensitivity preserve decision quality better than
alternative interfaces?

The two benchmarks exercise complementary departures from the controlled
setting. Warcraft introduces structured image inputs and a learned
context-to-cost map before a constrained routing problem. FF25 retains a
smooth convex optimizer but introduces real temporal data, estimated expected
returns and covariances, changing portfolio states, and time-varying
constraint geometry. Positive results on these tasks therefore provide
evidence that decision-preserving interface selection can transfer beyond the
settings in which exact support recovery is available, without extending the
formal guarantees beyond their stated assumptions.

\subsubsection{Additional Results and Ablations}
\label{app:ablations}

This section reports extended results underlying the main-paper tables and figures: full budget sweeps for the controlled benchmarks, selector-isolated results under linear and nonlinear predictors, portfolio-allocation tables, and ablations over Jacobian aggregation and instance filtering. These results test whether the main conclusions are robust to budget choice, aggregation rule, representative-context filtering, and predictor family.
\subsubsection{Seed variability across benchmarks}
\label{app:seed_variability}

To quantify run-to-run variability, we repeat the representative linear-predictor regret comparisons over 10 independent seeds. Seeds vary
synthetic data generation, predictor initialization, and randomized selectors where applicable; data splits and evaluation protocols are matched across methods within each seed. Figure~\ref{fig:seed_variance_app} reports mean performance with \(\pm 1\) standard deviation.

Across controlled knapsack, controlled shortest path, Warcraft, and FF25, DePICT attains the lowest mean regret among the compared reduced-interface methods. The controlled KKT-regime benchmarks show the clearest separation,
consistent with the theory, while FF25 exhibits larger overlap among error bars, consistent with its role as a real-data tradeoff benchmark rather
than a direct validation of the smooth KKT theorem.

\begin{figure*}[t]
\centering
\includegraphics[width=\textwidth]{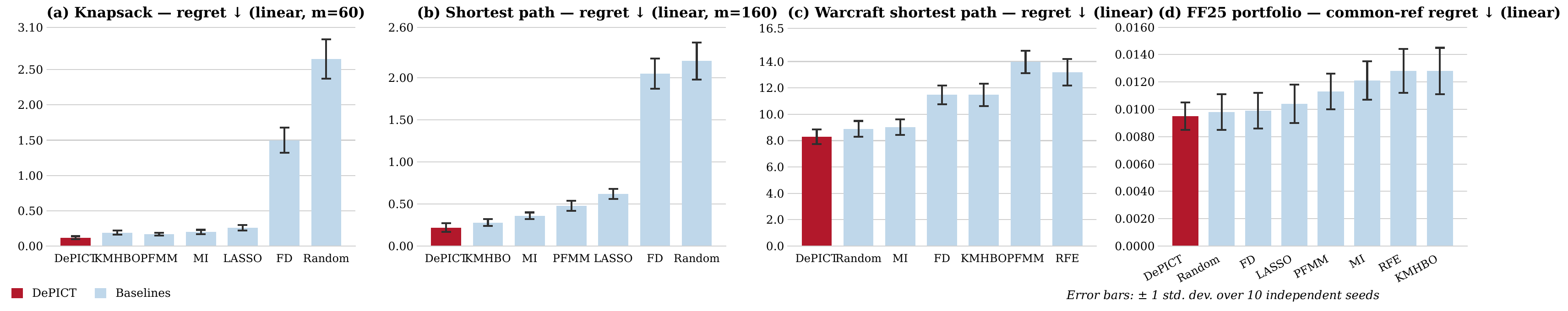}
\vspace{-2mm}
\caption{
\textbf{Seed variability across representative regret comparisons.}
Bars report mean regret and error bars report \(\pm 1\) standard deviation
over 10 independent seeds. Panels cover controlled knapsack, controlled
shortest path, Warcraft shortest path, and FF25 portfolio allocation under
representative linear-predictor settings. Lower is better.
}
\label{fig:seed_variance_app}
\vspace{-3mm}
\end{figure*}

\paragraph{Edge Regime.}
Table~\ref{tab:warcraft_ff25_main} reports solver-mediated performance on Warcraft uses the regularized flow QP derived from the shortest-path problem. Its challenge relative to the controlled setting comes from the learned image-to-cost map and changing instance geometry rather than from a discrete downstream argmin.

On Warcraft, DePICT achieves the lowest regret under both predictor families. Under the linear predictor, where nonlinear compensation is unavailable, DePICT obtains regret \(8.2974\), improving over the nearest
baseline at \(8.8871\), and also gives the highest path match (\(0.8286\)). Under the nonlinear predictor (MLP), selector gaps are much smaller, but DePICT remains best in regret (\(12.0559\)) and path match
(\(0.8368\)). This pattern is consistent with the edge-regime interpretation: the smooth sensitivity signal is not guaranteed for the discrete decision map, but it remains useful, especially when predictor capacity cannot mask missing interface directions.

On FF25, the comparison is decision-focused because no oracle support is available. At matched budget \(m=40\), DePICT achieves the lowest common-reference regret in the linear setting (\(0.0095\)) and the lowest turnover (\(0.0619\)), while maintaining competitive Sharpe. This indicates that the selected interface preserves portfolio quality while
reducing trading variation. Under the MLP predictor, DePICT attains the best reported common-reference objective and the lowest or tied-lowest regret among reduced-interface methods, although RFE and PFMM obtain the best Sharpe and turnover respectively. Overall, the FF25 results support a tradeoff view: DePICT is not uniformly best on every financial metric,
but it lies on a favorable regret-Sharpe-turnover frontier.

Taken together, Warcraft and FF25 show that the interface selected from optimizer-side sensitivity can transfer beyond the assumptions of the main theorem. The evidence is strongest in the linear-predictor setting, where differences between interfaces are least likely to be hidden by downstream model capacity.

\begin{table*}[t]
\centering
\caption{
Solver-mediated decision-task evaluation on Warcraft shortest path and FF25 portfolio optimization. 
Warcraft reports shortest-path regret, objective value, feasibility violation, and path-match rate. 
FF25 reports common-reference portfolio regret, common-reference objective, realized Sharpe, and turnover at matched budget $m=40$ from the provided runs. 
Lower regret, feasibility violation, and turnover are better; higher objective, path match, and Sharpe are better.
}
\label{tab:warcraft_ff25_main}
\setlength{\tabcolsep}{3.2pt}
\renewcommand{\arraystretch}{1.05}
\scriptsize
\resizebox{\textwidth}{!}{
\begin{tabular}{lcccc ccccc}
\toprule
& \multicolumn{4}{c}{\textbf{Warcraft Shortest Path}}
& \multicolumn{5}{c}{\textbf{FF25 Portfolio}} \\
\cmidrule(lr){2-5} \cmidrule(lr){6-10}
\textbf{Method}
& Regret $\downarrow$
& Obj. $\uparrow$
& Feas. $\downarrow$
& Path $\uparrow$
& $m$
& Regret $\downarrow$
& Obj. $\uparrow$
& Sharpe $\uparrow$
& Turnover $\downarrow$ \\
\midrule
\multicolumn{10}{l}{\textbf{Nonlinear predictor (MLP)}} \\
\midrule
Random
& 12.7335 & -50.6898 & 3.15e-05 & 0.8278
& 40 & 0.0040 & -0.0040 & 0.4539 & 0.1042 \\

Mutual Information
& 12.0701 & -49.0297 & \textbf{2.44e-05} & 0.8348
& 40 & 0.0041 & -0.0029 & 0.4743 & 0.1030  \\

LASSO
& 12.7771 & -50.7333 & 3.41e-05 & 0.8272
& 40 & 0.0033 & -0.0018 & 0.4537 & 0.1001 \\

FD Ranking
& 12.3340 & -50.1533 & 3.09e-05 & 0.8298
& 40 & 0.0033 & -0.0037 & 0.4515 & 0.0899 \\

RFE
& 13.3152 & -51.2714 & 2.78e-05 & 0.8283
& 40 & 0.0034 & -0.0036 & \textbf{0.4925} & 0.0909 \\

PFMM
& 13.4992 & -50.8269 & 2.97e-05 & 0.8365
& 40 & 0.0037 & -0.0033 & 0.4663 & \textbf{0.0868} \\

KMHBO
& 14.9038 & -52.2316 & 3.06e-05 & 0.8358
& 40 & 0.0035 & -0.0035 & 0.4784 & 0.1052 \\

\textbf{DePICT}
& \textbf{12.0559} & \textbf{-49.0029} & 3.14e-05 & \textbf{0.8368}
& 40 & \textbf{0.0033} & \textbf{0.0047} & 0.4776 & 0.0970 \\

\midrule
\multicolumn{10}{l}{\textbf{Linear}} \\
\midrule
Random
& 8.8871 & -45.9737 & 3.54e-05 & 0.8155
& 40 & 0.0098 & -0.0160 & 0.4687 & 0.0700 \\

Mutual Information
& 9.0221 & -46.1086 & 3.54e-05 & 0.8160
& 40 & 0.0121 & -0.0132 & 0.4901 & 0.0747\\

LASSO
& 27.9684 & -65.0550 & 3.46e-05 & 0.8090
& 40 & 0.0104 & -0.0149 & 0.4673 & 0.0725 \\

FD Ranking
& 11.4664 & -48.5529 & 3.73e-05 & 0.8167
& 40 & 0.0099 & -0.0154 & 0.4771 & 0.0654 \\

RFE
& 13.1856 & -51.7321 & 3.41e-05 & 0.8159
& 40 & 0.0128 & -0.0126 & 0.5059 & 0.0632 \\

PFMM
& 13.9751 & -50.6988 & \textbf{3.58e-06} & 0.8197
& 40 & 0.0113 & -0.0141 & 0.4636 & 0.0657 \\

KMHBO
& 11.4721 & -49.2138 & 3.56e-06 & 0.8029
& 40 & 0.0128 & \textbf{-0.0125} & \textbf{0.5138} & 0.0706 \\

\textbf{DePICT}
& \textbf{8.2974} & \textbf{-45.6251} & 3.91e-05 & \textbf{0.8286}
& 40 & \textbf{0.0095} & -0.0158 & 0.4892 & \textbf{0.0619} \\
\bottomrule
\end{tabular}
}
\end{table*}

\paragraph{Supplementary energy-scheduling results.}
Table~\ref{tab:energy_scheduling_main} reports an additional solver-mediated scheduling benchmark. This task is discrete and outside the fixed-active-set KKT theory, so it is used only as supplementary evidence for interface transfer beyond the smooth regime.
\begin{table*}[t]
\centering
\caption{
Energy-cost-aware scheduling edge-regime evaluation. We report all comparison
methods at budget $m=5$ across task sizes and predictor families. Lower cost
and regret are better. All methods have zero feasibility violation and zero
solver failure rate. The benchmark is solver-mediated and discrete, so it is
used as an edge-regime stress test rather than a direct validation of the
fixed-active-set KKT theory.
}
\label{tab:energy_scheduling_main}
\setlength{\tabcolsep}{3.5pt}
\renewcommand{\arraystretch}{1.05}
\scriptsize
\resizebox{\textwidth}{!}{
\begin{tabular}{lcc cc cc}
\toprule
& \multicolumn{2}{c}{\textbf{10 Tasks}}
& \multicolumn{2}{c}{\textbf{15 Tasks}}
& \multicolumn{2}{c}{\textbf{20 Tasks}} \\
\cmidrule(lr){2-3} \cmidrule(lr){4-5} \cmidrule(lr){6-7}
\textbf{Method}
& Cost $\downarrow$ & Regret $\downarrow$
& Cost $\downarrow$ & Regret $\downarrow$
& Cost $\downarrow$ & Regret $\downarrow$ \\
\midrule
\multicolumn{7}{l}{\textbf{Linear}} \\
\midrule
Correlation Screening
& 187.794 & 34.294
& 324.866 & 42.349
& 395.948 & 46.654 \\

Elastic Net
& 188.453 & 34.954
& 324.417 & 41.900
& 407.354 & 58.060 \\

FD Ranking
& 188.453 & 34.954
& 324.417 & 41.900
& 407.354 & 58.060 \\

Full
& 203.783 & 50.284
& 359.248 & 76.731
& 452.739 & 103.445 \\

KMHBO 
& 177.597 & 24.098
& 311.039 & 28.522
& 397.140 & 47.846 \\

LASSO
& 176.193 & 22.693
& 311.848 & 29.332
& 387.582 & 38.288 \\

Random
& 181.291  & 27.791 
& 317.592  & 35.075 
& 394.576  & 45.282  \\

RFE
& 177.574 & 24.075
& 310.625 & 28.108
& 383.914 & 34.620 \\

PFMM
& 179.727 & 26.228
& 312.993 & 30.476
& 387.216 & 37.922 \\

DePICT
& 180.114 & 26.615
& \textbf{310.210} & \textbf{27.693}
& \textbf{365.204} & \textbf{15.910} \\

\midrule
\multicolumn{7}{l}{\textbf{MLP}} \\
\midrule
Correlation Screening
& 175.802  & 22.303 
& 309.802  & 27.286 
& 380.270  & 30.976 \\

Elastic Net
& 175.161  & 21.661 
& 308.597  & 26.080 
& 378.403  & 29.109  \\

FD Ranking
& 174.128  & 20.629 
& 307.505  & 24.988 
& 377.615  & 28.320  \\

Full
& 182.703  & 29.204 
& 321.755  & 39.239 
& 396.682  & 47.388  \\

Random
& 175.831  & 22.332 
& 309.725  & 27.209 
& 379.776  & 30.482  \\

KMHBO 
& 175.484  & 21.985 
& 309.480  & 26.963 
& 380.691  & 31.396  \\

LASSO
& 171.810  & 18.311 
& 305.230  & 22.713 
& 376.220  & 26.925  \\

RFE
& 175.394  & 21.894 
& 308.820  & 26.303 
& 379.258  & 29.964  \\

PFMM
& 176.293  & 22.793 
& 310.350  & 27.833 
& 381.105  & 31.811  \\

DePICT
& 174.418  & \textbf{15.918} 
& \textbf{303.439}  & \textbf{20.922} 
& 378.030  & 28.236  \\
\bottomrule
\end{tabular}
}
\end{table*}
\subsubsection{Implementation Details}
\label{app:implementation}

All experiments use deterministic seeds. Within each seed, data splits, synthetic data generation, predictor initialization, and randomized selectors are matched across methods whenever applicable. Interface selection is performed using training data or representative operating-regime contexts only. Hyperparameters, feature budgets, and model-selection choices are selected using validation data. Test metrics are computed once after final selection. For methods with tunable subset size, we evaluate shared target budgets; for methods with native realized subset sizes, we report the realized interface dimension.

\subsubsection{Computational cost and scalability}
\label{app:scalability}

DePICT requires second-order sensitivity information-Lagrangian Hessians, active constraint Jacobians, and a linearized KKT solve computed entirely from the task objective and constraints. This is a one-time cost incurred at interface-design time, before data collection or predictor training.

\paragraph{Comparability with baselines.}
A natural concern is whether this cost is prohibitive relative to baseline methods. The methods operate at different stages of the pipeline. Statistical selectors such as mutual information and LASSO construct interfaces from labeled training data, while wrapper methods such as RFE and KMHBO additionally depend on trained predictive models or repeated evaluations. DePICT instead constructs the interface from optimizer-side sensitivity before the downstream predictor is trained and requires no labeled context--decision pairs. Consequently, wall-clock interface-selection costs should be interpreted together with the different information and training requirements of each method.

\paragraph{Exact implementation cost.}
DePICT computes solution sensitivities by solving the linearized active-set KKT system at representative contexts. Let
\[
d_K = n_\theta + |\mathcal A|.
\]
A dense implementation requires factorizing a $d_K \times d_K$ saddle matrix per context and then solving multiple right-hand sides for the desired context directions. This is appropriate for the controlled and medium-scale benchmarks considered here, but it is not the only implementation route.

\paragraph{Scalable variants.}
Several scalable alternatives follow directly from the saddle-system formulation, preserving the same relevance criterion while trading exactness for scale.

\begin{itemize}
    \item \textbf{Matrix-free solves.}
    One can avoid explicit inverses and solve KKT systems with iterative methods using only Hessian-vector and Jacobian-vector products, avoiding explicit matrix formation.

    \item \textbf{Randomized adjoint probing.}
    If only rankings are needed, one need not form the full Jacobian $J(c)$. Randomized adjoint probing can estimate column-norm scores from products involving $J(c)^\top$, requiring far fewer solves than the full interface dimension.

    \item \textbf{Symbolic pre-screening.}
    Variables that do not enter $C_{\mathcal A}$ or $G_{\mathcal A}$ have zero first-order effect by as shown in Section \ref{sec:separation}. They can therefore be screened out before numerical computation, restricting sensitivity estimation to the active KKT block.

    \item \textbf{Factorization reuse.}
    The saddle-system factorization or preconditioner can be shared across multiple right-hand sides and across nearby representative contexts within the same operating regime, amortizing the dominant factorization cost.

    \item \textbf{Surrogate amortization.}
    Sensitivities can be amortized further by computing exact KKT sensitivities on a small set of representative contexts and training a lightweight surrogate score model, enabling fast interface scoring at new contexts without repeated KKT solves.
\end{itemize}

Together, these variants make DePICT's sensitivity computation tractable beyond the benchmark scale considered here. We leave their systematic empirical evaluation to future work.

\subsubsection{Layer-2 annihilation diagnostic}
\label{app:layer2_diagnostic}

The Layer-2 diagnostic is designed to isolate the distinction between symbolic support and optimizer relevance. The benchmark partitions primitive context coordinates into three groups: irreducible variables, Layer-1 nuisance variables, and Layer-2 nuisance variables. Irreducible variables move the oracle optimizer to first order and therefore belong to the decision-preserving interface. Layer-1 variables may appear in the full problem description but are absent from the active KKT sensitivity blocks on the operating regime. Layer-2 variables are more subtle: they enter the active optimization structure but are annihilated by the local KKT geometry.

\paragraph{Controlled-knapsack construction.}
For controlled knapsack, we construct Layer-2 variables using the
constraint-scaling structure of Proposition~\ref{prop:structural-main}\textup{(I)}.
Each Layer-2 coordinate enters an active constraint through a positive scaling
factor, making it symbolically present in the active constrained problem and
therefore visible to methods based on statistical association, predictive utility,
perturbation scores, or symbolic occurrence. Despite this participation, its
stationarity perturbation lies entirely in the active constraint-normal space,
\(\range(A_{\mathcal A}^\top)\), and can therefore be absorbed by a corresponding
change in the Lagrange multipliers without inducing first-order primal motion:
\[
C_{\mathcal A,\cdot j}=0,
\qquad
G_{\mathcal A,\cdot j}\in\range(A_{\mathcal A}^\top),
\qquad
J(c)e_j=0 .
\]
Thus, the planted Layer-2 coordinates are symbolically active but not
decision-preserving.

\paragraph{FF25 construction.}
For FF25, we start from the 262-dimensional backward-looking portfolio
context and append 20 planted Layer-2 and 20 planted Layer-1 nuisance
coordinates, yielding an augmented context of dimension $d=302$.
The Layer-2 coordinates are constructed through the portfolio objective so
that their induced perturbation is absorbed by the budget-constraint
geometry and does not move the optimal portfolio locally, while the
Layer-1 coordinates do not enter the active optimizer sensitivity.
We evaluate the diagnostic over 40 portfolio dates and report results over
10 independent seeds.

\begin{table*}[t]
\centering
\caption{
\textbf{Full Layer-2 diagnostic on controlled knapsack and FF25 under the linear predictor.}
For knapsack, higher irreducible recall and support F1 are better, while lower
Layer-2/Layer-1 false-positive rate and regret are better.
For FF25, lower Layer-2/Layer-1 false-positive rate, regret, and turnover are
better, while higher Sharpe is better.
}
\label{tab:knapsack_layer2}
\setlength{\tabcolsep}{5.0pt}
\renewcommand{\arraystretch}{1.12}
\scriptsize

% ============================================================
% PANEL A — KNAPSACK
% ============================================================

\textbf{(a) Controlled knapsack}

\vspace{2pt}

\resizebox{\textwidth}{!}{
\begin{tabular}{lcccccc}
\toprule
Method
& $m$
& Irr.\ Recall $\uparrow$
& L2 FPR $\downarrow$
& L1 FPR $\downarrow$
& F1 $\uparrow$
& Regret $\downarrow$ \\
\midrule

Full
& 200 & 1.000 & 1.000 & 1.000 & 0.182 & 1.788 \\

Random
& 20 & 0.100 & 0.100 & 0.100 & 0.100 & 1.954 \\

Mutual Information \cite{peng2005mrmr}
& 20 & 0.300 & 0.700 & 0.000 & 0.300 & 0.478 \\

LASSO \cite{tibshirani1996lasso}
& 20 & 0.100 & 0.000 & 0.113 & 0.100 & 2.667 \\

FD Ranking \cite{lin2023removal}
& 20 & 0.400 & 0.500 & 0.013 & 0.400 & 0.565 \\

PFMM \cite{sharma2023pfmm}
& 20 & 0.400 & 0.500 & 0.013 & 0.400 & 0.475 \\

KMHBO \cite{ye2026kmhbo}
& 20 & 0.350 & 0.650 & 0.000 & 0.350 & 0.535 \\

RFE \cite{kohavi1997rfe}
& 20 & 0.000 & 0.000 & 0.125 & 0.000 & 2.790 \\

\textbf{DePICT}
& 20
& \textbf{1.000}
& \textbf{0.000}
& \textbf{0.000}
& \textbf{1.000}
& \textbf{0.009} \\

\bottomrule
\end{tabular}
}

\vspace{8pt}

% ============================================================
% PANEL B — FF25
% ============================================================

\textbf{(b) FF25 portfolio allocation}

\vspace{2pt}

\resizebox{\textwidth}{!}{
\begin{tabular}{lcccccc}
\toprule
Method
& $m$
& L2 FPR $\downarrow$
& L1 FPR $\downarrow$
& Regret $\downarrow$
& Sharpe $\uparrow$
& Turnover $\downarrow$ \\
\midrule

Full
& 30 & 1.000 & 1.000 & 0.0127 & 0.609 & 0.1378 \\

Random
& 20 & 0.047 & 0.047 & 0.0213 & 0.620 & 0.0737 \\

Mutual Information \cite{peng2005mrmr}
& 20 & 1.000 & 0.000 & 0.0235 & \textbf{0.648} & 0.0778 \\

LASSO \cite{tibshirani1996lasso}
& 20 & 0.967 & 0.000 & 0.0235 & 0.645 & 0.0779 \\

PFMM$^\ast$ \cite{sharma2023pfmm}
& 20 & 1.000 & 0.000 & 0.0235 & \textbf{0.648} & 0.0778 \\

KMHBO$^\ast$ \cite{ye2026kmhbo}
& 20 & 0.767 & 0.000 & 0.0233 & 0.638 & 0.0787 \\

RFE \cite{kohavi1997rfe}
& 20 & 0.100 & 0.000 & 0.0225 & 0.629 & 0.0834 \\

\textbf{DePICT}
& 20
& \textbf{0.000}
& \textbf{0.000}
& \textbf{0.0054}
& 0.550
& \textbf{0.0468} \\

\bottomrule
\end{tabular}
}

\end{table*}

This construction tests the paper's central claim more directly than downstream
regret alone. A selector may perform well by retaining statistically associated
or predictive variables, but the Layer-2 diagnostic asks a more specific
question: can it distinguish variables that enter the active optimization
problem from those that actually move the optimizer? We evaluate this
distinction first on controlled knapsack, where the irreducible interface is
known exactly, and then on FF25 portfolio allocation, where we additionally
measure its downstream effect on regret, Sharpe ratio, and turnover.

Table~\ref{tab:knapsack_layer2}(a) shows the controlled-knapsack separation.
DePICT recovers every irreducible coordinate while assigning zero false
inclusion to both Layer-1 and Layer-2 nuisance variables, yielding support
F1 \(=1.000\). Competing selectors often retain variables that are active
symbolically but irrelevant to the optimizer: Layer-2 FPR is \(0.700\) for
mutual information, \(0.500\) for FD ranking and PFMM, and \(0.650\) for
KMHBO. This distinction is reflected downstream. Under the linear predictor,
DePICT achieves regret \(0.009\), compared with \(0.475\) for PFMM, the
strongest non-DePICT baseline; under the MLP, DePICT remains lowest at
\(0.058\).

Table~\ref{tab:knapsack_layer2}(b) shows the same diagnostic on FF25.
The separation is again concentrated in the Layer-2 variables: DePICT has
zero Layer-2 and Layer-1 false inclusion, whereas Layer-2 FPR reaches
\(1.000\) for mutual information and PFMM, \(0.967\) for LASSO,
\(0.767\) for KMHBO, and \(0.100\) for RFE. DePICT also obtains the lowest
regret, \(0.0054\), compared with \(0.0127\) for the full interface and
\(0.0213\) for the best reduced-interface baseline, while reducing turnover
to \(0.0468\). The portfolio metrics also expose a trade-off: DePICT does not
maximize Sharpe, attaining \(0.550\) compared with \(0.648\) for mutual
information and PFMM. Thus, the FF25 result does not indicate uniform
improvement across every downstream metric; rather, it shows that excluding
Layer-2 variables can produce a substantially smaller-regret, lower-turnover
portfolio interface while preserving a clearly measurable portfolio solution.

Together, the two diagnostics support the same structural distinction in
different settings. The controlled benchmark verifies exact recovery of the
known optimizer-relevant interface, while FF25 shows that the same
Layer-2 filtering behavior remains consequential in a solver-mediated
portfolio task.

Figure~\ref{fig:controlled_seed_variance_app} further shows that the
Layer-2 and controlled-knapsack conclusions persist across 10 independent
seeds.

\begin{figure*}[t]
\centering
\includegraphics[width=0.90\textwidth]{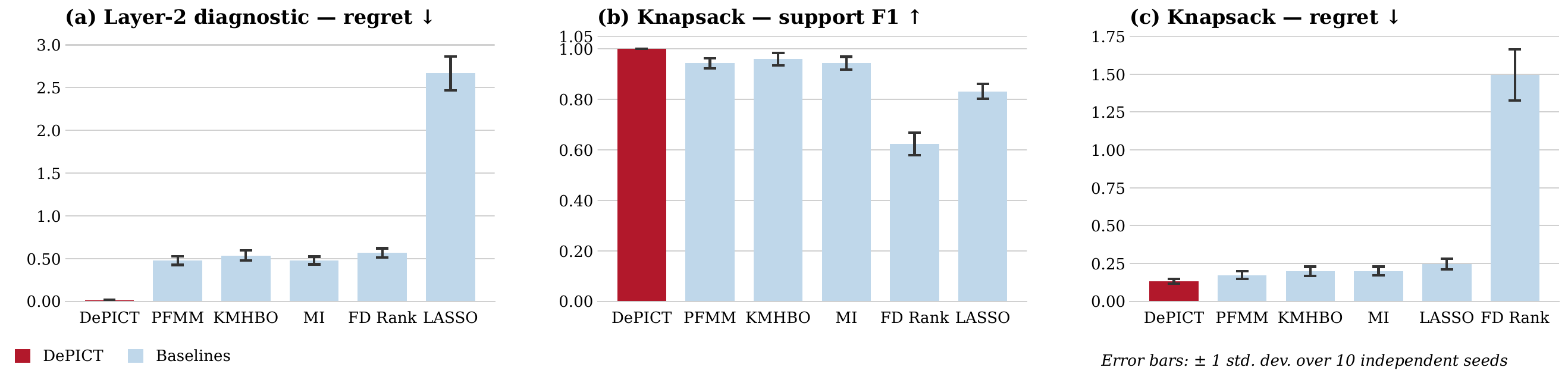}
\vspace{-2mm}
\caption{
\textbf{Seed variability for the controlled diagnostic.}
Bars report means and error bars report \(\pm 1\) standard deviation over
10 independent seeds. DePICT maintains near-zero Layer-2 diagnostic
regret, exact or near-exact knapsack support recovery, and the lowest
knapsack regret under the linear predictor. Lower regret is better; higher
support F1 is better.
}
\label{fig:controlled_seed_variance_app}
\vspace{-3mm}
\end{figure*}

\subsubsection{Stability perturbation protocol}
\label{app:stability_protocol}

The stability experiment tests whether DePICT rankings are robust to small changes in the operating context. We fix a nominal context    
\(c_0\) representative of the controlled knapsack operating regime and sample perturbed contexts
\[
c_\epsilon = c_0+\delta,
\qquad
\delta\sim \mathrm{Unif}\{\delta:\|\delta\|_\infty\le \epsilon\}.
\]
For each \(c_\epsilon\), we recompute the Jacobian scores and select the top-\(m\) interface. We compare this perturbed interface with the nominal top-\(m\) interface using exact match, Jaccard overlap, and regret inflation.

Exact match is a strict set-identity metric:
\[
\mathrm{Match}
=
\mathbf{1}\{I_\epsilon=I_0\}.
\]
It can decrease even when only one coordinate changes. Jaccard overlap is less brittle:
\[
\mathrm{Jac}(I_\epsilon,I_0)
=
\frac{|I_\epsilon\cap I_0|}{|I_\epsilon\cup I_0|}.
\]
It measures whether the perturbed interface retains most of the same directions as the nominal interface. Regret inflation measures the decision impact of the perturbation:
\[
\Delta\mathrm{Regret}
=
\mathrm{Regret}(I_\epsilon)-\mathrm{Regret}(I_0).
\]
Negative values mean the perturbed interface performs no worse than the nominal interface on the downstream evaluation.

This experiment connects directly to Proposition~\ref{prop:stability-main}. The proposition guarantees exact top-\(m\) stability only inside a radius controlled by the local spectral gap. Once perturbations approach directions where Jacobian column norms are close, exact set identity can fail. The empirical question is whether these failures materially affect downstream decisions. The results show that they usually do not: even when exact match decreases, Jaccard overlap remains high and regret inflation stays near zero. This indicates that DePICT's ranking is decision-stable under local context perturbations, even when exact top-\(m\) identity is too strict a requirement.

\begin{figure*}[t]
\centering
\setlength{\tabcolsep}{2.0pt}
\renewcommand{\arraystretch}{1.00}
\scriptsize

\begin{minipage}[t]{0.62\textwidth}
\centering
\vspace{0pt}
\textbf{(a) Stability metrics under perturbations}

\vspace{2pt}
\resizebox{\linewidth}{!}{
\begin{tabular}{lccc ccc ccc}
\toprule
& \multicolumn{3}{c}{$m=20$}
& \multicolumn{3}{c}{$m=40$}
& \multicolumn{3}{c}{$m=60$} \\
\cmidrule(lr){2-4} \cmidrule(lr){5-7} \cmidrule(lr){8-10}
$\epsilon$
& Match $\uparrow$ & Jac. $\uparrow$ & Reg. $\downarrow$
& Match $\uparrow$ & Jac. $\uparrow$ & Reg. $\downarrow$
& Match $\uparrow$ & Jac. $\uparrow$ & Reg. $\downarrow$ \\
\midrule
$10^{-4}$          
& .739 & .934 & .018  
& .653 & .963 & .002  
& .797 & .989 & .001 \\

$10^{-3}$          
& .665 & .913 & .027  
& .578 & .955 & $-.003$ 
& .750 & .987 & .000 \\

$10^{-2}$          
& .577 & .877 & .006  
& .472 & .946 & .002  
& .706 & .985 & $-.001$ \\

$5{\times}10^{-2}$ 
& .456 & .828 & .006  
& .251 & .925 & .001  
& .561 & .979 & $-.002$ \\
\bottomrule
\end{tabular}
}

\vspace{2pt}
{\footnotesize Match/Jac. higher is better; Reg. lower is better.}
\end{minipage}
\hfill
\begin{minipage}[t]{0.30\textwidth}
\centering
\vspace{0pt}
\textbf{(b) Stability profile}

\vspace{2pt}
\includegraphics[
  width=\linewidth,
  height=3.25cm,
  keepaspectratio
]{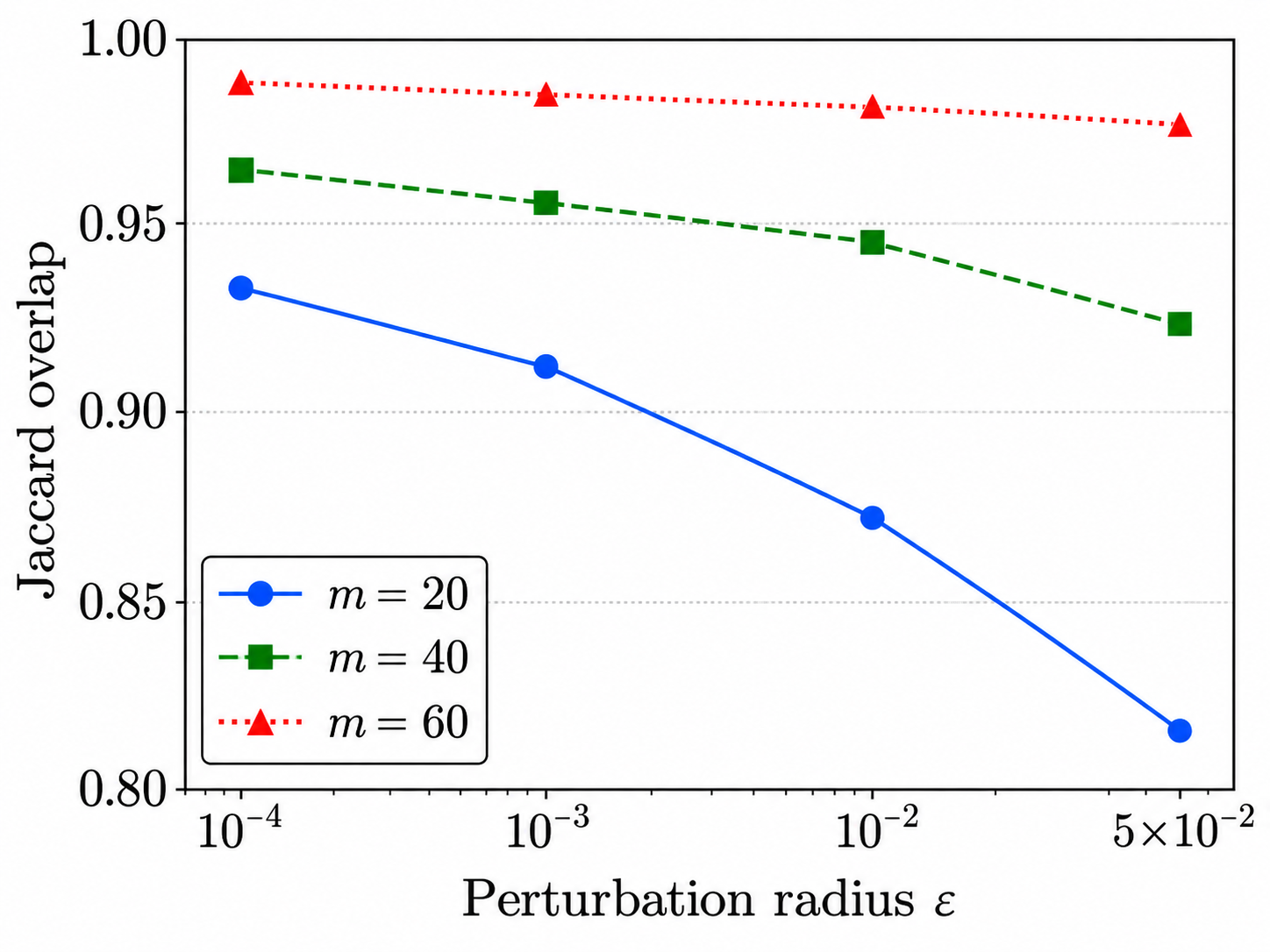}
\end{minipage}

\vspace{-2pt}
\caption{
Jacobian-ranking stability under context perturbations on controlled knapsack. Exact top-\(m\) identity decreases with \(\epsilon\), but Jaccard overlap remains high and regret inflation stays near zero, supporting the spectral-gap view that high-overlap interfaces can remain decision-stable even when exact rankings change.
}
\label{app:knapsack_stability_ablation}
\vspace{-.4cm}
\end{figure*}

\subsection{Discussion: Interface Selection, Adaptation, and Unlearning}
\label{sec:discussion_unlearning}

DePICT identifies which primitive context directions are structurally necessary for preserving a downstream constrained decision, but the same perspective also suggests a useful direction for model editing and unlearning. Recent work on personalized adaptation, multimodal unlearning, and layer-local editing shows that model behavior can often be changed by targeting the components most responsible for a task-specific response \cite{ranjan2026persareinforcementlearningprofessorstyle,ranjan2026razor}. In this view, DePICT can act as a decision-level diagnostic before or after unlearning: if removing or editing a representation changes only nuisance directions while preserving the DePICT interface, the downstream decision should remain stable; if an irreducible direction is removed, the task should exhibit structural degradation. This also connects to privacy and membership auditing, where gradient- or feature-drift signals can indicate whether a model still retains evidence of trained examples \cite{ranjan2026g, ranjan2026listening}. More broadly, combining decision-preserving interfaces with retrieval- or structure-aware mitigation methods may help separate task-relevant information from biased, unsafe, or unnecessary context \cite{ranjan2026catrag,ranjan2026position}. For embodied and edge-deployed systems, where context bandwidth, safety, and adaptation constraints are central \cite{grover2026embodied}, future work can test whether targeted unlearning on downstream tasks removes undesirable behavior while DePICT verifies whether the model still preserves the necessary decision-relevant interface, thereby helping determine whether the system has truly forgotten or merely hidden the trained behavior.

%\input{Sections/Tables}
%\input{Sections/Appendix}

% ICLR 2027 requires an AI use statement before submission. It is intentionally
% not added here because this conversion preserves the manuscript's content.

\end{document}